\newtheorem{lemma}{Lemma}
\newtheorem{theorem}{Theorem}
\theoremstyle{definition}
\newtheorem{definition}{Definition}
\theoremstyle{definition}
\newtheorem{remark}{Remark}
\theoremstyle{definition}
\newtheorem{corollary}{Corollary}
\theoremstyle{definition}
\newcommand{\bbR}{\mathbb{R}}
\newcommand{\calC}{\mathcal{C}}
\newcommand{\calD}{\mathcal{D}}
\newcommand{\calM}{\mathcal{M}}
\newcommand{\calR}{\mathcal{R}}
\newcommand{\calS}{\mathcal{S}}
\newcommand{\calX}{\mathcal{X}}
\newcommand{\calY}{\mathcal{Y}}
\newcommand{\eps}{\epsilon}
\date{}
\title{Multi-Message Shuffled Privacy in Federated Learning}
\author{Antonious M. Girgis and Suhas Diggavi \\ University of California, Los Angeles, USA.\\
Email: amgirgis@g.ucla.edu, suhas@ee.ucla.edu.
}
\begin{document}
\maketitle

\begin{abstract}
We study differentially private distributed optimization under
communication constraints. A server using SGD for optimization,
aggregates the client-side local gradients for model updates using
distributed mean estimation (DME). We develop a communication
efficient private DME, using the recently developed multi-message
shuffled (MMS) privacy framework. We analyze our proposed DME scheme
to show that it achieves the order-optimal
privacy-communication-performance tradeoff resolving an open question
in \cite{pmlr-v162-chen22c}, whether the shuffled models can
improve the tradeoff obtained in Secure Aggregation. This also resolves an
open question on optimal trade-off for private vector sum in the MMS
model. We achieve it through a novel privacy mechanism that
non-uniformly allocates privacy at different resolutions of the local
gradient vectors. These results are directly applied to give
guarantees on private distributed learning algorithms using this for
private gradient aggregation iteratively. We also numerically evaluate
the private DME algorithms.

\end{abstract}

\section{Introduction}~\label{sec:introduction}

In federated learning (FL) distributed nodes collaborate to build
learning models, mediated by a server\footnote{This is because no
  client has access to enough data to build rich learning models
  locally and we do not want to directly share local data.}. In
particular, they collaboratively build a learning model by solving an
empirical risk minimization (ERM) problem (see
\eqref{eq:problem-formulation} in Section
\ref{sec:preliminary}). Even though local data is not directly
shared, such a collaborative interaction \emph{does not} provide any
privacy guarantee. Therefore, the objective is to solve
\eqref{eq:problem-formulation} while enabling strong privacy
guarantees on local data from the server, but with good
learning performance, \emph{i.e.,} a suitable privacy-learning
performance operating point. Differential Privacy
(DP)~\cite{Calibrating_DP06}, is the accepted theoretical framework
for formal privacy guarantees. Though DP was proposed for central data
storage, the appropriate framework for privacy with distributed
(local) data is local differential privacy
(LDP)~\cite{kasiviswanathan2011can,duchi2013local}, where even the
mediating server is not trusted for privacy. Another
important aspect is that communication in FL occurs in bandwidth
limited (wireless) links, this communication bottleneck can be
significant in modern large-scale machine learning.  The overall goal
of this paper is to develop (both theory and algorithms) for the
\emph{fundamental} privacy-communication-performance trade-off to
solve the ERM in \eqref{eq:problem-formulation} for FL.

\vspace{0.2cm}
\noindent\textbf{Private distributed mean estimation (DME) and
  optimization:} At the core of solving the ERM in
\eqref{eq:problem-formulation} through (stochastic) gradient descent
(SGD) is to aggregate the local gradients, which is equivalent to
finding the (distributed) mean of the users' gradients. Therefore, the
central problem is to study the privacy-communication-performance
trade-off for DME.  Since there are repeated interactions via
iterations of SGD, each exchange leaks information about the local
data, but we need as many steps as possible to obtain a good model;
setting up the tension between privacy and performance. The objective
is to obtain as many such interactions as possible for a given privacy
budget. This is quantified through analyzing the privacy of the
composition of privacy mechanisms as a function of the number of
iterations, and such tight analyses have been developed for
composition in \cite{abadi2016deep,mironov2017renyi}. We use
compositional bounds from
\cite{girgis2021renyi-CCS,feldman2022stronger} in conjunction with our
new private DME mechanisms to obtain the
privacy-communication-performance trade-off for solving
\eqref{eq:problem-formulation} (see Theorem \ref{thm:app_Opt}).

\vspace{0.2cm}
\noindent\textbf{Privacy frameworks:} A strong privacy guarantee
includes an untrustworthy server, and to guarantee this, in LDP each
client randomizes its interactions with the server from whom the data
is to be kept private (\emph{e.g.,} see implementations
~\cite{erlingsson2014rappor,microsoft}). The fundamental
privacy-communication-performance trade-offs of LDP mechanisms for
private DME have been recently studied
\cite{chen2020breaking,girgis2021shuffled-aistats}. We study a new
approach to the privacy-communication-performance trade-off (see
Theorems~\ref{thm:l_inf_vector_ldp}, \ref{thm:l_2_vector_ldp} which
are also order optimal, and we adapt it for other privacy frameworks
below.

LDP mechanisms suffer from poor performance in comparison with the
central DP
mechanisms~\cite{kasiviswanathan2011can,kairouz2016discrete}. In
order to overcome this, two privacy frameworks have been advocated,
which enable significantly better privacy-performance trade-offs by
amplifying privacy: {\sf (i)} \emph{Secure Aggregation (SecAgg)}: This
is a secure sum protocol \cite{Bell20} which only allows the server
to see the sum of vectors, and not individual ones. {\sf (ii)}
\emph{Shuffled model}: Each user sends her private message to a secure
shuffler that randomly permutes all the received messages before
forwarding them to the server
\cite{erlingsson2019amplification,cheu2019distributed}. The extension
to this is the \emph{multi-message shuffled (MMS)} model, where there
are multiple parallel shuffled models as above. In
\cite{Balle-MMS-CCS20,Ghazi-Eurocrypt21} it has been shown that one
can get significantly better trade-offs with such multi-message
shuffled (MMS) models. In this paper we focus on such multi-message shuffled (MMS)
privacy models.

\vspace{0.2cm}
\noindent\textbf{Contributions:} Motivated by these discussions, we make the following contributions.

\begin{itemize}

\item In \cite{pmlr-v162-chen22c}, a (order-wise) fundamental
trade-off for privacy-communication-performance was established for
DME for the SecAgg privacy framework, and an open question was posed on this
trade-off for the shuffled models. In this paper we resolve this
question through a fundamental privacy-communication-performance
trade-off for DME in the (multi-message) shuffled (MMS) models, for
\emph{all} regimes; we believe ours is the first scheme to achieve the
complete optimal trade-off (see Theorems~\ref{thm:l_inf_vector_shuffle}, \ref{thm:l_2_vector_shuffle}) which matches lower bound (see Theorem~\ref{thm:L_2_lower_bound_central}). Furthermore, we show that our MMS requires less amount of communication per client than used in the SecAgg to achieve the same order of MSE (See Remark~\ref{rem:comp_secagg}).

\item In
  \cite{Balle-MMS-CCS20,Ghazi-Eurocrypt21},
  it was shown that for computing \emph{scalar sum} in \emph{multi-message
  shuffled} (MMS) models can fundamentally achieve trade-off points that
  single-message shuffled models cannot. The optimal trade-off for
  computing \emph{vector sum} is an open question, and the only known
  result \cite{Cheu-ICLR22} has communication \emph{per-user}
  growing as $\mathcal{O}(d\sqrt{n})$, where $n$ is number of users and $d$ is the vector dimension. In this paper we establish the fundamental
  privacy-communication-performance trade-off for computing
  \emph{vector sum} in the multi-message shuffled model (see Theorems~\ref{thm:l_inf_vector_shuffle}, \ref{thm:l_2_vector_shuffle}) for all trade-off regimes, which order-wise is better than the results in \cite{Cheu-ICLR22}. In doing so, we also resolve this
  trade-off for all regimes in the scalar case (see Remark~\ref{rem:scalar_MMS}).

\item  Our scheme when applied to LDP, also achieves the optimal
  trade-off for this privacy framework (see
  Theorems~\ref{thm:l_inf_vector_ldp}, \ref{thm:l_2_vector_ldp}),
  similar to \cite{chen2020breaking,girgis2021shuffled-aistats} and (order-wise) better performance than \cite{DDG-ICML21} when applied to LDP. (see
  Remark~\ref{rem:secAgg}). Since the idea of
  \cite{DDG-ICML21} was used as a primitive in \cite{pmlr-v162-chen22c}, we
  can plug in our method to potentially improve the trade-off in their
  scheme.

\item  We use the results for optimal private DME to analyze
  privacy-convergence trade-offs of the DP-SGD algorithm (similar to
  algorithms in \cite{girgis2021shuffled-aistats,Cheu-ICLR22} in
  Theorem \ref{thm:app_Opt}.

  \item  In Section \ref{sec:numerics}, we evaluate the performance of our proposed algorithms for scalar and vector private DME.

\end{itemize}

  The core technical idea that enables these results is the
  following. Suppose each client $i$ holds a real vector
  $\mathbf{x}_i$, and we want to privately compute the sum
  $\sum_i\mathbf{x}_{i}$. First we devise a co-ordinate sampling
  mechanism related to the target communication desired, independently
  for each client; then we compute the private scalar sum
  $\sum_{i\in\mathcal{A}_k}\mathbf{x}_i[k]$, where $\mathbf{x}_i[k]$
  is the $k$-th co-ordinate, and $\mathcal{A}_k$ is the set of clients
  that sampled the $k$-th co-ordinate. We can express
  $\mathbf{x}_i[k]=0.\mathbf{b}_i^{(1)}\mathbf{b}_i^{(2)}\ldots,\mathbf{b}_i^{(m)}\ldots$ in binary form\footnote{We have written this for
    $\mathbf{x}_i[k]\in[0,1]$ but can be easily extended to any
    bounded values, \emph{i.e.,} $\|\mathbf{x}_i\|_{\infty}\leq
    r_{\infty}$.}, where $\mathbf{b}_i^{(j)}\in\{0,1\}$. For privacy, we
  randomize each bit through a binary randomized response \cite{warner1965-RR},
  but we randomize each bit with a different privacy budget, so that we meet an
  overall privacy budget. This careful choice of such non-uniform
  randomization is key to our method. Moreover, for communication
  constraints we represent it with finite $m$ bits (see more details
  in Section~\ref{sec:MainRes}). We can either use this overall
  randomization as is, for LDP, or send each bit through a separate
  shuffler for multi-message shuffling (MMS). Then by carefully
  accounting for the composition using RDP, we obtain our privacy
  guarantees and performance (see Lemmas~\ref{lem:Priv-alloc},
  \ref{lem:MSE-perf}). This simple mechanism yields explicit bounds
  for the complete trade-off and forms the core of our solution.

 \subsection{Related Work}
We give the most relevant work related to the paper and review some of their connections to our work.

\paragraph{Private DME:} In \cite{chen2020breaking,girgis2021shuffled-aistats} the privacy-communication-performance tradeoff were studied both through schemes as well as lower bounds for the local DP model. \cite{chen2020breaking} established the order optimal private DME under local DP model for bounded $\ell_2$-norm vectors. \cite{girgis2021shuffled-aistats} established order optimal private DME for local DP for bounded $\ell_{\infty}$-norm and separately for bounded $\ell_2$-norm vectors. It also extended its use in the single-shuffled model and private optimization framework (see below). In~\cite{guo2022interpolated,chaudhuri2022privacy}, a family of communication-efficient mechanisms is proposed under LDP constraints in federated learning.

In the multi-message shuffled (MMS) model, the private \emph{scalar} DME was
studied in \cite{Balle-MMS-CCS20,Ghazi-Eurocrypt21}, where order optimal strategies were established. The private vector DME has received less
attention, with the exception of \cite{Cheu-ICLR22}. Our private
vector DME result in Theorem \ref{thm:l_2_vector_shuffle} improves the
privacy-communication-performance order-wise over it. In \cite{DDG-ICML21,pmlr-v162-chen22c}, the privacy-communication-performance trade-off in the SecAgg privacy model was studied. In particular, using ideas from compressive sensing, \cite{pmlr-v162-chen22c} established an order-optimal private DME for SecAgg.

\paragraph{Private optimization in the shuffled model:}
There has been a lot of work on private optimization in the local
model, see \cite{agarwal2018cpsgd,girgis2021shuffled-aistats} and references
therein. We will focus on private optimization in the shuffled model,
where there is relatively less work.  Recently~\cite{ESA}
and~\cite{girgis2021shuffled-aistats,girgis2021shuffled-jsait} have
proposed DP-SGD algorithms for federated learning, in the shuffled
model, where at each iteration, each client applies an LDP mechanism
on the gradients. \cite{girgis2021renyi_neurips} studied a private
optimization framework using RDP and additionally evaluated
subsampling (of clients) in the shuffled model. The approach in
~\cite{ESA} was to send full-prevision gradients without compression,
but ~\cite{girgis2021shuffled-aistats,girgis2021renyi_neurips} did use
compression for the gradients. These methods achived certain optimal
privacy-communication-performance operating points, but not in all
regimes. The use of RDP for establishing compositional bounds for
interactive optimization was studied in
\cite{girgis2021renyi-CCS,feldman2022stronger}, which is used in
establishing the privacy bounds for iterative stochastic
optimization. All these were for the single-shuffle model. For the
multi-message shuffled (MMS) model, private optimization was studied
in \cite{Cheu-ICLR22}, which at its core used a private vector DME
with MMS. As explained earlier, our private vector DME is orderwise
better than this scheme, and if we plug our scheme into the standard
convergence analyses for optimization, we obtain better results as
also given in Appendix \ref{app:OptRes}.

\vspace{0.2cm}
  \noindent\textbf{Paper organization:} We formulate the problem,
  establish notation and some preliminary results in Section
  \ref{sec:preliminary}. We present an overview of the algorithms and
  the main theoretical results in Section \ref{sec:MainRes}. The
  technical proof ideas are outlined in Section
  \ref{sec:Pf_Outlines}. Some numerical results are presented in Section
  \ref{sec:numerics}. The proof details are given
  in the appendices.

\section{Preliminaries}~\label{sec:preliminary}
We give privacy definitions in Section~\ref{sec:privacy_defns} and the binary randomized response in Section~\ref{sec:binary_rr}.

\subsection{Privacy Definitions}~\label{sec:privacy_defns}
In this section, we define different privacy notions that we will use in this paper: local differential privacy (LDP), central different privacy (DP), and Renyi differential privacy (RDP). We also give standard results on privacy composition as well as conversion between privacy notions.

\begin{definition}[Local Differential Privacy - LDP~\cite{kasiviswanathan2011can}]~\label{defn:LDPdef}
For $\epsilon_0\geq0$, a randomized mechanism $\calR:\calX\to\calY$ is said to be $\eps_0$-local differentially private (in short, $\eps_{0}$-LDP), if for every pair of inputs $d,d'\in\calX$, we have 
\begin{equation}~\label{app:ldp-def}
\Pr[\calR(d)\in \calS] \leq e^{\eps_0}\Pr[\calR(d')\in \calS], \qquad \forall \calS\subset\calY.
\end{equation}
\end{definition}
Let $\calD=\lbrace d_1,\ldots,d_n\rbrace$ denote a dataset comprising $n$ points from $\mathcal{X}$. We say that two datasets $\calD=\lbrace d_1,\ldots,d_n\rbrace$ and $\calD^{\prime}=\lbrace d_1^{\prime},\ldots,d_n^{\prime}\rbrace$ are neighboring (and denoted by $\calD\sim\calD'$) if they differ in one data point, i.e., there exists an $i\in[n]$ such that $d_i\neq d'_i$ and for every $j\in[n],j\neq i$, we have $d_j=d'_j$.
\begin{definition}[Central Differential Privacy - DP \cite{Calibrating_DP06,dwork2014algorithmic}]\label{defn:central-DP}
For $\epsilon,\delta\geq0$, a randomized mechanism $\calM:\calX^n\to\calY$ is said to be $(\epsilon,\delta)$-differentially private (in short, $(\epsilon,\delta)$-DP), if for all neighboring datasets $\calD\sim\calD^{\prime}\in\calX^{n}$ and every subset $\calS\subseteq \calY$, we have
\begin{equation}~\label{dp_def}
\Pr\left[\calM(\calD)\in\calS\right]\leq e^{\eps_0}\Pr\left[\calM(\calD^{\prime})\in\calS\right]+\delta.
\end{equation}
\end{definition}

\begin{definition}[$(\alpha,\epsilon(\alpha))$-RDP (Renyi Differential Privacy)~\cite{mironov2017renyi}]\label{defn:RDP}
A randomized mechanism $\calM:\calX^n\to\calY$ is said to have $\epsilon(\alpha)$-Renyi differential privacy of order $\alpha\in(1,\infty)$ (in short, $(\alpha,\epsilon(\alpha))$-RDP), if for any neighboring datasets $\calD\sim\calD'\in\calX^n$, we have that $D_{\alpha}(\calM(\calD)||\calM(\calD'))\leq \epsilon(\alpha)$, where $D_{\alpha}(P||Q)$ denotes the Renyi divergence between two distributions $P$ and $Q$ defined by: 
\begin{equation}
D_{\alpha}(P||Q)=\frac{1}{\alpha-1}\log\left(\mathbb{E}_{\theta\sim Q}\left[\left(\frac{P(\theta)}{Q(\theta)}\right)^{\alpha}\right]\right),
\end{equation}
\end{definition}
The RDP provides a tight privacy accounting of adaptively composed
mechanisms.  The following result states that if we adaptively compose
two RDP mechanisms with the same order, their privacy parameters add
up in the resulting mechanism.

\begin{lemma}[Adaptive composition of RDP~{\cite{mironov2017renyi}}]\label{lemm:compostion_rdp} 
For any $\alpha>1$, let $\calM_1:\calX\to \calY_1$ be a $(\alpha,\epsilon_1(\alpha))$-RDP mechanism and $\calM_2:\calY_1\times \calX\to \calY$ be a $(\alpha,\epsilon_2(\alpha))$-RDP mechanism. Then, the mechanism defined by $(\calM_1,\calM_2)$ satisfies $(\alpha,\epsilon_1(\alpha)+\epsilon_2(\alpha))$-RDP. 
\end{lemma}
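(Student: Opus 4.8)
The plan is to bound the Rényi divergence between the output laws of the composed mechanism on an arbitrary fixed pair of neighboring datasets, by exploiting the chain-rule factorization of the joint density. Fix $\calD\sim\calD'\in\calX^n$, and let $P$ (resp.\ $Q$) be the law of $(\calM_1,\calM_2)$ evaluated on $\calD$ (resp.\ $\calD'$). Write $P(y_1,y)=P_1(y_1)\,P_{2\mid 1}(y\mid y_1)$, where $P_1$ is the law of $\calM_1(\calD)$ and $P_{2\mid 1}(\cdot\mid y_1)$ is the law of $\calM_2(y_1,\calD)$, and factor $Q$ analogously with $\calD'$ in both places. Using the equivalent form of the definition, $e^{(\alpha-1)D_\alpha(P\|Q)}=\sum_{y_1,y}P(y_1,y)^\alpha Q(y_1,y)^{1-\alpha}$ (an integral in the continuous case), I would substitute the factorization and separate the sum over $y$ from the sum over $y_1$.

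The key step is to recognize the inner sum $\sum_y P_{2\mid 1}(y\mid y_1)^\alpha\,Q_{2\mid 1}(y\mid y_1)^{1-\alpha}$ as $e^{(\alpha-1)D_\alpha\!\left(P_{2\mid 1}(\cdot\mid y_1)\,\middle\|\,Q_{2\mid 1}(\cdot\mid y_1)\right)}$, i.e.\ the exponentiated Rényi divergence between the outputs of $\calM_2$ run on the two inputs $(y_1,\calD)$ and $(y_1,\calD')$. These inputs agree on the auxiliary coordinate $y_1$ and differ only in the dataset coordinate, where $\calD\sim\calD'$; hence the $(\alpha,\epsilon_2(\alpha))$-RDP guarantee of $\calM_2$ (with $y_1$ treated as public side information) bounds this quantity by $e^{(\alpha-1)\epsilon_2(\alpha)}$, uniformly in $y_1$. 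Pulling this factor out of the $y_1$-sum leaves $\sum_{y_1}P_1(y_1)^\alpha Q_1(y_1)^{1-\alpha}=e^{(\alpha-1)D_\alpha(P_1\|Q_1)}\le e^{(\alpha-1)\epsilon_1(\alpha)}$ by the RDP guarantee of $\calM_1$. Multiplying the two bounds, taking logarithms, and dividing by $\alpha-1>0$ gives $D_\alpha(P\|Q)\le\epsilon_1(\alpha)+\epsilon_2(\alpha)$; since $\calD\sim\calD'$ was arbitrary, this is precisely $(\alpha,\epsilon_1(\alpha)+\epsilon_2(\alpha))$-RDP.

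The points requiring care are bookkeeping rather than genuine obstacles: promoting the sums to integrals and checking mutual absolute continuity so the density ratios are well defined (otherwise the divergence is $+\infty$ and the claim is vacuous); and stating precisely that the RDP property of $\calM_2:\calY_1\times\calX\to\calY$ is required to hold for \emph{every} fixed value of its first argument, which is exactly what makes the bound on the conditional divergence uniform in $y_1$ before averaging against $P_1$. A routine limiting argument covers $\alpha=\infty$ if desired, though for every use in this paper $\alpha$ is finite.
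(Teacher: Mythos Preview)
The paper does not prove this lemma; it is stated as a preliminary result with a citation to Mironov~\cite{mironov2017renyi} and used without further justification. Your argument is correct and is precisely the standard proof from that reference: factor the joint law, bound the inner conditional Rényi divergence uniformly in $y_1$ using the RDP guarantee of $\calM_2$, then bound the remaining outer term using the RDP guarantee of $\calM_1$.
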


We use the following result for converting the RDP guarantees of a
mechanism to its DP guarantees.
\begin{lemma}[From RDP to DP~\cite{canonne2020discrete,Borja_HypTest-RDP20}]\label{lem:RDP_DP} 
Suppose for any $\alpha>1$, a mechanism $\calM$ is $\left(\alpha,\epsilon\left(\alpha\right)\right)$-RDP. For any $\delta>0$, the mechanism $\calM$ is $\left(\epsilon_{\delta},\delta\right)$-DP, where $\epsilon_{\delta}$ is given by: 
\begin{equation*}
\begin{aligned}
 & \epsilon_{\delta} = \min_{\alpha} \epsilon\left(\alpha\right)+\frac{\log\left(1/\delta\right)}{\alpha-1}+\log\left(1-1/\alpha\right)\\
\end{aligned}
\end{equation*}
\end{lemma}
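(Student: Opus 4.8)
The plan is to derive $(\epsilon_{\delta},\delta)$-DP from the Rényi bound through the hockey-stick characterisation of approximate DP. Fix neighbouring datasets $\calD\sim\calD'$ and set $P=\calM(\calD)$, $Q=\calM(\calD')$; since $D_{\alpha}(P\|Q)<\infty$ we have $P\ll Q$, so the likelihood ratio $Z=\frac{dP}{dQ}$ is well defined $Q$-a.e. I would use the elementary identity
\[
\sup_{\calS\subseteq\calY}\Big(\Pr[\calM(\calD)\in\calS]-e^{\epsilon}\Pr[\calM(\calD')\in\calS]\Big)=\bbE_{\theta\sim Q}\big[(Z(\theta)-e^{\epsilon})_+\big],
\]
the supremum being attained at $\calS=\{\theta:Z(\theta)>e^{\epsilon}\}$; hence $\calM$ is $(\epsilon,\delta)$-DP for the ordered pair $(\calD,\calD')$ exactly when the right-hand side is $\le\delta$. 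Because neighbouring is a symmetric relation, the Rényi hypothesis also gives $D_{\alpha}(Q\|P)\le\epsilon(\alpha)$, so it suffices to bound $\bbE_{Q}[(Z-e^{\epsilon})_+]$ uniformly over ordered neighbouring pairs; the analogous inequality with $P,Q$ swapped then supplies the other direction of Definition \ref{defn:central-DP}.

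By Definition \ref{defn:RDP}, $\bbE_{\theta\sim Q}[Z(\theta)^{\alpha}]=e^{(\alpha-1)D_{\alpha}(P\|Q)}\le e^{(\alpha-1)\epsilon(\alpha)}$ for every $\alpha>1$. The crux is to convert this $\alpha$-th moment bound into a bound on the truncated first moment $\bbE_Q[(Z-e^{\epsilon})_+]$, and the quality of this step is precisely what produces the $\log(1-1/\alpha)$ term. I would use the sharp pointwise inequality
\[
(z-e^{\epsilon})_+\ \le\ \frac1\alpha\Big(1-\frac1\alpha\Big)^{\alpha-1}e^{-(\alpha-1)\epsilon}\,z^{\alpha},\qquad z\ge0,\ \alpha>1,
\]
which one proves by a one-variable calculus argument: the right-hand side is the suitably scaled tangent of $z\mapsto z^{\alpha}$ that touches $z\mapsto(z-e^{\epsilon})_+$ from above at $z^{\star}=\tfrac{\alpha}{\alpha-1}e^{\epsilon}$ (equivalently, an optimised instance of Young's inequality $ab\le\tfrac{a^{p}}{p}+\tfrac{b^{q}}{q}$ with $p=\alpha$). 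Taking $\bbE_{Q}[\cdot]$ and inserting the moment bound gives
\[
\bbE_{Q}\big[(Z-e^{\epsilon})_+\big]\ \le\ \frac1\alpha\Big(1-\frac1\alpha\Big)^{\alpha-1}e^{(\alpha-1)(\epsilon(\alpha)-\epsilon)}\ \le\ \Big(1-\frac1\alpha\Big)^{\alpha-1}e^{(\alpha-1)(\epsilon(\alpha)-\epsilon)}.
\]

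It remains to make the last expression at most $\delta$. Solving $(1-\tfrac1\alpha)^{\alpha-1}e^{(\alpha-1)(\epsilon(\alpha)-\epsilon)}\le\delta$ for $\epsilon$ gives precisely $\epsilon\ge\epsilon(\alpha)+\frac{\log(1/\delta)}{\alpha-1}+\log(1-\tfrac1\alpha)$; since this holds for every $\alpha>1$ and every ordered neighbouring pair, $\calM$ is $(\epsilon_{\delta},\delta)$-DP with $\epsilon_{\delta}$ the minimum over $\alpha>1$ of the right-hand side, which is the claimed expression. The one genuinely delicate point is the middle step: a naïve Markov bound $\Pr_{P}[Z>e^{\epsilon}]\le e^{(\alpha-1)(\epsilon(\alpha)-\epsilon)}$ (equivalently the cruder $(z-e^{\epsilon})_+\le e^{-(\alpha-1)\epsilon}z^{\alpha}$) only yields the weaker threshold $\epsilon(\alpha)+\frac{\log(1/\delta)}{\alpha-1}$; extracting the additional $\log(1-1/\alpha)$ saving requires the tight tangent-line/Young inequality above. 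Everything else is routine, and this is exactly the RDP-to-DP conversion of \cite{canonne2020discrete,Borja_HypTest-RDP20}.
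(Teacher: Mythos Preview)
The paper does not supply its own proof of this lemma: it is stated as a known preliminary result and attributed to \cite{canonne2020discrete,Borja_HypTest-RDP20}. Your argument is correct and is essentially the proof given in those references---the hockey-stick characterisation of $(\epsilon,\delta)$-DP together with the sharp pointwise bound $(z-e^{\epsilon})_+\le \tfrac{1}{\alpha}(1-\tfrac{1}{\alpha})^{\alpha-1}e^{-(\alpha-1)\epsilon}z^{\alpha}$, then integrating against $Q$ and invoking the Rényi moment bound. Your calculus verification of the tangent-line inequality is accurate (the optimal touching point is indeed $z^\star=\tfrac{\alpha}{\alpha-1}e^{\epsilon}$), and you correctly note that dropping the leading $1/\alpha$ factor yields precisely the form stated in the lemma; retaining it would give the marginally sharper threshold $\epsilon(\alpha)+\tfrac{\log(1/\delta)-\log\alpha}{\alpha-1}+\log(1-1/\alpha)$, which also appears in \cite{canonne2020discrete}.
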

\subsection{Binary Randomized Response (\textsl{2RR})}~\label{sec:binary_rr}
The binary randomized response (\textsl{2RR}) is one of the most popular private mechanism that first proposed in~\cite{warner1965-RR}. We present an unbiased version of the \textsl{2RR} mechanism in Algorithm~\ref{algo:2RR} whose input is a bit $b\in\lbrace 0,1\rbrace$ and the output can take one of two values $\lbrace \frac{-p}{1-2p},\frac{1-p}{1-2p} \rbrace$, where $p$ controls privacy-accuracy trade-offs. Furthermore, we present the mean square error (MSE) of the \textsl{2RR} in the following Theorem.

\begin{theorem}~\label{thm:binary_mse} For any $p\in[0,1/2)$, the \textsl{2RR} is $\epsilon_0$-LDP, where $\epsilon_0=\log\left(\frac{1-p}{p}\right)$. The output $y$ of the \textsl{2RR} mechanism is an unbiased estimate of $b$ with bounded MSE:
\begin{equation}~\label{eqn:mse_binary_sum}
\mathsf{MSE}^{\textsl{2RR}}=\sup_{ b\in\lbrace0,1\rbrace }\mathbb{E}\left[\|b-y\|_2^2\right] = \frac{p(1-p)}{(1-2p)^2}.
\end{equation}
\end{theorem}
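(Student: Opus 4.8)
The plan is to dispatch the three assertions — the $\epsilon_0$-LDP guarantee, the unbiasedness, and the MSE formula — one at a time, all of them flowing from an explicit description of the (unbiased) \textsl{2RR} map. I would first rewrite the mechanism as a two-stage process: an internal randomized-response bit $\tilde b\in\{0,1\}$ with $\Pr[\tilde b=b\mid b]=1-p$ and $\Pr[\tilde b=1-b\mid b]=p$, followed by the deterministic affine rescaling $y=(\tilde b-p)/(1-2p)$, which sends $\tilde b\in\{0,1\}$ onto the stated output set $\{-p/(1-2p),\,(1-p)/(1-2p)\}$. Since $y$ is a fixed injective function of $\tilde b$, it is enough to argue at the level of $\tilde b$ and then push forward through the bijection.

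For the privacy claim I would observe that the output alphabet of $\tilde b$ has only two elements, so by monotonicity of probability it suffices to bound the likelihood ratio on the two singletons. A direct computation gives $\Pr[\tilde b=1\mid b=1]/\Pr[\tilde b=1\mid b=0]=(1-p)/p$ and, symmetrically, $\Pr[\tilde b=0\mid b=0]/\Pr[\tilde b=0\mid b=1]=(1-p)/p$; since $p\in[0,1/2)$ both ratios are finite and $\geq 1$, and $(1-p)/p$ dominates $\Pr[\mathcal{R}(b)\in\calS]/\Pr[\mathcal{R}(b')\in\calS]$ for every event $\calS$, which is exactly $\epsilon_0=\log((1-p)/p)$-LDP; composing with the bijection $\tilde b\mapsto y$ preserves the guarantee.

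For unbiasedness and the error, conditioned on $b$ the variable $\tilde b$ is Bernoulli with mean $\mathbb{E}[\tilde b\mid b]=(1-p)b+p(1-b)=(1-2p)b+p$, so $\mathbb{E}[y\mid b]=(\mathbb{E}[\tilde b\mid b]-p)/(1-2p)=b$, giving unbiasedness. Because $y$ is unbiased, $\mathbb{E}[(b-y)^2\mid b]=\operatorname{Var}(y\mid b)=\operatorname{Var}(\tilde b\mid b)/(1-2p)^2$; the conditional variance of the Bernoulli variable $\tilde b$ equals $p(1-p)$ whether $b=0$ or $b=1$, hence the error is $p(1-p)/(1-2p)^2$ uniformly in $b$, and taking $\sup_{b\in\{0,1\}}$ leaves it unchanged, which is \eqref{eqn:mse_binary_sum}.

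None of these steps is technically hard; the one place that calls for a word of care is the LDP verification, since Definition~\ref{defn:LDPdef} quantifies over all output events $\calS$ rather than just singletons — this is handled by noting that the output alphabet is finite (indeed of size two), so the worst case is attained on a singleton. The remaining arguments are one-line computations with the Bernoulli law and the affine rescaling.
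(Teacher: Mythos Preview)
Your proposal is correct and follows essentially the same direct approach as the paper: rewrite the output as an affine function of a randomized-response Bernoulli bit, verify the LDP guarantee via the two singleton likelihood ratios, and read off unbiasedness and the variance $p(1-p)/(1-2p)^2$ from the Bernoulli law together with the $1/(1-2p)$ rescaling. The only additional content in the paper's appendix is an explicit R\'enyi-DP expression for \textsl{2RR} (used later for composition), which is outside the statement of this theorem and not something you need here.
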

For completeness, we present the proof of Theorem~\ref{thm:binary_mse} in Appendix~\ref{app:2rr_mse}.

\begin{algorithm}[t]
\caption{: Local Randomizer $\mathcal{R}^{\textsl{2RR}}_p$ }\label{algo:2RR}
\begin{algorithmic}[1]
\State \textbf{Public parameter:} $p$
\State \textbf{Input:} $b\in\lbrace 0,1\rbrace$.
\State Sample $\gamma\gets \text{Ber}\left(p\right)$
\If{$\gamma == 0$}
\State $y = \frac{b-p}{1-2p}$
\Else
\State $y=\frac{1-b-p}{1-2p}$
\EndIf
\State \textbf{Return:} The client sends $y$.
\end{algorithmic}
\end{algorithm}

\section{Problem formulation}~\label{sec:problem_formulation}
 We consider a distributed private learning setup comprising a set of $N$ clients, where the $i$th client has a data set $\mathcal{V}_i$ for $i\in\left[N\right]$. Let $\mathcal{D}=\left(\mathcal{V}_1,\ldots,\mathcal{V}_N\right)$ denote the entire training dataset, with $\mathcal{V}_i$ held locally by user $i$. The clients are connected to an untrusted server in order to solve the following empirical risk minimization (ERM) problem
\begin{equation}\label{eq:problem-formulation}
\min_{\theta\in\mathcal{C}} \Big( F(\theta,\mathcal{D}) := \frac{1}{N}\sum_{i=1}^{N} \sum_{\mathbf{v}\in\mathcal{V}_i} f(\theta,\mathbf{v}) \Big),
\end{equation}
where $\mathcal{C}\subset \mathbb{R}^d$ is a closed convex set, $\mathbf{v}\in\mathcal{V}$, and $f:\calC\times\mathcal{V}\to\bbR$, is the loss function. Our goal is to construct a global learning model $\theta$ via stochastic gradient descent (SGD) while preserving privacy of individual data points in the training dataset $\mathcal{D}$ by providing strong DP guarantees. SGD can be written as
\[
\theta_{t+1} \leftarrow \theta_t - \eta_t \frac{1}{n}\sum_{i\in\mathcal{I}} \mathcal{R}(\nabla f_i(\theta_{t})),
\]
where $\mathcal{R}$ is the local randomization mechanism and
$\mathcal{I}$ are the indices of the clients partipating in that round
of SGD, with $n=|\mathcal{I}|$. Therefore, at each iteration the server does  distributed mean estimation (DME) of the gradients $\frac{1}{n}\sum_{i\in\mathcal{I}} \mathcal{R}(\nabla f_i(\theta_{t}))$, and we want it to be done privately and communication-efficiently. To isolate this problem we define DME under privacy and communication constraints. Suppose we have a set of $n$ clients. Each client has has a $d$ dimensional vector $\mathbf{x}_i\in\mathcal{X}$ for $i\in[n]$, where $\mathcal{X}\subset\mathbb{R}^{d}$ denotes a bounded subset of all possible inputs. For example, $\mathcal{X}\triangleq \mathbb{B}^{d}_2(r_2)$ denotes the $d$ dimensional ball with radius $r_2$, i.e., each vector $\mathbf{x}_i$ satisfies $\|\mathbf{x}_i\|_2\leq r_2$ for $i\in[n]$. Furthermore, each client has a communication budget of $b$-bits. The clients are connected to an (untrusted) server that wants to estimate  $\overline{\mathbf{x}}=\sum_{i=1}^{n}\mathbf{x}_i$.

\vspace{0.2cm}
\noindent\textbf{Privacy frameworks:} We assume an untrusted server, under  two different privacy models: {\sf (i)} Local DP (LDP) model {\sf (ii)} Multi-message shuffled (MMS) model.

\noindent\textbf{LDP-model}: We design two mechanisms: {\sf (i)} client-side mechanism $\mathcal{R}:\mathcal{X}\to\mathcal{Y}$ and {\sf (ii)} Server aggregator $\mathcal{A}:\mathcal{Y}^{n}\to \mathbb{R}^{d}$. The local mechanism $\mathcal{R}$ takes an input $\mathbf{x}_i\in\mathcal{X}$ and generates a randomized output $\mathbf{y}_i\in\mathcal{Y}$. The local mechanism $\mathcal{R}$ satisfies privacy and communication constraints as follows. The output $\mathbf{y}_i=\mathcal{R}\left(\mathbf{x}_i\right)$ can be represented using only $b$-bits. The mechanism $\mathcal{R}$ satisfies $\epsilon_0$-LDP (see Definition~\ref{defn:LDPdef}). Each client sends the output $\mathbf{y}_i$ directly to the server, which applies the aggregator $\mathcal{A}$ to estimate the mean $\hat{\mathbf{x}}=\mathcal{A}\left(\mathbf{y}_1,\ldots,\mathbf{y}_n \right)$ such that the estimated mean $\hat{\mathbf{x}}$ is an unbiased estimate of the true mean $\overline{\mathbf{x}}$.

\noindent\textbf{MMS-model}: The \emph{single} shuffle model is
similar to the local DP model but with a secure
shuffler (permutation) which anonymizes the clients to the server;
shuffling can amplify the privacy of the algorithm. Precisely, the
shuffle model consists of three parameters
$\left(\mathcal{R},\mathcal{S},\mathcal{A}\right)$: {\sf (i)}
  \emph{Encode:} a set of local mechanisms
$\mathcal{R}^{(k)}:\mathcal{X}\to\mathcal{Y}, k=1,\ldots,m$ each
similar to the local DP model. Each client sends the $m$ outputs
$\mathbf{y}_i^{(k)}, k=1,\ldots,m$, where
$\mathbf{y}_i^{(k)}\in\mathcal{Y}$, to the secure
shufflers. {\sf (ii)} \emph{Multi-message Shuffle:} a single secure shuffler
$\mathcal{S}_k:\mathcal{Y}^{n}\to \mathcal{Y}^{n}$ receives $n$
outputs $\mathbf{y}_i^{(k)}, i=1,\ldots,n$ after applying the local
mechanism $\mathcal{R}^{(k)}$ on each input
$\mathbf{x}_1,\ldots,\mathbf{x}_n$ and generates a random permutation
$\pi^{(k)}$ of the received messages. The multi-message shuffle is a
parallel set of $m$ single-message shufflers
$\{\mathcal{S}_k\}$. {\sf (iii)} \emph{Analyze:} the server receives the $m$
shufflers' outputs and applies the aggregator
$\mathcal{A}:\mathcal{Y}^{nm}\to \mathbb{R}^{d}$ to estimate the mean
$\hat{\mathbf{x}}=\mathcal{A}\left(\mathbf{y}_{\pi^{(k)}(1)},\ldots,\mathbf{y}_{\pi^{(k)}(n)},k=1,\ldots,m\right)$.
We say that the shuffled model is $\left(\epsilon,\delta\right)$-DP if
the view of the output of the multi-message shuffler
$\left(\mathbf{y}_{\pi^{(k)}(1)},\ldots,\mathbf{y}_{\pi^{(k)}(n)},k=1,\ldots,m \right)$
satisfies $\left(\epsilon,\delta\right)$-DP.

In the two privacy models, the performance of the estimator $\hat{\mathbf{x}}$ is measured by the expected loss:
\begin{equation}~\label{eqn:loss}
\mathsf{MSE}= \sup_{\lbrace\mathbf{x}_i\in\mathcal{X}\rbrace}\mathbb{E}\left[\|\hat{\mathbf{x}}-\overline{\mathbf{x}}\|_2^2\right],
\end{equation}
where the expectation is taken over the randomness of the private
mechanisms. Hence, our goal is to design communication-efficient and
private schemes to generate an unbiased estimate of the true mean
$\overline{x}$ while minimizing the expected loss~\eqref{eqn:loss}. 
We  study the DME for bounded $\ell_{\infty}$-norm  \emph{i.e.,} $\|\mathbf{x}_i\|_{\infty}\leq r_{\infty}$ for all $i\in[n]$ and for bounded $\ell_2$-norm vectors where $\|\mathbf{x}_i\|_{2}\leq r_{2}$.

\section{Overview and main theoretical results}~\label{sec:MainRes}
\vspace{-0.2in}

\begin{algorithm}[t]
\caption{: Local Randomizer $\mathcal{R}^{\ell_{\infty}}_{v,m,s}$ }\label{algo:l_inf}
\begin{algorithmic}[1]
\State \textbf{Public parameter:} Privacy budget $v$, communication levels $m$, and communication coordinates per level $s$.
\State \textbf{Input:} $\mathbf{x}_i\in \mathbb{B}^{d}_{\infty}\left(r_{\infty}\right)$.
\State $\mathbf{z}_i \gets \left(\mathbf{x}_i+r_{\infty}\right)/2r_{\infty}$
\State $\mathbf{z}^{(0)}_i\gets 0$
\For{$k=1,\ldots,m-1$}
\State $\mathbf{b}_i^{(k)}\gets \lfloor 2^{k}(\mathbf{z}_i-\mathbf{z}^{(k-1)}_i)\rfloor$
\State $v_k \gets \frac{4^{\frac{-k}{3}}}{\left(\sum_{l=1}^{m-1}4^{\frac{-l}{3}}+4^{\frac{-m+1}{3}}\right)} v$
\State $p_k\gets \frac{1}{2}\left(1-\sqrt{\frac{v_k^2/s^2}{v_k^2/s^2+4}}\right)$
\State $\mathcal{Y}_i^{(k)}\gets \mathcal{R}^{\text{Bin}}_{p_k,s}(\mathbf{b}_i^{(k)})$
\State $\mathbf{z}^{(k)}_i\gets \mathbf{z}^{(k-1)}_i+\mathbf{b}_i^{(k)}2^{-k}$
\EndFor
\State Sample $\mathbf{u}_i\gets \mathsf{Bern}\left(2^{m-1}\left(\mathbf{z}_i-\mathbf{z}^{(m-1)}_i\right)\right)$
\State $v_m \gets \frac{4^{\frac{-m+1}{3}}}{\left(\sum_{l=1}^{m-1}4^{\frac{-l}{3}}+4^{\frac{-m+1}{3}}\right)}v$
\State $p_m\gets \frac{1}{2}\left(1-\sqrt{\frac{v_m^2/s^2}{v_m^2/s^2+4}}\right)$
\State $\mathcal{Y}_i^{(m)}\gets \mathcal{R}^{\text{Bin}}_{p_m,s}(\mathbf{u}_i)$
\State \textbf{Return:} The client sends $\mathcal{Y}_i\gets\left\{\mathcal{Y}_i^{(1)},\ldots,\mathcal{Y}_i^{(m)}\right\}$.
\end{algorithmic}
\end{algorithm}

In this section we give an overview of our algorithmic solution for
private DME and the theoretical guarantees for two important cases of
boundedness constraints on the individual vectors. We consider the
private DME of bounded $\ell_{\infty}$-norm vectors in Section
\ref{sec:l_inf_norm} and that for bounded $\ell_{2}$-norm vectors in
Section \ref{sec:l_2_norm}. We will use these results to provide the guarantees for solving the trade-off for the ERM problem of \eqref{eq:problem-formulation} in the Appendix \ref{app:OptRes} (Theorem \ref{thm:app_Opt}).

\begin{algorithm}[t]
\caption{: Analyzer $\mathcal{A}^{\ell_{\infty}}$ }\label{algo:l_inf_server}
\begin{algorithmic}[1]
\State \textbf{Inputs:} $\mathcal{Y}_1,\ldots,\mathcal{Y}_n$, where $\mathcal{Y}_i=\left\{\mathcal{Y}_i^{(1)},\ldots,\mathcal{Y}_i^{(m)}\right\}$ is a set of $m$ sets.
\For {$k=1,\ldots,m-1$}
\State $\hat{\mathbf{b}}^{(k)}\gets\mathcal{A}^{\text{Bin}}\left(\mathcal{Y}_1^{(k)},\ldots,\mathcal{Y}_n^{(k)}\right)$
\EndFor
\State $\hat{\mathbf{u}}\gets\mathcal{A}^{\text{Bin}}\left(\mathcal{Y}_1^{(m)},\ldots,\mathcal{Y}_n^{(m)}\right)$
\State $\hat{\mathbf{z}}\gets \sum_{k=1}^{m-1}\hat{\mathbf{b}}^{(k)}2^{-k}+\hat{\mathbf{u}}2^{-m+1}$ 
\State \textbf{Return:} The server returns $\hat{\mathbf{x}}\gets 2r_{\infty}\hat{\mathbf{z}}-r_{\infty}$.
\end{algorithmic}
\end{algorithm}

\subsection{Bounded $\ell_{\infty}$-norm vectors}~\label{sec:l_inf_norm}
We consider privately computing $\sum_{i=1}^n\mathbf{x}_i$ where
$i$th client has a vector $\mathbf{x}_i$ such that
$\|\mathbf{x}_i\|_{\infty}\leq r_{\infty}$ for $i\in[n]$. For ease of
operation, we will scale each vector such that each coordinate becomes
bounded in range $\left[0,1\right]$, and then reverse it at the
end. That is, each client scales her vector $\mathbf{x}_i$ as follows:
$\mathbf{z}_i=\frac{\mathbf{x}_i+r_{\infty}}{2r_{\infty}}$, where the
operations are done coordinate-wise. Thus, we have that
$\mathbf{z}_{i}[j]\in[0,1]$ for all $j\in[d]$ and $i\in[n]$, where
$\mathbf{z}_{i}[j]$ denotes the $j$th coordinate of the vector
$\mathbf{z}_{i}$. Observe that the vector $\mathbf{z}_i$ can be decomposed into a weighted summation of binary vectors as follows:
\begin{equation}
\label{eq:BinExp}
\mathbf{z}_i = \sum_{k=1}^{\infty} \mathbf{b}_{i}^{(k)}2^{-k},
\end{equation}  
where $\mathbf{b}_i^{(k)}\in\lbrace 0,1\rbrace^{d}$ for all $k\geq 1$. Each client can recursively construct $\mathbf{b}_i^{(k)}$ as follows. Let $\mathbf{z}_i^{(0)}=\mathbf{0}$ and $\mathbf{z}_i^{(k)}=\sum_{l=1}^{k}\mathbf{b}_i^{(l)}2^{-l}$. Hence, $\mathbf{b}_i^{(k)}=\lfloor 2^{k}\left(\mathbf{z}_i-\mathbf{z}^{(k-1)}_i\right)\rfloor$ for $k\geq 1$.

 To make our mechanism communication efficient, each client
 approximates the vector $\mathbf{z}_i$ by using the first $m$ binary
 vectors $\lbrace \mathbf{b}_{i}^{(k)}: 1\leq k\leq m\rbrace$. Note
 that the first $m$ binary vectors together give an approximation to the real
 vector $\mathbf{z}_i$ with error
 $\|\mathbf{z}_i-\mathbf{z}^{(m)}_i\|_2^2\leq d/4^{m}$, where
 $\mathbf{z}^{(m)}_i=\sum_{k=1}^{m}\mathbf{b}_i^{(k)}2^{-k}$. However,
 this mechanism creates a biased estimate of $\mathbf{z}_i$. Hence, to
 design an unbiased mechanism, the client approximates the vector
 $\mathbf{z}_i$ using the first $m-1$ binary vectors $\lbrace
 \mathbf{b}_{i}^{(k)}:1\leq k\leq m-1\rbrace$ of the binary
 representation above and the last binary vector ($\mathbf{u}_i$) is
 reserved for unbiasness as follows:
 \begin{equation}
   \label{eq:real-unbias}
\mathbf{u}_i[j]=\mathsf{Bern}\left(2^{m-1}(\mathbf{z}_{i}[j]-\mathbf{z}_{i}^{(m-1)}[j])\right),
\end{equation}
where $\mathbf{z}^{(m-1)}_i=\sum_{k=1}^{m-1}\mathbf{b}_{i}^{(k)}2^{-k}$ and $\mathsf{Bern}(p)$ denotes Bernoulli random variable with bias $p$. Note that when each client sends the $m$ binary vectors $\lbrace \mathbf{b}_i^{(k)}:1\leq k\leq m-1\rbrace\bigcup \lbrace \mathbf{u}_{i}\rbrace$, the server can generates an unbiased estimate to the mean $\overline{z}=\frac{1}{n}\sum_{i=1}^{n}\mathbf{z}_i$ with error $\mathcal{O}\left(\frac{d}{n4^{m}}\right)$. For completeness, we prove some properties of this quantization scheme in Appendix~\ref{app:quantization}. 

The private DME mechanism is given in Algorithm~\ref{algo:l_inf}, where  $v$ controls the total privacy of the mechanism. There are two communication parameters: $m$ controls the number of bits for quantization and $s$ controls the number of dimensions used to represent each binary vector. In Theorems~\ref{thm:l_inf_vector_ldp} and ~\ref{thm:l_inf_vector_shuffle}, we present how the privacy and communication parameters $v,m,s$ affects the accuracy of the mechanism. The server-side is presented in Algorithm~\ref{algo:l_inf_server}. The server estimate the mean of each binary vectors $\lbrace b_{i}^{(k)}\rbrace$ and decodes the messages to generate an estimate to true mean $\overline{\mathbf{z}}=\frac{1}{n}\sum_{i=1}^{n}\mathbf{z}_i$. Then, the server scales the vector $\overline{\mathbf{z}}$ to generate an unbiased estimate of the mean $\overline{\mathbf{x}}$.

We prove the bound on the MSE of the proposed mechanisms in the local DP and MMS models in the following theorems, where we defer the proofs to Appendix~\ref{app:l_inf_vector}. For ease of presentation, we provide the order of the achievable MSE and give the $\epsilon_0$-LDP and/or central $\left(\epsilon,\delta\right)$-DP guarantees of our mechanism for both local DP and shuffle models. We track the constants in the MSE in the detailed proofs in Appendix~\ref{app:l_inf_vector}, see \eqref{eq:ExpConst-MSE-linf-LDP}, \eqref{eq:ExpConst-MSE-lind-MMS}. Furthermore, we present RDP guarantees of our mechanisms for both local DP and MMS models in the detailed proofs. We give the outline of the proofs in Section~\ref{sec:Pf_Outlines}. 
 
\begin{theorem}[Local DP model]~\label{thm:l_inf_vector_ldp} The output of the local mechanism $\mathcal{R}^{\ell_{\infty}}_{v,m,s}$ can be represented using $ms\left(\log\left(\lceil d/s\rceil\right)+1\right)$ bits. By choosing $v=\epsilon_0$, the mechanism $\mathcal{R}^{\ell_{\infty}}_{v,m,s}$ satisfies $\epsilon_0$-LDP. Let $\hat{\mathbf{x}}$ be the output of the analyzer $\mathcal{A}^{\ell_{\infty}}$. The estimator $\hat{\mathbf{x}}$ is an unbiased estimate of $\overline{\mathbf{x}}=\frac{1}{n}\sum_{i=1}^{n}\mathbf{x}_i$ with bounded MSE:
\begin{equation}~\label{eqn:mse_l_inf_vector_ldp}
\begin{aligned}
\mathsf{MSE}^{\ell_{\infty}}_{\text{LDP}}&=\sup_{\lbrace \mathbf{x}_i\in\mathbb{B}_{\infty}^{d}\left(r_{\infty}\right)\rbrace}\mathbb{E}\left[\|\hat{\mathbf{x}}-\overline{\mathbf{x}}\|_2^2\right] \\
&= \mathcal{O}\left(\frac{r_{\infty}^{2}d^2}{n}\max\left\{\frac{1}{d4^{m}},\frac{1}{s},\frac{s}{\epsilon_0^2}\right\}\right).
\end{aligned}
\end{equation}
\end{theorem}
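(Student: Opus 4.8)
The plan is to prove the theorem's three assertions --- the bit count of $\mathcal{R}^{\ell_\infty}_{v,m,s}$, its $\epsilon_0$-LDP guarantee, and unbiasedness of $\hat{\mathbf{x}}$ together with the MSE bound --- in that order. The \textbf{communication count} is direct: at each of the $m$ levels the client runs the binary sub-routine $\mathcal{R}^{\text{Bin}}_{p_k,s}$, which partitions the $d$ coordinates into $s$ groups of size at most $\lceil d/s\rceil$, samples one coordinate per group, and sends that bit through Algorithm~\ref{algo:2RR}; each of the $s$ messages is then a group-local index ($\log\lceil d/s\rceil$ bits) plus one \textsl{2RR} output symbol (one bit, since it is two-valued), so the whole output costs $ms\bigl(\log\lceil d/s\rceil+1\bigr)$ bits.

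For \textbf{privacy}, I would first convert the choice of $p_k$ in Algorithm~\ref{algo:l_inf} into a per-\textsl{2RR} privacy level. Writing $t_k:=v_k/s$, the formula gives $1-2p_k=t_k/\sqrt{t_k^2+4}$, hence $\tfrac{1-p_k}{p_k}=\tfrac14\bigl(\sqrt{t_k^2+4}+t_k\bigr)^2$, so by Theorem~\ref{thm:binary_mse} the \textsl{2RR} at level $k$ is $g(t_k)$-LDP with $g(t)=2\log\bigl(\sqrt{t^2+4}+t\bigr)-\log 4$. A one-line check ($g(0)=0$, $g'(t)=2(t^2+4)^{-1/2}$ so $g'(0)=1$, and $g''(t)=-2t(t^2+4)^{-3/2}\le 0$ on $[0,\infty)$) gives $g(t)\le t$, so this \textsl{2RR} is at most $(v_k/s)$-LDP. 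The output $\mathcal{Y}_i$ is a non-adaptive release, with independent randomness, of $ms$ such \textsl{2RR} outputs --- $s$ per level, on data-independently sampled coordinates, the Bernoulli step that forms $\mathbf{u}_i$ preceding (hence not affecting) the last \textsl{2RR} --- so by basic composition it is $\bigl(\sum_{k=1}^{m}v_k\bigr)$-LDP; since the privacy weights in Algorithm~\ref{algo:l_inf} are normalized so that $\sum_{k=1}^m v_k=v$, choosing $v=\epsilon_0$ gives $\epsilon_0$-LDP. (For the iterative results later I would instead push the Renyi divergence of Algorithm~\ref{algo:2RR} through Lemma~\ref{lemm:compostion_rdp} and convert with Lemma~\ref{lem:RDP_DP}, recording that sharper RDP statement in Appendix~\ref{app:l_inf_vector}.)

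For \textbf{unbiasedness and the MSE}, unbiasedness of \textsl{2RR} (Theorem~\ref{thm:binary_mse}) and the $\lceil d/s\rceil$ rescaling built into $\mathcal{A}^{\text{Bin}}$ give $\mathbb{E}[\hat{\mathbf{b}}^{(k)}]=\tfrac1n\sum_i\mathbf{b}_i^{(k)}$ and, using \eqref{eq:real-unbias}, $\mathbb{E}[\hat{\mathbf{u}}]=\tfrac1n\sum_i 2^{m-1}(\mathbf{z}_i-\mathbf{z}_i^{(m-1)})$; substituting into the reconstruction of $\hat{\mathbf{z}}$ in Algorithm~\ref{algo:l_inf_server} telescopes to $\mathbb{E}[\hat{\mathbf{z}}]=\tfrac1n\sum_i\mathbf{z}_i$, so $\hat{\mathbf{x}}=2r_\infty\hat{\mathbf{z}}-r_\infty$ is unbiased for $\overline{\mathbf{x}}$ and $\mathsf{MSE}^{\ell_\infty}_{\mathrm{LDP}}=(2r_\infty)^2\,\mathrm{Var}(\hat{\mathbf{z}})$. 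Because the per-level randomizations are independent, $\mathrm{Var}(\hat{\mathbf{z}})$ is the sum of $\mathrm{Var}(\hat{\mathbf{b}}^{(k)})$ with weight $4^{-k}$ and $\mathrm{Var}(\hat{\mathbf{u}})$ with weight $4^{-(m-1)}$; bounding each coordinate's level-$k$ variance by $\mathcal{O}\bigl(\tfrac1n\lceil d/s\rceil\,\mathsf{MSE}^{\textsl{2RR}}_{p_k}\bigr)$ for the \textsl{2RR} noise plus $\mathcal{O}\bigl(\tfrac1n\lceil d/s\rceil\bigr)$ for the coordinate-sampling and Bernoulli-rounding fluctuation, using $\mathsf{MSE}^{\textsl{2RR}}_{p_k}=s^2/v_k^2$ (Theorem~\ref{thm:binary_mse} with this $p_k$), summing over the $d$ coordinates, and invoking the $\mathcal{O}(d/(n4^m))$ quantization-truncation bound of Appendix~\ref{app:quantization} for the residual carried by $\mathbf{u}_i$, one obtains
\begin{equation*}
\mathrm{Var}(\hat{\mathbf{z}})=\mathcal{O}\!\left(\frac{d}{n4^{m}}+\frac{d^{2}}{ns}+\frac{d^{2}s}{n}\sum_{k=1}^{m}\frac{4^{-k}}{v_k^{2}}\right)
\end{equation*}
(the $k=m$ term carrying weight $4^{-(m-1)}$, which changes only constants). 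The last step is to evaluate $\sum_{k}4^{-k}/v_k^{2}$: with $v_k\propto 4^{-k/3}$ and normalizing constant $\sum_l 4^{-l/3}+4^{-(m-1)/3}=\mathcal{O}(1)$, it equals $\mathcal{O}(1/v^{2})=\mathcal{O}(1/\epsilon_0^{2})$, and $v_k\propto 4^{-k/3}$ is exactly the allocation minimizing $\sum_k 4^{-k}/v_k^{2}$ subject to $\sum_k v_k=v$ --- which is why that exponent appears in Algorithm~\ref{algo:l_inf}. Collecting the three terms gives $\mathrm{Var}(\hat{\mathbf{z}})=\mathcal{O}\bigl(\tfrac{d^{2}}{n}\max\{\tfrac{1}{d4^{m}},\tfrac1s,\tfrac{s}{\epsilon_0^{2}}\}\bigr)$, and multiplying by $(2r_\infty)^2$ finishes the proof.

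The two delicate points I expect to be the main obstacle are the exact privacy accounting --- the inequality $\log\tfrac{1-p_k}{p_k}\le v_k/s$ for the particular $p_k$, and the quantitatively tight RDP refinement consumed downstream --- and the variance bookkeeping that collapses the geometric series over the $m$ resolution levels, whose crux is recognizing the non-uniform allocation $v_k\propto 4^{-k/3}$ as the minimizer of the weighted inverse-variance sum; once that is set up, tracking the $\lceil d/s\rceil$ rescaling and the $s$-versus-$d$ edge cases --- so that the $d/(n4^m)$ floor is precisely the quantization error while the other two terms are the communication and privacy prices --- is routine.
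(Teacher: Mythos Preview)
Your proposal is correct and mirrors the paper's own proof: the communication count is the same message-counting argument, the privacy step is exactly the paper's per-\textsl{2RR} bound (your concavity check $g(t)\le t$ is the content of their Lemma~\ref{lemm:eps_0_LDP}) followed by basic composition across the $ms$ releases, and the MSE step is the same decomposition into per-level binary-vector variances (equation~\eqref{eqn:var_bound_R}) plus the quantization floor of Corollary~\ref{cor:quant_vector}, with the non-uniform allocation $v_k\propto 4^{-k/3}$ collapsing the weighted sum $\sum_k 4^{-k}/v_k^2$ to $\mathcal{O}(1/v^2)$. Your remark that this allocation is the Lagrangian minimizer of that sum is a nice explanatory addition the paper leaves implicit.
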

Theorem~\ref{thm:l_inf_vector_ldp} shows that each client needs to set $m=1$ and $s=\lceil \epsilon_0 \rceil$ communication bits to achieve MSE $\mathcal{O}\left(\frac{d^2}{n\min\lbrace \epsilon_0,\epsilon_0^2\rbrace}\right)$ when $\epsilon_0\leq \sqrt{d}$. Now, we move to the MMS privacy model.

\begin{theorem}[MMS model]~\label{thm:l_inf_vector_shuffle} The output of the local mechanism $\mathcal{R}^{\ell_{\infty}}_{v,m,s}$ can be represented using $ms\left(\log\left(\lceil d/s\rceil\right)+1\right)$ bits. For every $n\in\mathbb{N}$, $\epsilon\leq 1$, and $\delta\in (0,1)$, the shuffling the outputs of $n$ mechanisms $\mathcal{R}^{\ell_{\infty}}_{v,m,s}$ satisfies $\left(\epsilon,\delta\right)$-DP by choosing $v^2=\frac{sn\epsilon^2}{4\log(1/\delta)}$. Let $\hat{\mathbf{x}}$ be the output of the analyzer $\mathcal{A}^{\ell_{\infty}}$. The estimator $\hat{\mathbf{x}}$ is an unbiased estimate of $\overline{\mathbf{x}}=\frac{1}{n}\sum_{i=1}^{n}\mathbf{x}_i$ with bounded MSE:
\begin{equation}~\label{eqn:mse_l_inf_vector_shuffle}
\begin{aligned}
&\mathsf{MSE}^{\ell_{\infty}}_{\text{MMS}}=\sup_{\lbrace \mathbf{x}_i\in\mathbb{B}_{\infty}^{d}\left(r_{\infty}\right)\rbrace}\mathbb{E}\left[\|\hat{\mathbf{x}}-\overline{\mathbf{x}}\|_2^2\right]\\
&\ = \mathcal{O}\left(\frac{r_{\infty}^{2}d^2}{n^2}\max\left\{\frac{n}{d4^{m}},n\left(\frac{1}{s}-\frac{1}{d}\right),\frac{\log\left(1/\delta\right)}{\epsilon^2}\right\}\right).
\end{aligned}
\end{equation}
\end{theorem}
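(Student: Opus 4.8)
The plan is to combine three ingredients: (i) a bit-budget count for the encoder $\mathcal{R}^{\ell_{\infty}}_{v,m,s}$, (ii) a privacy-amplification-by-shuffling argument that converts the per-level LDP guarantees into a central $(\epsilon,\delta)$-DP guarantee for the multi-message shuffler, and (iii) an MSE decomposition into quantization bias-correction error, coordinate-subsampling error, and randomized-response noise. The communication count is immediate: each of the $m$ levels sends $s$ coordinates, and each coordinate index out of $\lceil d/s\rceil$ possible blocks costs $\log(\lceil d/s\rceil)$ bits plus one bit for the (debiased binary) value, giving $ms(\log(\lceil d/s\rceil)+1)$ bits as claimed; this is essentially bookkeeping for $\mathcal{R}^{\text{Bin}}_{p_k,s}$.

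For the privacy analysis, I would first establish a per-level RDP bound. At level $k$, the mechanism $\mathcal{R}^{\text{Bin}}_{p_k,s}$ applies $\textsl{2RR}$ with flip probability $p_k$ to $s$ sampled coordinates, so by Theorem~\ref{thm:binary_mse} each coordinate is $\epsilon_0^{(k)}$-LDP with $\epsilon_0^{(k)}=\log\frac{1-p_k}{p_k}$; the specific choice $p_k=\frac12\bigl(1-\sqrt{\frac{v_k^2/s^2}{v_k^2/s^2+4}}\bigr)$ is engineered so that the shuffled version of this level contributes an RDP cost of order $\alpha\,v_k^2/(s n)$ per coordinate (this is the standard shuffling-amplification form, where $\epsilon_0^{(k)}\approx v_k/s$ for small budgets gives amplified RDP $\sim \alpha (e^{\epsilon_0^{(k)}}-1)^2/n$). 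Summing over the $s$ coordinates at a level and then over all $m$ levels using Lemma~\ref{lemm:compostion_rdp}, the total RDP is of order $\alpha\sum_{k}v_k^2/(sn)$. The non-uniform allocation $v_k\propto 4^{-k/3}$ is the crucial device: since $\sum_k 4^{-k/3}$ is a convergent geometric series, $\sum_k v_k^2 \le \bigl(\sum_k v_k\bigr)^2$-type bounds are avoided and instead $\sum_k v_k^2 = \Theta(v^2)$, so the total RDP is $O(\alpha v^2/(sn))$. Converting via Lemma~\ref{lem:RDP_DP} and optimizing $\alpha$ in the Gaussian-like regime yields $\epsilon \asymp v\sqrt{\log(1/\delta)/(sn)}$, which rearranges to the stated choice $v^2=\frac{sn\epsilon^2}{4\log(1/\delta)}$. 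I expect this amplification-plus-composition step — getting the constants right so that the $4$ in the denominator is exactly recovered, and verifying the small-$\epsilon$ regime $\epsilon\le 1$ is where the Gaussian RDP approximation is valid — to be the main technical obstacle, and it is presumably where Lemma~\ref{lem:Priv-alloc} does the heavy lifting.

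For the MSE, I would decompose $\hat{\mathbf{x}}-\overline{\mathbf{x}}$ (after rescaling by $2r_\infty$) into three orthogonal-in-expectation pieces: the truncation/debiasing error from using only $m$ levels with the Bernoulli rounding at level $m$, which contributes $O(r_\infty^2 d/(n 4^m))$ per the quantization properties proved in Appendix~\ref{app:quantization}; the coordinate-subsampling error from each client sending only $s$ out of $d$ coordinates per level (an importance-sampling estimator with the $\frac1s-\frac1d$ variance scaling, aggregated over $n$ clients and $m$ levels with weights $2^{-k}$ whose squares sum to a constant), contributing $O(r_\infty^2 d^2/n^2 \cdot n(\frac1s-\frac1d))$; and the $\textsl{2RR}$ noise at level $k$, which by \eqref{eqn:mse_binary_sum} has per-coordinate variance $\frac{p_k(1-p_k)}{(1-2p_k)^2}=\Theta(s^2/v_k^2)$, so after scaling by $2^{-k}$, summing over the $s$ coordinates, dividing by $n^2$ for the mean, and summing over levels, the $4^{-k}$ from the weight cancels against $1/v_k^2\propto 4^{2k/3}$ leaving $\sum_k 4^{-k/3}=\Theta(1)$ times $s/v^2$; substituting $v^2=\frac{sn\epsilon^2}{4\log(1/\delta)}$ turns this into $O(r_\infty^2 d^2/n^2 \cdot \log(1/\delta)/\epsilon^2)$. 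Taking the maximum of the three terms gives \eqref{eqn:mse_l_inf_vector_shuffle}. The delicate point here is checking that the geometric-series exponent $1/3$ is exactly the value that balances the noise term across levels (making every level contribute equally up to constants), which is why that particular exponent appears in Algorithm~\ref{algo:l_inf}; I would verify this balance explicitly as the final step, appealing to Lemma~\ref{lem:MSE-perf} for the consolidated statement.
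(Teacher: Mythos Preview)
Your proposal is correct and follows essentially the same route as the paper: the communication count is the same bookkeeping, the privacy proof is exactly the per-level shuffling amplification (Lemma~\ref{lemm:rdp_bound_feldman}) composed over the $s$ coordinates and $m$ levels via Lemma~\ref{lemm:compostion_rdp} to get the RDP bound of Lemma~\ref{lem:Priv-alloc}, then converted through Lemma~\ref{lem:RDP_DP} to obtain~\eqref{eqn:eps_delta_l_inf}, and the MSE decomposition into quantization, coordinate-subsampling, and \textsl{2RR} noise (with the $4^{-k/3}$ allocation balancing the level-$k$ noise contributions into a convergent geometric series) is precisely the content of Lemma~\ref{lem:MSE-perf}. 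Your identification of the exponent $1/3$ as the value that equalizes the per-level noise after the $4^{-k}$ weighting is exactly the mechanism the paper exploits.
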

Theorem~\ref{thm:l_inf_vector_shuffle} shows that each client requires to set $m=\lceil \log\left(n\epsilon^2/d\right)\rceil$ and $s=\mathcal{O}\left(\min\lbrace n\epsilon^2,d\rbrace\right)$ so that the error is bounded by $\mathcal{O}\left(\frac{d^2}{n^2\epsilon^2}\right)$ that matches the MSE of central differential privacy mechanisms.

\begin{remark}[Scalar case]~\label{rem:scalar_MMS} When $d=1$, i.e., scalar case, our MMS algorithm achieves the central DP error $\mathcal{O}\left(\frac{1}{n^2\epsilon^2}\right)$ using $m=\lceil\log\left(n\epsilon^2\right)\rceil$ bits per user. This result covers the private-communication trade-offs for all privacy regimes $\epsilon\in(0,1)$. For example, for $\epsilon=\frac{1}{\sqrt{n}}$, each client needs only a single bit to achieve the central DP error. On the other hand, IKOS mechanism proposed in~\cite{balle2020private,ghazi2020private} requires $\mathcal{O}\left(\log\left(n\right)\right)$-bits of communication. Even when particular regimes of order-optimality are achieved for MMS,  the communication bound is in expectation~\cite{ghazi2021differentially},, whereas ours is deterministic. 
\end{remark}

\subsection{Bounded $\ell_2$-norm Vectors}
\label{sec:l_2_norm}

For private DME $\sum_{i=1}^n\mathbf{x}_i$ where
$\|\mathbf{x}_i\|_{2}\leq r_{2}$ for $i\in[n]$, \emph{i.e.,}
$\ell_2$-bounded, we use the random rotation proposed
in~\cite{suresh2017distributed} to bound the $\ell_{\infty}$-norm of
the vector with radius
$r_{\infty}=\mathcal{O}\left(\frac{r_2}{\sqrt{d}}\right)$ and then we
apply the bounded $\ell_{\infty}$-norm algorithm in
Section~\ref{sec:l_inf_norm}.

\begin{algorithm}[t]
\caption{: Local Randomizer $\mathcal{R}^{\ell_{2}}_{v,m,s}$ }\label{algo:l_2}
\begin{algorithmic}[1]
\State \textbf{Public parameter:} Privacy budget $v$, communication levels $m$, communication coordinates per level $s$, and confidence term $\beta$.
\State \textbf{Input:} $\mathbf{x}_i\in \mathbb{B}^{d}_{2}\left(r_{2}\right)$.
\State Let $U=\frac{1}{\sqrt{d}}\mathbf{H}D$, where $\mathbf{H}$ denotes a Hadamard matrix and $D$ is a diagonal matrix with i.i.d. uniformly random $\lbrace \pm 1\rbrace$ entries.
\State $\mathbf{w}_i\gets W\mathbf{x}_i$
\State $r_{\infty}\gets 10r_2\sqrt{\frac{\log\left(dn/\beta\right)}{d}}$
\For{$j=1,\ldots,d$}
\State $\mathbf{w}_i[j]=\min\left\{r_{\infty},\max\left\{\mathbf{w}_i(j),-r_{\infty}\right\}\right\}$
\EndFor
\State $\mathcal{Y}_i\gets \mathcal{R}^{\ell_{\infty}}_{v,m,s}(\mathbf{w}_i)$
\State \textbf{Return:} The client sends $\mathcal{Y}_i$.
\end{algorithmic}
\end{algorithm}
   
\begin{algorithm}[t]
\caption{: Analyzer $\mathcal{A}^{\ell_{2}}$ }\label{algo:l_2_server}
\begin{algorithmic}[1]
\State \textbf{Inputs:} $\mathcal{Y}_1,\ldots,\mathcal{Y}_n$, where $\mathcal{Y}_i=\left\{\mathcal{Y}_i^{(1)},\ldots,\mathcal{Y}_i^{(m)}\right\}$ is a set of $m$ sets.
\State $\hat{\mathbf{w}}\gets \mathcal{A}^{\ell_{\infty}}\left(\mathcal{Y}_1,\ldots,\mathcal{Y}_n\right)$
\State \textbf{Return:} The server returns $\hat{\mathbf{x}}\gets U^{-1}\hat{\mathbf{w}}$.
\end{algorithmic}
\end{algorithm}

\begin{theorem}[Local DP model]~\label{thm:l_2_vector_ldp} The output of the local mechanism $\mathcal{R}^{\ell_{2}}_{v,m,s}$ can be represented using $sm\left(\log\left(\lceil d/s\rceil\right)+1\right)$ bits. By choosing $v=\epsilon_0$, the mechanism $\mathcal{R}^{\ell_{2}}_{v,m,s}$ satisfies $\epsilon_0$-LDP. Let $\hat{\mathbf{x}}$ be the output of the analyzer $\mathcal{A}^{\ell_{2}}$. With probability at least $1-\beta$, the estimator $\hat{\mathbf{x}}$ is an unbiased estimate of $\overline{\mathbf{x}}=\frac{1}{n}\sum_{i=1}^{n}\mathbf{x}_i$ with  MSE:
\begin{equation}~\label{eqn:mse_l_2_vector_ldp}
\begin{aligned}
&\mathsf{MSE}^{\ell_{2}}_{\text{LDP}}=\sup_{\lbrace \mathbf{x}_i\in\mathbb{B}_{2}^{d}\left(r_{2}\right)\rbrace}\mathbb{E}\left[\|\hat{\mathbf{x}}-\overline{\mathbf{x}}\|_2^2\right]\\
&\quad = \mathcal{O}\left(\frac{r_{2}^{2}\log\left(dn/\beta\right)}{n}\max\left\{\frac{1}{4^{m}},\frac{d}{s},\frac{ds}{\epsilon_0^2}\right\}\right).
\end{aligned}
\end{equation}
\end{theorem}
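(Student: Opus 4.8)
The plan is to reduce Theorem~\ref{thm:l_2_vector_ldp} to the already-established $\ell_\infty$ result (Theorem~\ref{thm:l_inf_vector_ldp}) by a careful accounting of the random rotation preprocessing. First I would recall that $U=\frac{1}{\sqrt d}\mathbf{H}D$ is orthogonal, so $\|\mathbf{w}_i\|_2=\|\mathbf{x}_i\|_2\le r_2$, and a standard concentration bound for $\pm1$ random rotations (as in~\cite{suresh2017distributed}) shows that $\|U\mathbf{x}_i\|_\infty\le 10 r_2\sqrt{\log(dn/\beta)/d}=:r_\infty$ for each fixed $i$ with probability at least $1-\beta/n$; a union bound over $i\in[n]$ gives that with probability at least $1-\beta$, the clipping step in Algorithm~\ref{algo:l_2} is vacuous for every client, so $\mathbf{w}_i=U\mathbf{x}_i$ exactly. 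Conditioned on this event, the vectors fed into $\mathcal{R}^{\ell_\infty}_{v,m,s}$ genuinely satisfy $\|\mathbf{w}_i\|_\infty\le r_\infty$, so Theorem~\ref{thm:l_inf_vector_ldp} applies verbatim.

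Next, for the communication and privacy claims: the output is exactly the output of $\mathcal{R}^{\ell_\infty}_{v,m,s}$ applied to $\mathbf{w}_i$, so the bit count $sm(\log(\lceil d/s\rceil)+1)$ is inherited immediately, and since post-processing (the rotation $U$ is data-independent, applied before the private mechanism) cannot hurt privacy—more precisely, $\mathbf{x}\mapsto U\mathbf{x}\mapsto \text{clip}$ is a fixed map and $\mathcal{R}^{\ell_\infty}$ is $\epsilon_0$-LDP on its input—the composed mechanism $\mathcal{R}^{\ell_2}_{v,m,s}$ is $\epsilon_0$-LDP with $v=\epsilon_0$. For unbiasedness and the MSE bound: the server computes $\hat{\mathbf{x}}=U^{-1}\hat{\mathbf{w}}$ where $\hat{\mathbf{w}}=\mathcal{A}^{\ell_\infty}(\mathcal{Y}_1,\dots,\mathcal{Y}_n)$. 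On the good event, $\hat{\mathbf{w}}$ is an unbiased estimate of $\overline{\mathbf{w}}=\frac1n\sum_i U\mathbf{x}_i=U\overline{\mathbf{x}}$, so $\mathbb{E}[\hat{\mathbf{x}}]=U^{-1}\mathbb{E}[\hat{\mathbf{w}}]=\overline{\mathbf{x}}$, and since $U^{-1}=U^\top$ is orthogonal, $\mathbb{E}\|\hat{\mathbf{x}}-\overline{\mathbf{x}}\|_2^2=\mathbb{E}\|U^{-1}(\hat{\mathbf{w}}-\overline{\mathbf{w}})\|_2^2=\mathbb{E}\|\hat{\mathbf{w}}-\overline{\mathbf{w}}\|_2^2$. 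Plugging $r_\infty=10r_2\sqrt{\log(dn/\beta)/d}$ into \eqref{eqn:mse_l_inf_vector_ldp} gives
\[
\mathcal{O}\!\left(\frac{r_\infty^2 d^2}{n}\max\left\{\frac{1}{d4^m},\frac1s,\frac{s}{\epsilon_0^2}\right\}\right)=\mathcal{O}\!\left(\frac{r_2^2\log(dn/\beta)}{n}\max\left\{\frac{1}{4^m},\frac{d}{s},\frac{ds}{\epsilon_0^2}\right\}\right),
\]
which is exactly \eqref{eqn:mse_l_2_vector_ldp}.

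The main obstacle—really the only nontrivial point—is the concentration argument that controls $\|U\mathbf{x}_i\|_\infty$ and hence justifies that clipping is lossless on a $1-\beta$ event; this is where the $\log(dn/\beta)$ factor and the constant $10$ enter. I would handle it by writing $(\mathbf{H}D\mathbf{x})[j]=\sum_\ell \mathbf{H}[j,\ell]D_{\ell\ell}\mathbf{x}[\ell]$ as a Rademacher sum with coefficients bounded in $\ell_2$ by $\|\mathbf{x}\|_2$ (since $|\mathbf{H}[j,\ell]|=1$), applying a Hoeffding/sub-Gaussian tail bound to get $\Pr[|(\mathbf{H}D\mathbf{x})[j]|\ge t\sqrt d\,]\le 2e^{-t^2 d/(2\|\mathbf{x}\|_2^2)}$ after the $1/\sqrt d$ normalization, then union-bounding over $j\in[d]$ and $i\in[n]$ and choosing $t$ so that $10r_2\sqrt{\log(dn/\beta)/d}$ dominates. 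A secondary bookkeeping issue is that the MSE in the theorem is the conditional error on the good event (hence the "with probability at least $1-\beta$" qualifier in the statement); I would note that the estimator remains unbiased and bounded even off the good event because all binary vectors lie in $\{0,1\}^d$, so the $\beta$-probability bad event contributes only lower-order terms if one wanted the unconditional MSE, but the stated form conditions on the event and needs no such argument.
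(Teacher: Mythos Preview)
Your proposal is correct and follows essentially the same route as the paper: inherit communication and privacy from $\mathcal{R}^{\ell_\infty}_{v,m,s}$, invoke the randomized-Hadamard concentration bound (the paper cites it as Theorem~\ref{thm:bounded_norm_2} rather than sketching the Hoeffding argument) to get $r_\infty=10r_2\sqrt{\log(dn/\beta)/d}$ on a $1-\beta$ event, and plug this $r_\infty$ into Theorem~\ref{thm:l_inf_vector_ldp}. Your explicit use of the orthogonality of $U$ to transfer unbiasedness and MSE from $\hat{\mathbf{w}}$ to $\hat{\mathbf{x}}$ is a detail the paper leaves implicit, but the argument is the same.
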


\begin{theorem}[MMS model]~\label{thm:l_2_vector_shuffle} The output of the local mechanism $\mathcal{R}^{\ell_{2}}_{v,m,s}$ can be represented using $sm\left(\log\left(\lceil d/s\rceil\right)+1\right)$ bits. For every $n\in\mathbb{N}$, $\epsilon\leq 1$, and $\delta\in (0,1)$, the shuffling the outputs of $n$ mechanisms $\mathcal{R}^{\ell_{2}}_{v,m,s}$ satisfies $\left(\epsilon,\delta\right)$-DP by choosing $v^2=\frac{n\epsilon^2}{s\log(1/\delta)}$. Let $\hat{\mathbf{x}}$ be the output of the analyzer $\mathcal{A}^{\ell_{2}}$. With probability at least $1-\beta$ The estimator $\hat{\mathbf{x}}$ is an unbiased estimate of $\overline{\mathbf{x}}=\frac{1}{n}\sum_{i=1}^{n}\mathbf{x}_i$ with MSE:
\begin{small}
\begin{equation}~\label{eqn:mse_l_2_vector_shuffle}
\begin{aligned}
&\mathsf{MSE}^{\ell_{2}}_{\text{MMS}}=\sup_{\lbrace \mathbf{x}_i\in\mathbb{B}_{2}^{d}\left(r_{2}\right)\rbrace}\mathbb{E}\left[\|\hat{\mathbf{x}}-\overline{\mathbf{x}}\|_2^2\right]\\
&\quad = \mathcal{O}\left(\frac{r_{2}^{2}\log\left(dn/\beta\right)}{n^2}\max\left\{\frac{n}{4^{m}},n\left(\frac{d}{s}-1\right),\frac{d\log\left(1/\delta\right)}{\epsilon^2}\right\}\right).
\end{aligned}
\end{equation}
\end{small}
\end{theorem}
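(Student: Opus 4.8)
The plan is a black-box reduction to the $\ell_\infty$ result of Theorem~\ref{thm:l_inf_vector_shuffle}, following the construction of $\mathcal{R}^{\ell_2}_{v,m,s}$ in Algorithm~\ref{algo:l_2}. First I would record the basic properties of the rotation $U=\tfrac1{\sqrt d}\mathbf{H}D$, which is drawn once from shared public randomness and used identically by every client and by the analyzer $\mathcal{A}^{\ell_2}$ of Algorithm~\ref{algo:l_2_server}: since $\mathbf{H}^{\!\top}\mathbf{H}=d\,\mathbf{I}$ and $D^2=\mathbf{I}$, the matrix $U$ is orthogonal, hence $U^{-1}=U^{\!\top}$ exists and $\|U\mathbf v\|_2=\|\mathbf v\|_2$ for all $\mathbf v$; in particular $\|U\mathbf x_i\|_2\le r_2$.

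The crux is the flattening bound of \cite{suresh2017distributed}. For a fixed vector $\mathbf v$, the coordinate $(U\mathbf v)[j]=\tfrac1{\sqrt d}\sum_{l}\mathbf H_{jl}D_{ll}\mathbf v[l]$ is a Rademacher sum with sub-Gaussian parameter $\|\mathbf v\|_2^2/d$, so by Hoeffding $\Pr[|(U\mathbf v)[j]|>t]\le 2\exp(-t^2d/(2\|\mathbf v\|_2^2))$. Choosing $t=r_\infty=10r_2\sqrt{\log(dn/\beta)/d}$ makes the right-hand side at most $\beta/(dn)$, so a union bound over the $d$ coordinates of all $n$ clients shows that the event $E:=\{\|U\mathbf x_i\|_\infty\le r_\infty\text{ for all }i\in[n]\}$ holds with probability at least $1-\beta$ over $D$. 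On $E$ the clipping loop in Algorithm~\ref{algo:l_2} is inactive, so $\mathbf w_i=U\mathbf x_i$ exactly and $\tfrac1n\sum_i\mathbf w_i=U\overline{\mathbf x}$; the scheme is then literally $\mathcal{R}^{\ell_\infty}_{v,m,s}$ run on the vectors $\mathbf w_i\in\mathbb B^d_\infty(r_\infty)$, followed by $\mathcal A^{\ell_\infty}$ and the linear map $U^{-1}$.

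Privacy and communication transfer unconditionally. The preprocessing $\mathbf x_i\mapsto\mathrm{clip}(U\mathbf x_i)$ is a deterministic, data-independent map into $\mathbb B^d_\infty(r_\infty)$, so pre-composing it with $\mathcal R^{\ell_\infty}_{v,m,s}$ preserves the per-message LDP/RDP guarantees, and the shuffle-amplification analysis behind Theorem~\ref{thm:l_inf_vector_shuffle} then applies verbatim: shuffling the $n$ outputs is $(\epsilon,\delta)$-DP for the choice of $v$ in the statement, and the per-client description length $sm(\log\lceil d/s\rceil+1)$ is exactly that of $\mathcal R^{\ell_\infty}_{v,m,s}$. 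For accuracy, work on $E$: by Theorem~\ref{thm:l_inf_vector_shuffle}, $\mathcal A^{\ell_\infty}$ returns $\hat{\mathbf w}$ that is unbiased for $U\overline{\mathbf x}$ with $\bbE\|\hat{\mathbf w}-U\overline{\mathbf x}\|_2^2\le\mathsf{MSE}^{\ell_\infty}_{\text{MMS}}$ evaluated at radius $r_\infty$. Then $\hat{\mathbf x}=U^{-1}\hat{\mathbf w}$ is unbiased for $\overline{\mathbf x}$ by linearity, and orthogonality of $U^{-1}$ gives $\|\hat{\mathbf x}-\overline{\mathbf x}\|_2=\|\hat{\mathbf w}-U\overline{\mathbf x}\|_2$, so the two MSEs coincide. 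Substituting $r_\infty^2=100\,r_2^2\log(dn/\beta)/d$ into \eqref{eqn:mse_l_inf_vector_shuffle} and using $d(\tfrac1s-\tfrac1d)=\tfrac ds-1$ to absorb the extra factor $d$ reproduces \eqref{eqn:mse_l_2_vector_shuffle}.

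I expect the main obstacle to be the flattening step and its constants: pinning down the sub-Gaussian parameter so that the explicit radius $r_\infty=10r_2\sqrt{\log(dn/\beta)/d}$ makes the per-coordinate tail at most $\beta/(dn)$, and being careful that the union bound runs over all $nd$ coordinate events (which is why $\log(dn/\beta)$, rather than $\log(n/\beta)$ or $\log(d/\beta)$, appears in the MSE). A secondary point to state cleanly is the asymmetry between the two guarantees: $(\epsilon,\delta)$-DP holds for every input, whereas unbiasedness and the MSE bound are asserted only on the event $E$ of probability $\ge 1-\beta$ --- off $E$ the clipping would bias the estimator --- which is exactly what the ``with probability at least $1-\beta$'' qualifier in the statement encodes.
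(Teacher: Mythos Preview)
Your proposal is correct and follows essentially the same route as the paper: reduce to Theorem~\ref{thm:l_inf_vector_shuffle} via the randomized Hadamard rotation, inherit privacy and communication from $\mathcal{R}^{\ell_\infty}_{v,m,s}$ by pre-processing, and on the high-probability flattening event substitute $r_\infty^2=100\,r_2^2\log(dn/\beta)/d$ into \eqref{eqn:mse_l_inf_vector_shuffle}. If anything you give more detail than the paper does --- the paper simply cites the flattening bound as Theorem~\ref{thm:bounded_norm_2} from \cite{levy2021learning} rather than sketching the Hoeffding-plus-union-bound argument, and it leaves the orthogonality step $\|U^{-1}(\hat{\mathbf w}-U\overline{\mathbf x})\|_2=\|\hat{\mathbf w}-U\overline{\mathbf x}\|_2$ implicit.
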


\begin{remark}[Kashin's represention]~\label{rem:kashin_MMS} Observe that the MSE in~\eqref{eqn:mse_l_2_vector_shuffle} has a factor of $\left(\log(nd)\right)$ that comes from using the random rotation matrix. We can remove this factor $\log(nd)$ by using the Kashin's representation~\cite{kashin1977diameters} to transform the bounded $\ell_2$-norm vector into a bounded $\ell_{\infty}$-norm vector (see e.g.,~\cite{lyubarskii2010uncertainty,caldas2018expanding,chen2020breaking})  
\end{remark}

\begin{remark}[Comparison with SecAgg]~\label{rem:comp_secagg} When $d<n\epsilon^2$, our MMS algorithm requires $\mathcal{O}\left(d\log\left(\frac{n\epsilon^2}{d}\right)\right)$ bits per client to achieve the central DP error $\mathcal{O}\left(\frac{d}{n^2\epsilon^2}\right)$. Furthermore, it requires only $\mathcal{O}\left(n\epsilon^2\log\left(\frac{d}{n\epsilon^2}\right)\right)$-bits when $d>n\epsilon^2$. On the other hand DDG algorithm~\cite{DDG-ICML21} need $\mathcal{O}\left(d\log\left(n\right)\right)$-bits when $d<n^2\epsilon^2$ and $\mathcal{O}\left(n^2\epsilon^2\log\left(n\right)\right)$-bits when $d>n^2\epsilon^2$~\cite{pmlr-v162-chen22c} to achieve the same MSE. Hence, the MMS saves communication in comparison with SecAgg. 
\end{remark}

\begin{remark}[Compatability with SecAgg]~\label{rem:secAgg} When choosing $s=d$, the output of our algorithm $\mathcal{R}^{\ell_2}_{v,m,s}$ can be represented as $m$ binary-vectors. Hence, it is compatible with secure aggregation to compute the sum of these vectors. Thus, using our $\mathcal{R}^{\ell_2}_{v,m,d}$ with SecAgg gives the same privacy-communication trade-offs as the MMS model in Theorem~\ref{thm:l_2_vector_shuffle}, since SecAgg can be seen as a post-processing of shuffling. However, our algorithm needs $d\lceil\log\left(\frac{n\epsilon^2}{d}\right)\rceil$-bits per client to achieve the central error of $\mathcal{O}\left(\frac{d}{n^2\epsilon^2}\right)$. On the other hand, the distributed-discrete-Gaussian in~\cite{DDG-ICML21} needs $\mathcal{O}\left(d\log\left(n\right)\right)$-bits per client to achieve the same MSE. 
\end{remark}

Next we present a lower bound for DME under privacy and communication constraints, which can be derived using results from \cite{pmlr-v162-chen22c} and ~\cite{bun2014fingerprinting}. 
  
\begin{theorem}[Lower Bound For central DP model]~\label{thm:L_2_lower_bound_central} Let $n,d\in\mathbb{N}$, $\epsilon>0$, $r_2\geq1$, and $\delta=o(\frac{1}{n})$. For any $\mathbf{x}_1,\ldots,\mathbf{x}_n\in\mathbb{B}_2^{d}(r_2)$, the MSE is bounded below by:
\begin{equation}
\mathsf{MSE}_{\text{central}}^{\ell_2}=\Omega\left(r_2^2\max\left\{\frac{d}{n^2\epsilon^2},\frac{1}{n4^{b/d}}\right\}\right)
\end{equation}
for any unbiased algorithm $\mathcal{M}$ that is $\left(\epsilon,\delta\right)$-DP with $b>d$-bits of communication per client. Furthermore, when $b<d$ bits per client, the MSE is bounded below by:
 \begin{equation}
\mathsf{MSE}_{\text{central}}^{\ell_2}=\Omega\left(r_2^2d\max\left\{\frac{1}{n^2\epsilon^2},\frac{1}{nb} \right\}\right)
\end{equation}
\end{theorem}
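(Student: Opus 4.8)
The plan is to derive the lower bound by decomposing it into two regimes and, within each, combining a pure privacy-accuracy lower bound (independent of communication) with a pure communication lower bound (independent of privacy), then taking the maximum. Since the MSE is a supremum over inputs, it suffices to exhibit hard instances for each bound separately; the $\max$ then follows because any valid algorithm must simultaneously respect both constraints.

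First I would handle the privacy term. For the $b>d$ regime the claimed privacy lower bound is $\Omega(r_2^2 d/(n^2\epsilon^2))$; for the $b<d$ regime it is $\Omega(r_2^2 d/(n^2\epsilon^2))$ as well (the first term inside the $\max$). This is the standard $(\epsilon,\delta)$-DP lower bound for estimating the mean of $\ell_2$-bounded vectors, provable via a fingerprinting/tracing argument as in~\cite{bun2014fingerprinting}: one constructs a distribution over datasets $\mathbf{x}_1,\ldots,\mathbf{x}_n\in\mathbb{B}_2^d(r_2)$ (essentially i.i.d.\ Rademacher-type coordinates scaled so the $\ell_2$-norm is $r_2$), shows that any $(\epsilon,\delta)$-DP unbiased mean estimator with small MSE would let the server "trace" a participating user, and derives a contradiction with DP for $\delta=o(1/n)$. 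This yields the $d/(n^2\epsilon^2)$ scaling on each coordinate, hence $r_2^2 d/(n^2\epsilon^2)$ after summing and rescaling; crucially this argument uses no communication constraint, so it holds for all $b$.

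Next I would handle the communication term. In the $b>d$ regime the claimed bound is $\Omega(r_2^2/(n4^{b/d}))$; in the $b<d$ regime it is $\Omega(r_2^2 d/(nb))$. These are information-theoretic quantization lower bounds with no privacy involved: with $b$ bits total per client split across $d$ coordinates, each coordinate effectively gets $b/d$ bits (resp., only a fraction $b/d<1$ of coordinates can be meaningfully transmitted), giving per-coordinate distortion $\Omega(1/4^{b/d})$ (resp.\ $\Omega(d/b)$ aggregate from the $d-b$ "missing" coordinates each contributing constant error); averaging $n$ independent such errors gives the $1/n$ factor. This is exactly the communication lower bound invoked in~\cite{pmlr-v162-chen22c}, which I would cite and adapt. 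Taking the maximum of the privacy term and the communication term in each regime gives the two displayed inequalities.

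The main obstacle is making the fingerprinting argument compatible with the \emph{unbiasedness} restriction and with the communication-limited channel without degrading the rate: the tracing attack must still go through when the server only sees $b$-bit messages. The cleanest route is to note the lower bound must hold for $b\geq d$ (so the channel is not the bottleneck for the privacy term) and separately for $b<d$ where the $d/(n^2\epsilon^2)$ privacy term is still information-theoretically valid because adding a communication constraint can only hurt the estimator; thus the fingerprinting bound, which lower-bounds \emph{any} $(\epsilon,\delta)$-DP estimator, applies a fortiori. The remaining care is in the constants and in ensuring $r_2\geq1$ and $\delta=o(1/n)$ are exactly the hypotheses the fingerprinting lemma of~\cite{bun2014fingerprinting} requires; I would state these as the conditions under which the cited lemma applies and then assemble the two regimes.
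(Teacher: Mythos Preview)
Your proposal is correct and matches the paper's approach exactly: the paper's proof is simply the one-sentence statement that Theorem~\ref{thm:L_2_lower_bound_central} ``is a combination of the lower bound on DME with communication constraints proposed in~\cite{pmlr-v162-chen22c} and the lower bound on DME with central $(\epsilon,\delta)$-DP constraints proposed in~\cite{bun2014fingerprinting}.'' Your decomposition into a privacy term (via fingerprinting) and a communication term (via quantization/information-theoretic bounds), followed by taking the maximum, is precisely this combination spelled out in more detail than the paper itself provides.
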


\begin{algorithm}[t]
\caption{: Local Randomizer $\mathcal{R}^{\text{Bin}}_{p,s}$ }\label{algo:binary_vector}
\begin{algorithmic}[1]
\State \textbf{Public parameter:} Privacy parameter $p$, and communication budget $s$.
\State \textbf{Input:} $\mathbf{b}_i\in\lbrace 0,1\rbrace^{d}$.
\State $a\gets \lceil \frac{d}{s}\rceil$
\State If $a$ is not integer, add $(sa-d)$ dummy zeros to the binary vector $\mathbf{b}$.
\For{$j\in [s]$}
\State Choose uniformly at random one coordinate $a_{ij}\gets\mathsf{Unif}\left(\lbrace (j-1)a,\ldots,ja\rbrace\right)$.
\State $y_{ij}\gets a\mathcal{R}^{\textsl{2RR}}_p\left(\mathbf{b}_i[a_{ij}]\right)$
\EndFor
\State \textbf{Return:} The client sends $s$ messages $\mathcal{Y}_i\gets\left\{\left(z_{i1},y_{i1}\right),\ldots,\left(z_{is},y_{is}\right)\right\}$.
\end{algorithmic}
\end{algorithm}

\begin{remark}(Optimality of our mechanism)\label{rem:MMS-optimal} When the communication budget $b>d$, we can see that our MSE in Theorem~\ref{thm:l_2_vector_shuffle} matches the lower bound in~\ref{thm:L_2_lower_bound_central} (up to logarithmic factor) by choosing $s=d$ and $m=b/d$. Furthermore, when the communication budget $b<d$, our algorithm achieve the lower bound by choosing $s=b$ and $m=1$. Thus, our algorithm for MMS is order optimal for all privacy-communication regimes. 
\end{remark}

\section{Proof outlines}~\label{sec:Pf_Outlines}
As can be seen from \eqref{eq:BinExp}, and Algorithm~\ref{algo:l_inf},
the main ingredient is to solve the following sub-problem. Suppose,
each client has a binary vector $\mathbf{b}_i\in\lbrace
0,1\rbrace^{d}$. The goal is to privately compute the sum
$\overline{\mathbf{b}}=\frac{1}{n}\sum_{i=1}^{n}\mathbf{b}_i$ under
privacy and communication constraints. If we can demonstrate a
solution to this problem, then we can apply it to bounded-norm vectors
as in Sections \ref{sec:l_inf_norm} and \ref{sec:l_2_norm} using
\eqref{eq:BinExp}, along with another critical ingredient, to
judiciously allocate the overall privacy budget among these
bit-vectors describing the vectors at different resolution. These are
the two main ideas that enable us to get the main theoretical results.

\begin{algorithm}[t]
\caption{: Analyzer $\mathcal{A}^{\text{Bin}}$ }\label{algo:binary_vector_server}
\begin{algorithmic}[1]
\State \textbf{Inputs:} $\mathcal{Y}_1,\ldots,\mathcal{Y}_n$, where $\mathcal{Y}_i$ is $s$ messages each is a pair $(a_{ij},y_{ij})$ for $j\in[s]$ and $i\in[n]$. 
\State $\hat{\mathbf{b}}\gets \mathbf{0}_{d}$
\For{$i\in[n]$}
\For{$j\in [s]$}
\State $\hat{\mathbf{b}}[a_{ij}]\gets \hat{\mathbf{b}}[a_{ij}]+y_{ij}$.
\EndFor
\EndFor
\State $\hat{\mathbf{b}}\gets \frac{1}{n}\hat{\mathbf{b}}$
\State \textbf{Return:} The server returns $\hat{\mathbf{b}}$.
\end{algorithmic}
\end{algorithm}
\subsection{Binary vectors}~\label{sec:binary_vector}
A straightforward solution to compute $\overline{\mathbf{b}}=\frac{1}{n}\sum_{i=1}^{n}\mathbf{b}_i$, is to apply the scalar solution proposed in ~\cite{cheu2019distributed} for each coordinate.  However, this requires $d$ bits per client. We will design private mechanisms with much less communication budget per client.

The client-side mechanism is presented in Algorithm~\ref{algo:binary_vector}, where the parameter $s$ determines the communication budget for each client and the parameter $p$ determines the total privacy budget (see Theorem~\ref{thm:binary_vector_ldp}). For given $s\in\lbrace 1,\ldots,d\rbrace$, each client splits the binary vector $\mathbf{b}_i$ into $s$ sub-vectors each with dimension $a = \lceil \frac{d}{s}\rceil$. Then, the client chooses uniformly at random one coordinate from each sub-vector and privatizes its bit using 2RR Algorithm~\ref{algo:2RR}. Observe that the output of Algorithm~\ref{algo:binary_vector} can be represented as a sparse $d$-dimensional vector with only $s$ non-zero bits.

When $s=1$, then each client applies the 2RR mechanism on each coordinate separately. On the other hand, when $s=d$, the client chooses uniformly at random one coordinate and applies the 2RR mechanism. Thus, we get trade-offs between privacy-communication and accuracy. The server aggregator $\mathcal{A}^{\text{Bin}}$ is presented in Algorithm~\ref{algo:binary_vector_server}, where the server simply aggregates the received randomized bits. 

In the following theorems, we prove the bound on the MSE of the proposed mechanisms in the local DP and shuffle models. The proofs are deferred to Appendix~\ref{app:binary_vector}. For ease of presentation, we provide the order of the achievable MSE and give the $\epsilon_0$-LDP and/or central $\left(\epsilon,\delta\right)$-DP guarantees of our mechanism for both local DP and shuffle models. However, we track the constants in the MSE in the detailed proofs in Appendix~\ref{app:binary_vector}. Furthermore, we present RDP guarantees of our mechanisms for both local DP and shuffle models in the detailed proofs.   
 
\begin{theorem}[Local DP model]~\label{thm:binary_vector_ldp} The output of the local mechanism $\mathcal{R}^{\text{Bin}}_{p,s}$ can be represented using $s\left(\log\left(\lceil d/s\rceil\right)+1\right)$ bits. By choosing $p=\frac{1}{2}\left(1-\sqrt{\frac{\epsilon_0^2/s^2}{\epsilon_0^2/s^2+4}}\right)$, the mechanism $\mathcal{R}_{p,s}^{\text{Bin}}$ satisfies $\epsilon_0$-LDP. Let $\hat{\mathbf{b}}$ be the output of the analyzer $\mathcal{A}^{\text{Bin}}$. The estimator $\hat{\mathbf{b}}$ is an unbiased estimate of $\overline{\mathbf{b}}=\frac{1}{n}\sum_{i=1}^{n}\mathbf{b}_i$ with bounded MSE:
\begin{equation}~\label{eqn:mse_binary_vector_ldp}
\begin{aligned}
\mathsf{MSE}^{\text{Bin}}_{\text{ldp}}&=\sup_{\lbrace \mathbf{b}_i\in \lbrace0,1\rbrace^{d}\rbrace}\mathbb{E}\left[\|\hat{\mathbf{b}}-\overline{\mathbf{b}}\|_2^2\right]\\
& = \mathcal{O}\left(\frac{d^2}{n}\max\left\{\frac{1}{s},\frac{s}{\epsilon_0^2}\right\}\right).
\end{aligned}
\end{equation}
\end{theorem}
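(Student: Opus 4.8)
The plan is to prove the four assertions---the bit count, $\epsilon_0$-LDP, unbiasedness, and the MSE bound---separately, using that $\mathcal{R}^{\text{Bin}}_{p,s}$ is simply $s$ parallel data-independent coordinate samplings, one from each block of size $a=\lceil d/s\rceil$, each followed by an independent call to $\mathcal{R}^{\textsl{2RR}}_p$ and a rescaling by the public constant $a$. The communication claim is immediate: in the $j$-th of the $s$ messages the client needs $\lceil\log a\rceil=\lceil\log\lceil d/s\rceil\rceil$ bits to indicate which coordinate of block $j$ was sampled (the block index being implicit in the message order), plus one bit for $y_{ij}$, which by Algorithm~\ref{algo:2RR} is one of only two publicly known values; summing over the $s$ messages gives $s(\log\lceil d/s\rceil+1)$ bits.

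For privacy, condition on the sampled coordinates $(a_{ij})_{j\in[s]}$, which carry no information about $\mathbf{b}_i$. The conditional map $\mathbf{b}_i\mapsto(y_{i1},\dots,y_{is})$ is the composition of $s$ copies of $\mathcal{R}^{\textsl{2RR}}_p$ (the rescaling by $a$ is post-processing), each of which is $\epsilon_0'$-LDP with $\epsilon_0'=\log\frac{1-p}{p}$ by Theorem~\ref{thm:binary_mse}; basic composition of pure LDP makes this conditional map $s\epsilon_0'$-LDP for every value of $(a_{ij})$, so the full mechanism $\mathcal{R}^{\text{Bin}}_{p,s}$ is $s\epsilon_0'$-LDP. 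It then suffices to check $s\epsilon_0'\le\epsilon_0$ under the prescribed $p$. Writing $u=\epsilon_0/s$, the formula in Algorithm~\ref{algo:binary_vector} gives $1-2p=u/\sqrt{u^2+4}$, hence $\frac{1-p}{p}=\left(\tfrac u2+\sqrt{(u/2)^2+1}\right)^2$ and $\epsilon_0'=2\log\!\left(\tfrac u2+\sqrt{(u/2)^2+1}\right)=2\sinh^{-1}(u/2)\le u$, using $\sinh^{-1}(t)\le t$ for $t\ge0$; therefore $s\epsilon_0'\le\epsilon_0$ and $\mathcal{R}^{\text{Bin}}_{p,s}$ is $\epsilon_0$-LDP.

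For unbiasedness and accuracy, fix a client $i$, a coordinate $l$, and let $j$ be its block. The analyzer adds to coordinate $l$ from client $i$ the quantity $V_l^{(i)}=\mathbf{1}[a_{ij}=l]\cdot a\,\mathcal{R}^{\textsl{2RR}}_p(\mathbf{b}_i[l])$ (if $a_{ij}\ne l$ the client's \textsl{2RR} acts on another bit and contributes nothing to coordinate $l$), so $\hat{\mathbf{b}}=\tfrac1n\sum_i V^{(i)}$. Since $\Pr[a_{ij}=l]=1/a$ and $\mathcal{R}^{\textsl{2RR}}_p$ is unbiased (Theorem~\ref{thm:binary_mse}), $\mathbb{E}[V_l^{(i)}]=\mathbf{b}_i[l]$, giving $\mathbb{E}[\hat{\mathbf{b}}]=\overline{\mathbf{b}}$ (padded coordinates being deterministically $0$). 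By independence across clients and between the coordinate sampling and the \textsl{2RR} coins, and the coordinate-wise unbiasedness just shown, $\mathbb{E}\|\hat{\mathbf{b}}-\overline{\mathbf{b}}\|_2^2=\tfrac1{n^2}\sum_{i=1}^n\sum_l\mathrm{Var}(V_l^{(i)})$ with no cross terms. Using $\mathbb{E}[\mathcal{R}^{\textsl{2RR}}_p(b)^2]=\frac{p(1-p)}{(1-2p)^2}+b^2$ and the prescribed $p$, for which $\frac{p(1-p)}{(1-2p)^2}=s^2/\epsilon_0^2$ (same algebra as above), a short computation gives $\mathrm{Var}(V_l^{(i)})=\frac{a s^2}{\epsilon_0^2}+(a-1)\mathbf{b}_i[l]^2\le\frac{a s^2}{\epsilon_0^2}+(a-1)$; summing over the at most $sa$ coordinates and the $n$ clients and dividing by $n^2$, then plugging $a=\lceil d/s\rceil=\Theta(d/s)$, yields $\mathcal{O}\!\left(\frac{d^2}{ns}\!\left(\frac{s^2}{\epsilon_0^2}+1\right)\right)=\mathcal{O}\!\left(\frac{d^2}{n}\max\{\tfrac1s,\tfrac s{\epsilon_0^2}\}\right)$, as claimed.

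The two steps needing genuine care are (i) the privacy algebra---recognizing that the awkward choice of $p$ in Algorithm~\ref{algo:binary_vector} is precisely the one for which $\log\frac{1-p}{p}=2\sinh^{-1}(\epsilon_0/2s)\le\epsilon_0/s$, so $s$-fold composition lands at $\epsilon_0$---and (ii) the variance bookkeeping, where one must keep the scaled \textsl{2RR} noise ($a^2$ times the \textsl{2RR} variance but incurred only with probability $1/a$, hence $\Theta(as^2/\epsilon_0^2)$ per coordinate) separate from the sampling ``miss'' error ($\Theta(a)$ per coordinate, from reporting $0$ on unsampled coordinates) and verify the absence of cross terms---which is exactly why establishing coordinate-wise unbiasedness first pays off. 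Neither step is deep; the same $p$-algebra and variance computation will be reused essentially verbatim in the analyses of Algorithms~\ref{algo:l_inf} and \ref{algo:l_2}.
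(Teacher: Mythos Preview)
Your proposal is correct and follows essentially the same route as the paper: the communication count and the $s$-fold composition of $\mathcal{R}^{\textsl{2RR}}_p$ for $\epsilon_0$-LDP are identical (the paper defers the algebra $\log\frac{1-p}{p}\le\epsilon_0/s$ to a separate lemma, whereas you work out the $2\sinh^{-1}(\epsilon_0/2s)\le\epsilon_0/s$ step explicitly), and your coordinate-wise variance computation $\mathrm{Var}(V_l^{(i)})=as^2/\epsilon_0^2+(a-1)\mathbf{b}_i[l]^2$ is just the per-entry version of the paper's block-wise computation $\mathbb{E}\|\mathbf{y}_i-\mathbf{b}_i\|_2^2=\frac{sa^2p(1-p)}{(1-2p)^2}+(a-1)\|\mathbf{b}_i\|^2$, yielding the same bound after summing and substituting $p(1-p)/(1-2p)^2=s^2/\epsilon_0^2$.
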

Theorem~\ref{thm:binary_vector_ldp} shows that each client needs to send $s=\lceil \epsilon_0 \rceil$ communication bits to achieve MSE $\mathcal{O}\left(\frac{d^2}{n\min\lbrace \epsilon_0,\epsilon_0^2\rbrace}\right)$. Now, we move to the shuffle model, where we assume that there exists $s$ shuffler. The $j$-th shuffler randomly permutes the set of messages $\left\{\left(a_{ij},y_{ij}\right):i\in[n]\right\}$ from the $n$ clients.

\begin{theorem}[MMS model]~\label{thm:binary_vector_shuffle} The output of the local mechanism $\mathcal{R}^{\text{Bin}}_{p,s}$ can be represented using $s\left(\log\left(\lceil d/s\rceil\right)+1\right)$ bits. For every $n\in\mathbb{N}$, $\epsilon\leq 1$, and $\delta\in (0,1)$, shuffling the outputs of $n$ mechanisms $\mathcal{R}_{p,s}^{\text{Bin}}$ satisfies $\left(\epsilon,\delta\right)$-DP by choosing $p=\frac{1}{2}\left(1-\sqrt{\frac{v^2}{v^2+4}}\right)$, where $v^2=\frac{n\epsilon^2}{4s\log(1/\delta)}$. Let $\hat{\mathbf{b}}$ be the output of the analyzer $\mathcal{A}^{\text{Bin}}$. The estimator $\hat{\mathbf{b}}$ is an unbiased estimate of $\overline{\mathbf{b}}=\frac{1}{n}\sum_{i=1}^{n}\mathbf{b}_i$ with bounded MSE:
\begin{equation}~\label{eqn:mse_binary_vector_shuffle}
\begin{aligned}
\mathsf{MSE}^{\text{Bin}}_{\text{shuffle}}&=\sup_{\lbrace \mathbf{b}_i\in \lbrace0,1\rbrace^{d}\rbrace}\mathbb{E}\left[\|\hat{\mathbf{b}}-\overline{\mathbf{b}}\|_2^2\right]\\
& = \mathcal{O}\left(\frac{d^2}{n^2}\max\left\{n\left(\frac{1}{s}-\frac{1}{d}\right),\frac{\log\left(1/\delta\right)}{\epsilon^2}\right\}\right).
\end{aligned}
\end{equation}
\end{theorem}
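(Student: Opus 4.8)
The plan is to prove the three parts of the statement separately: the per-client communication cost, the $(\epsilon,\delta)$-DP guarantee of the $s$-fold shuffled mechanism, and the MSE bound, where the last reuses the variance computation behind Theorem~\ref{thm:binary_vector_ldp} but with the new value of $p$. The communication bound is immediate: each client sends $s$ messages (one per block), and the $j$-th message $(a_{ij},y_{ij})$ produced by Algorithm~\ref{algo:binary_vector} consists of an index into a block of size $a=\lceil d/s\rceil$, costing $\lceil\log\lceil d/s\rceil\rceil$ bits, together with $y_{ij}=a\,\mathcal{R}^{\textsl{2RR}}_{p}(\mathbf{b}_i[a_{ij}])$, which takes only one of two values and hence costs $1$ bit; summing over the $s$ messages gives $s(\log(\lceil d/s\rceil)+1)$ bits.

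For privacy, I would first note that, for each fixed $j$, the per-client randomizer $\mathbf{b}_i\mapsto(a_{ij},y_{ij})$ feeding the $j$-th shuffler is $\epsilon_0$-LDP with $\epsilon_0=\log\frac{1-p}{p}$: the index $a_{ij}$ has a data-independent distribution, and conditioned on it, $y_{ij}$ is a post-processing of a single call to $\mathcal{R}^{\textsl{2RR}}_{p}$, which is $\epsilon_0$-LDP by Theorem~\ref{thm:binary_mse}; from $1-2p=\sqrt{v^2/(v^2+4)}$ one checks $\epsilon_0=\Theta(v)$ in the relevant (moderate-$v$) regime. Next I would invoke the RDP amplification-by-shuffling bound \cite{feldman2022stronger,girgis2021renyi-CCS}: shuffling $n$ i.i.d. copies of an $\epsilon_0$-LDP mechanism is $(\alpha,\tilde\epsilon(\alpha))$-RDP with $\tilde\epsilon(\alpha)=O\!\big(\alpha(e^{\epsilon_0}-1)^2/n\big)=O(\alpha v^2/n)$. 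Since replacing client $i$'s vector $\mathbf{b}_i$ changes at most one bit inside each block, the output of shuffler $j$ is $(\alpha,\tilde\epsilon(\alpha))$-RDP for the neighboring relation of Definition~\ref{defn:central-DP}, and composing the $s$ shufflers via Lemma~\ref{lemm:compostion_rdp} gives $\big(\alpha,O(\alpha s v^2/n)\big)$-RDP. Substituting $v^2=\frac{n\epsilon^2}{4s\log(1/\delta)}$ turns this into $\big(\alpha,O(\alpha\epsilon^2/\log(1/\delta))\big)$-RDP, and applying Lemma~\ref{lem:RDP_DP} while optimizing over $\alpha$ (the optimum being $\alpha=\Theta(\log(1/\delta)/\epsilon)$) yields $(\epsilon,\delta)$-DP, with the factor $4$ in the prescribed $v^2$ providing the slack that makes the final bound land at $\epsilon$ once the constants are tracked.

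For the MSE, note first that the analyzer $\mathcal{A}^{\text{Bin}}$ (Algorithm~\ref{algo:binary_vector_server}) merely forms a sum, so its output — hence the MSE — is unaffected by the shufflers' permutations, and the analysis is identical to the LDP case up to the value of $p$. Fix a real coordinate $k\in[d]$ and let $j^\star$ be the block containing it; since $a_{ij^\star}$ is uniform over the $a$ coordinates of that block and $\mathcal{R}^{\textsl{2RR}}_{p}$ is unbiased, $\mathbb{E}[y_{ij^\star}\mathbf{1}\{a_{ij^\star}=k\}]=a\cdot\frac1a\cdot\mathbf{b}_i[k]=\mathbf{b}_i[k]$ while no message from another block can land on $k$, so $\hat{\mathbf{b}}$ is unbiased (the padded dummy zeros receive mass only on discarded coordinates). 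By independence across clients, $\mathbb{E}\|\hat{\mathbf{b}}-\overline{\mathbf{b}}\|_2^2=\frac1{n^2}\sum_{i=1}^n\sum_{k=1}^d\mathrm{Var}\big(y_{ij^\star}\mathbf{1}\{a_{ij^\star}=k\}\big)$, and using $\mathbb{E}[(\mathcal{R}^{\textsl{2RR}}_{p}(b))^2]=\mathsf{MSE}^{\textsl{2RR}}+b^2$ this variance equals $a\,\mathsf{MSE}^{\textsl{2RR}}+(a-1)\mathbf{b}_i[k]^2$. The choice of $p$ gives $(1-2p)^2=\frac{v^2}{v^2+4}$ and $p(1-p)=\frac1{v^2+4}$, hence $\mathsf{MSE}^{\textsl{2RR}}=1/v^2$; bounding $\mathbf{b}_i[k]^2\le1$ and summing over $k$ yields $\mathbb{E}\|\hat{\mathbf{b}}-\overline{\mathbf{b}}\|_2^2\le\frac dn\big(\frac a{v^2}+a-1\big)$, and substituting $a=\lceil d/s\rceil=\Theta(d/s)$ and $v^2=\frac{n\epsilon^2}{4s\log(1/\delta)}$ produces $O\!\big(\frac{d^2}{n^2}\max\{n(\frac1s-\frac1d),\frac{\log(1/\delta)}{\epsilon^2}\}\big)$, which is \eqref{eqn:mse_binary_vector_shuffle}.

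The communication and MSE parts are routine; the main obstacle is the privacy argument. The delicate step is to invoke the RDP amplification-by-shuffling bound for the $\textsl{2RR}$ in a form with explicit constants that is valid for the prescribed $v^2=\frac{n\epsilon^2}{4s\log(1/\delta)}$ — in particular controlling $\epsilon_0=\log\frac{1-p}{p}$ so that $(e^{\epsilon_0}-1)^2=\Theta(v^2)$ in the regime of interest — and then to carry those constants through the $s$-fold RDP composition and the RDP-to-DP conversion so that the output is certified to be exactly $(\epsilon,\delta)$-DP rather than $C\epsilon$ for some $C>1$; the boundary cases (e.g.\ $\epsilon_0$ approaching the amplification threshold, or $s$ small relative to $n/\log(1/\delta)$) also need attention.
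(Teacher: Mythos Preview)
Your proposal is correct and follows essentially the same route as the paper: count bits per message for communication; for privacy, use that each of the $s$ per-block messages is $\epsilon_0$-LDP with $\epsilon_0=\log\tfrac{1-p}{p}$, apply RDP amplification-by-shuffling to each shuffler, compose the $s$ shufflers via Lemma~\ref{lemm:compostion_rdp}, and convert to $(\epsilon,\delta)$-DP; for the MSE, reuse the LDP variance calculation (your per-coordinate version is equivalent to the paper's per-block version) with the new $p$.

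The one place where the paper is cleaner than your outline is precisely the ``delicate step'' you flag. You invoke the amplification bound in the form $\tilde\epsilon(\alpha)=O\big(\alpha(e^{\epsilon_0}-1)^2/n\big)$ and then need $(e^{\epsilon_0}-1)^2=\Theta(v^2)$, which only holds in the moderate-$v$ regime. The paper instead uses the sharper form from \cite{feldman2022stronger} (their Lemma~\ref{lemm:rdp_bound_feldman}), namely $\tilde\epsilon(\alpha)\le c\,\alpha\,(e^{\epsilon_0}-1)^2/(n e^{\epsilon_0})$. The point is that
\[
\frac{(e^{\epsilon_0}-1)^2}{e^{\epsilon_0}}=\frac{(1-2p)^2}{p(1-p)}=v^2
\]
holds \emph{exactly} for the prescribed $p=\tfrac12\big(1-\sqrt{v^2/(v^2+4)}\big)$, with no $\Theta(\cdot)$ and no regime restriction. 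This gives directly $\epsilon(\alpha)\le c\,\alpha s v^2/n$ after composing the $s$ shufflers, and then $\epsilon\le 2\sqrt{s v^2\log(1/\delta)/n}$ after RDP-to-DP conversion, so that the choice $v^2=\tfrac{n\epsilon^2}{4s\log(1/\delta)}$ lands on $(\epsilon,\delta)$-DP without the constant-tracking and boundary-case worries you raise.
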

Theorem~\ref{thm:binary_vector_shuffle} shows that each client requires to send $s=\mathcal{O}\left(\min\lbrace n\epsilon^2,d\rbrace\right)$ communication bits such that the error in the shuffle model is bounded by $\mathcal{O}\left(\frac{d^2}{n^2\epsilon^2}\right)$ that matches the MSE of central differential privacy mechanisms. For the scalar case when $d=1$, our results in Theorem~\ref{thm:binary_vector_shuffle} matches the optimal MSE as in~\cite{cheu2019distributed}. 
\subsection{Putting things together} \label{subsec:Combine}

We start with proof outlines for Theorems~\ref{thm:l_inf_vector_ldp} and ~\ref{thm:l_inf_vector_shuffle}. For both, the local randomization is the same, and the basic idea is that of non-uniform randomization of the different bits used to quantize a real vector $\mathbf{z}_i$, arising from \eqref{eq:BinExp}. In particular, we use distinct randomizations for each bit vector $\mathbf{b}_i^{(k)}\in\lbrace 0,1\rbrace^{d}$, with different parameters $p_i$ causing different privacy for each resolution level $k$. For a given local privacy guarantee of $\epsilon_0$, we divide this into guarantees $\epsilon_0^{(k)}$ for the $k$-th resolution level, such that $\epsilon_0=\sum_{k=1}^{m}\epsilon_0^{(k)}$. The intuition is that one allocates higher privacy (lower $\epsilon_0^{(k)}$) to the MSBs (lower $k$), for a given overall privacy budget $\epsilon_0$. This is because to get better accuracy (performance in terms of MSE) we want the higher-order bits to be less noisy than the lower-order bits.  We connect this non-uniform choice to the MSE for the LDP and MMS privacy models below.

\begin{figure*}[t]
    \centering 
\begin{subfigure}[b]{0.31\textwidth}
  \includegraphics[scale=0.32]{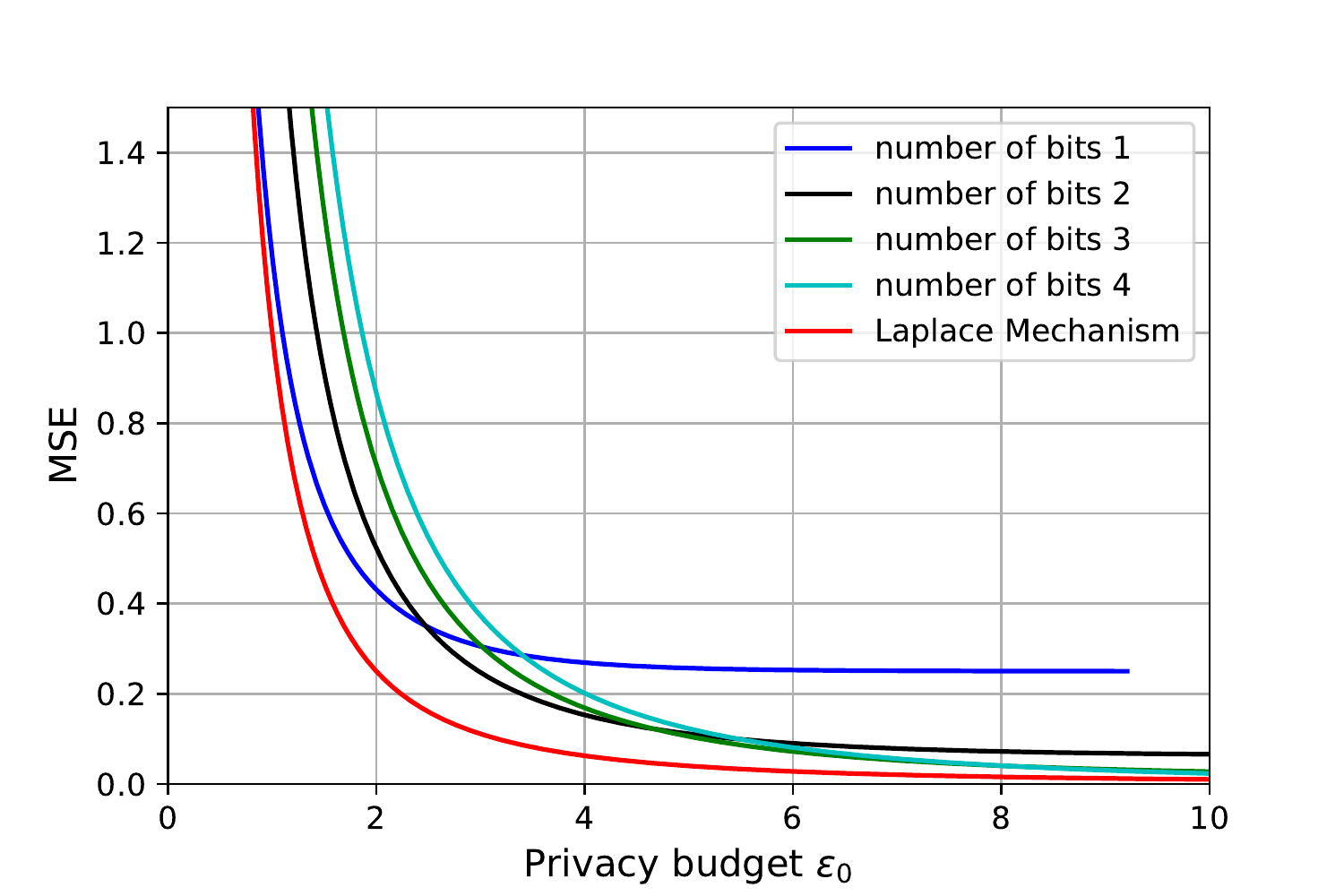}
\caption{Comparison of our LDP mechanism $\mathcal{R}^{\ell{\infty}}_{v,m,s}$ with Laplace mechanism for $d=1$, $n=1$, and $m\in\lbrace 1,2,3,4\rbrace$.} 
\label{fig:local}
\end{subfigure}\hfil 
\begin{subfigure}[b]{0.31\textwidth}
  \includegraphics[scale=0.32]{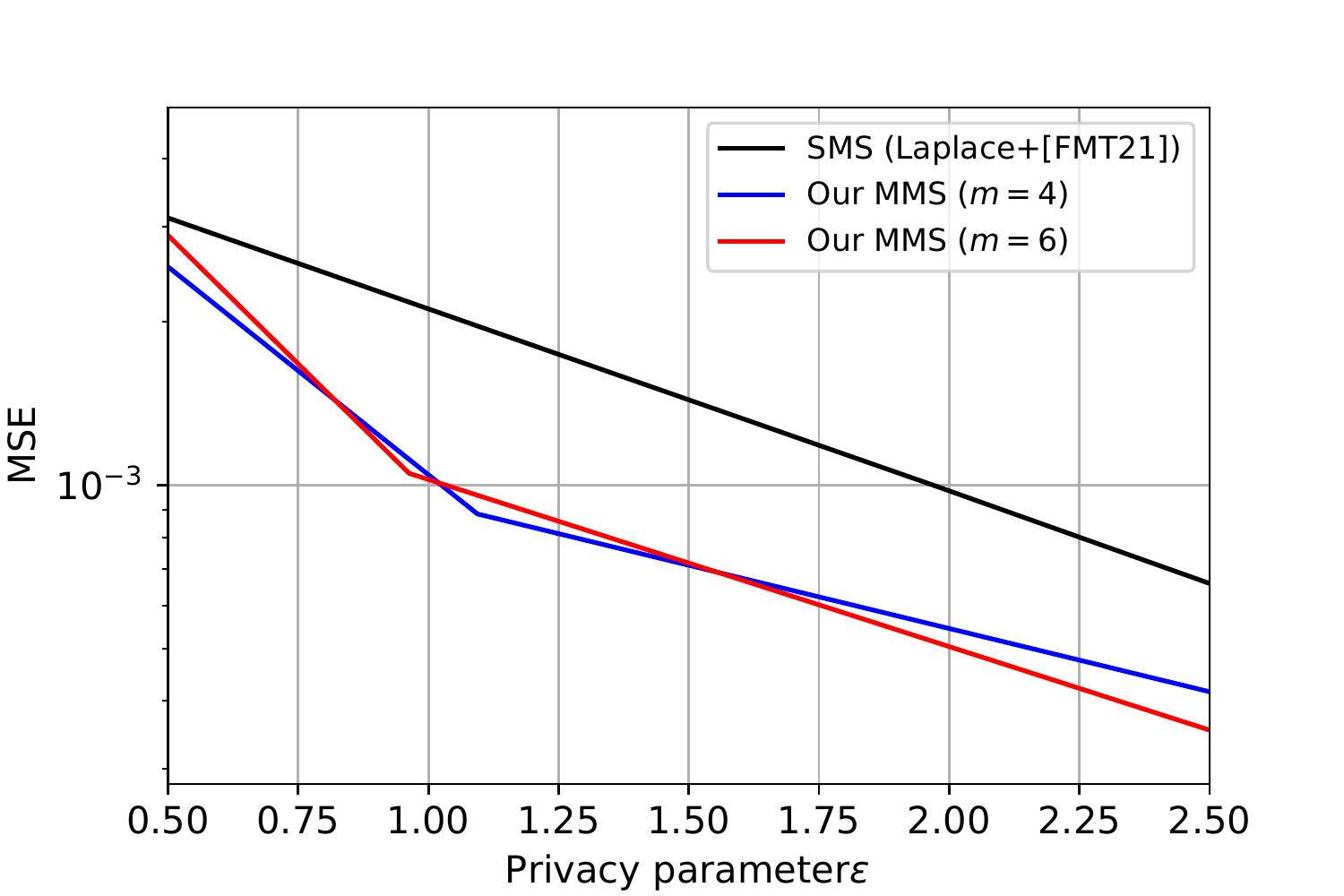}
\caption{Comparison of our MMS mechanism $\mathcal{R}^{\ell{\infty}}_{v,m,s}$ with SMS (Laplace+[FMT21]) for $d=1$, $n=1000$, and $m\in\lbrace 4,6\rbrace$.} 
\label{fig:shuffle_scalar}
\end{subfigure}\hfil 
\begin{subfigure}[b]{0.31\textwidth}
  \includegraphics[scale=0.32]{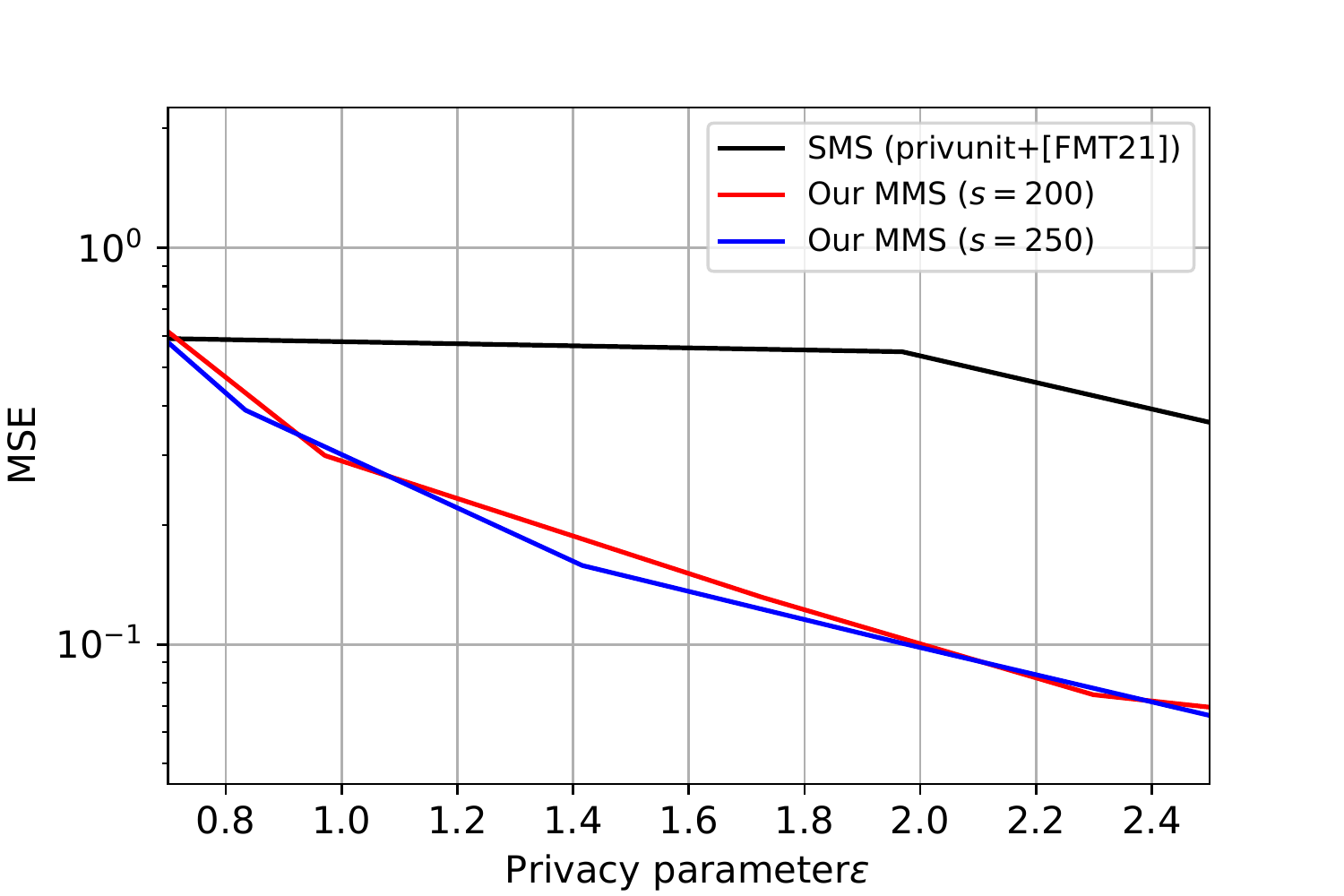}
\caption{Comparison of our MMS mechanism $\mathcal{R}^{\ell{2}}_{v,m,s}$ with SMS (privunit+[FMT21]) for $d=300$, $n=1000$, and $s\in\lbrace 200,250\rbrace$.} 
\label{fig:shuffle_vector}
\end{subfigure}
\end{figure*}

\begin{lemma}[Non-uniform privacy allocation]
  \label{lem:Priv-alloc}
  Consider $m$ privacy mechanisms for $\{\mathbf{b}_i^{(k)}\in\lbrace 0,1\rbrace^{d}\},\mathbf{u}_i$ denoted by $\mathcal{R}^{\text{Bin}}_{p_1,s}(\mathbf{b}_i^{(1)}),\ldots,\mathcal{R}^{\text{Bin}}_{p_{m-1},s}(\mathbf{b}_i^{(m-1)}),\mathcal{R}^{\text{Bin}}_{p_m,s}(\mathbf{u}_i)$, parametrized by $\{p_i\}$. For a given total privacy allocation of the choice of $v\stackrel{\triangle}{=}\epsilon_0$, the choice of $v_k\stackrel{\triangle}{=}\epsilon_0^{(k)}=\frac{4^{\frac{-k}{3}}}{\left(\sum_{l=1}^{m-1}4^{\frac{-l}{3}}+4^{\frac{-m+1}{3}}\right)}v$ for $k\in[m-1]$ and $v_m=\frac{4^{\frac{-m+1}{3}}}{\left(\sum_{l=1}^{m-1}4^{\frac{-l}{3}}+4^{\frac{-m+1}{3}}\right)}v$, we can get the following LDP and MMS models' RDP-privacy guarantees:
  \begin{align}
    \epsilon_{\text{LDP}}\left(\alpha\right)&=\sum_{k=1}^{m}\epsilon_{\text{LDP}}^{(k)}\left(\alpha\right) \\ \label{eq:RDP-LDP} 
    \epsilon_{\text{MMS}}\left(\alpha\right)&=\sum_{k=1}^{m}\epsilon_{\text{MMS}}^{(k)}\left(\alpha\right) \leq c\frac{\alpha v^2}{sn}
  \end{align}
for some constant $c$ and $\epsilon_{\text{LDP}}^{(k)}\left(\alpha\right)\leq\frac{s}{\alpha-1}\log\Big(p^{\alpha}_k$ $(1-p_k)^{1-\alpha}+p^{1-\alpha}_k(1-p_k)^{\alpha}\Big)$ (see Appendix~\ref{app:l_inf_vector} for details).
\end{lemma}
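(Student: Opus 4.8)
The plan is to reduce everything to the RDP analysis of a single-level randomizer $\calR^{\text{Bin}}_{p,s}$ (Algorithm~\ref{algo:binary_vector}) and then compose over the $m$ resolution levels, inserting the amplification-by-shuffling step in the MMS case. First I would show that $\calR^{\text{Bin}}_{p,s}$ is $(\alpha, s\,\tau_p(\alpha))$-RDP, where $\tau_p(\alpha)=\frac{1}{\alpha-1}\log\big(p^{\alpha}(1-p)^{1-\alpha}+p^{1-\alpha}(1-p)^{\alpha}\big)$ is the Renyi divergence of the binary \textsl{2RR} channel (Algorithm~\ref{algo:2RR}, Theorem~\ref{thm:binary_mse}). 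The argument: on any input the randomizer emits $s$ messages $(a_{j},y_{j})$ that are mutually independent given the input, each message's coordinate $a_{j}$ is drawn independently of the data and is revealed in the output, so the likelihood ratio of message $j$ between any two inputs equals the \textsl{2RR} likelihood ratio evaluated at $a_j$, and is $1$ unless the inputs differ at $a_j$; hence each message contributes at most $\tau_p(\alpha)$, and additivity of Renyi divergence over product distributions (equivalently Lemma~\ref{lemm:compostion_rdp}) gives the factor $s$. Specializing $p=p_k$ yields the stated per-level bound $\epsilon^{(k)}_{\text{LDP}}(\alpha)\le\frac{s}{\alpha-1}\log\big(p_k^{\alpha}(1-p_k)^{1-\alpha}+p_k^{1-\alpha}(1-p_k)^{\alpha}\big)$.

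For the LDP guarantee I would then note that Algorithm~\ref{algo:l_inf} applies $\calR^{\text{Bin}}_{p_k,s}$ to the resolution-$k$ bit-vector for $k\in[m-1]$ and $\calR^{\text{Bin}}_{p_m,s}$ to $\mathbf{u}_i$ at level $m$; since these $m$ sub-mechanisms are all functions of the single data record, Lemma~\ref{lemm:compostion_rdp} gives $\epsilon_{\text{LDP}}(\alpha)\le\sum_{k=1}^{m}\epsilon^{(k)}_{\text{LDP}}(\alpha)$. I would also remark that the normalizing denominator in the definition of $v_k$ is chosen precisely so that $\sum_k v_k=v$, which is what makes the per-message pure-LDP budgets $\log\frac{1-p_k}{p_k}\le v_k/s$ add up (after the $\times s$ from the first step) to $v=\epsilon_0$; this underlies Theorem~\ref{thm:l_inf_vector_ldp} though it is not itself needed for the RDP statement here.

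For the MMS bound the plan is: at level $k$ the $j$-th shuffler permutes the $n$ reports $\{(a_{ij},y_{ij}):i\in[n]\}$, each of which is an $\epsilon_0^{(k)}$-LDP message with $\epsilon_0^{(k)}=\log\frac{1-p_k}{p_k}$ (worst-case \textsl{2RR} ratio), so the shuffler's output is a post-processing of the shuffle of $n$ copies of an $\epsilon_0^{(k)}$-LDP randomizer. Invoking the amplification-by-shuffling RDP bounds of~\cite{girgis2021renyi-CCS,feldman2022stronger}, which give RDP of order $O(\alpha(\epsilon_0^{(k)})^2/n)$ in the regime $\epsilon_0^{(k)}=O(1)$ (ensured by $\epsilon\le1$ together with the parameter choices), and using $\epsilon_0^{(k)}=\Theta(v_k/s)$ obtained by solving for $p_k$, each of the $s$ shufflers of level $k$ contributes $O(\alpha v_k^2/(s^2n))$; composing the $s$ shufflers and then the $m$ levels via Lemma~\ref{lemm:compostion_rdp} yields $\epsilon_{\text{MMS}}(\alpha)=\sum_k\epsilon^{(k)}_{\text{MMS}}(\alpha)\le\frac{O(\alpha)}{sn}\sum_{k=1}^{m}v_k^2$. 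The remaining ingredient is the elementary estimate $\sum_{k=1}^m v_k^2\le C v^2$: writing $S$ for the normalizer, $v_k^2=\frac{v^2}{S^2}4^{-2k/3}$ for $k\le m-1$ and $v_m^2=\frac{v^2}{S^2}4^{-2(m-1)/3}$, the numerator is a convergent geometric sum and $S\ge 4^{-1/3}$ keeps $1/S^2$ bounded, so the constant $c$ in the claim absorbs this together with the shuffling constant.

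I expect the main obstacle to be the MMS step: one must (i) argue cleanly that the randomized-coordinate structure of $\calR^{\text{Bin}}$ does not spoil the reduction to a shuffle of plain $\epsilon_0^{(k)}$-LDP reports, so the off-the-shelf shuffling RDP amplification applies verbatim to each of the $ms$ shufflers, and (ii) check that the per-level local budgets $\epsilon_0^{(k)}$, which vary with $k$ and depend on the possibly sizeable MMS choice of $v$, remain in the regime where that amplification has the clean $O(\alpha\epsilon_0^2/n)$ form. The first, second and last steps are routine — a direct Renyi-divergence computation for a two-point channel, plain RDP composition, and a convergent-series estimate whose only subtlety is retaining the $l=1$ term of $S$ to keep it bounded away from zero.
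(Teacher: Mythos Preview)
Your proposal is correct and follows essentially the same route as the paper: per-level RDP of $\calR^{\text{Bin}}_{p_k,s}$ via $s$-fold \textsl{2RR} composition, then RDP composition over the $m$ levels; in the MMS case, amplification-by-shuffling applied to each of the $ms$ shufflers followed by composition and the estimate $\sum_k v_k^2\le Cv^2$.

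One refinement the paper uses that removes your flagged obstacle (ii): rather than approximating the per-message LDP budget by $\epsilon_0^{(k)}\approx v_k/s$ and then invoking the $O(\alpha\epsilon_0^2/n)$ form of amplification, the paper keeps the exact Feldman et al.\ bound $c\alpha(e^{\epsilon_0}-1)^2/(ne^{\epsilon_0})$ and observes that with $e^{\epsilon_0}=(1-p_k)/p_k$ this equals $c\alpha(1-2p_k)^2/(np_k(1-p_k))$, which by the very choice $p_k=\tfrac12\bigl(1-\sqrt{(v_k^2/s^2)/(v_k^2/s^2+4)}\bigr)$ simplifies \emph{exactly} to $c\alpha v_k^2/(s^2 n)$. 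Composing the $s$ shufflers at level $k$ then gives $c\alpha v_k^2/(sn)$ without any small-$\epsilon_0$ caveat on this algebraic step. For the final sum you can also use the simpler observation $\sum_k v_k^2\le(\sum_k v_k)^2=v^2$ (all $v_k\ge0$), which avoids tracking the normalizer $S$.
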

This lemma immediately yields the central DP guarantees of $\epsilon_0$ for the LDP model, and a $\left(\epsilon_{\text{MMS}},\delta\right)$-DP, for the MMS model, where $\epsilon_{\text{MMS}}$ is bounded by
\begin{equation}~\label{eqn:eps_delta_l_inf}
\epsilon_{\text{MMS}}\leq 2c\sqrt{\frac{\epsilon_0^2\log(1/\delta)}{sn}},
\end{equation}
which suggests setting $\epsilon_0^2=\frac{sn\epsilon^2}{4\log(1/\delta)}$, for the local randomization. Critically, this choice of non-uniform privatization enables the following result, proved in Appendix~\ref{app:l_inf_vector}.

\begin{lemma}[MSE performance]
  \label{lem:MSE-perf}
  With the non-uniform privacy allocation specified in Lemma~\ref{lem:Priv-alloc}, we get the the following LDP and MMS models' MSE performance for DME:
\begin{small}
\begin{align}
  \label{eq:MSE-LDP}
\mathsf{MSE}^{\ell_{\infty}}_{\text{LDP}}&\leq\mathcal{O}\left(\frac{r_{\infty}^2d^2}{n}\max\left\{\frac{1}{d4^{m}},\frac{1}{s},\frac{s}{\epsilon_0^2}\right\}\right)  
\\ \label{eq:MSE-MMS}
\mathsf{MSE}^{\ell_{\infty}}_{\text{MMS}}  &\leq\mathcal{O}\left(\frac{r_{\infty}^2d^2}{n^2}\max\left\{\frac{n}{d4^{m}},n\left(\frac{1}{s}-\frac{1}{d}\right),\frac{\log\left(1/\delta\right)}{\epsilon^2}\right\}\right)
\end{align}
\end{small}
\end{lemma}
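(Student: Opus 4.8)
The plan is to reduce the statement to $m$ applications of the binary‑vector DME analysis already established in Theorem~\ref{thm:binary_vector_ldp} (for LDP) and Theorem~\ref{thm:binary_vector_shuffle} (for MMS) — one per resolution level — and then use the geometric structure of the allocation $v_k\propto 4^{-k/3}$ to collapse the resulting $m$‑term bound into a single term. First I would note that $\hat{\mathbf{x}}-\overline{\mathbf{x}}=2r_\infty(\hat{\mathbf{z}}-\overline{\mathbf{z}})$ with $\overline{\mathbf{z}}=\frac1n\sum_i\mathbf{z}_i$, so $\mathbb{E}\|\hat{\mathbf{x}}-\overline{\mathbf{x}}\|_2^2=4r_\infty^2\,\mathbb{E}\|\hat{\mathbf{z}}-\overline{\mathbf{z}}\|_2^2$ and it suffices to control $\mathbb{E}\|\hat{\mathbf{z}}-\overline{\mathbf{z}}\|_2^2$. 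Writing $\overline{\mathbf{b}}^{(k)}=\frac1n\sum_i\mathbf{b}_i^{(k)}$, $\overline{\mathbf{u}}=\frac1n\sum_i\mathbf{u}_i$, and using $\mathbf{z}_i^{(m-1)}=\sum_{k=1}^{m-1}\mathbf{b}_i^{(k)}2^{-k}$ together with the defining relation $\mathbb{E}[\mathbf{u}_i]=2^{m-1}(\mathbf{z}_i-\mathbf{z}_i^{(m-1)})$ of~\eqref{eq:real-unbias}, one gets the clean identity
\[
\hat{\mathbf{z}}-\overline{\mathbf{z}}=\sum_{k=1}^{m-1}\big(\hat{\mathbf{b}}^{(k)}-\overline{\mathbf{b}}^{(k)}\big)2^{-k}+\big(\hat{\mathbf{u}}-\mathbb{E}\overline{\mathbf{u}}\big)2^{-m+1},
\]
which in particular re‑derives unbiasedness.

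Next I would observe that the $m$ summands above are driven by mutually independent sources of randomness — $m-1$ independent copies of $\mathcal{R}^{\text{Bin}}_{p_k,s}$, one copy $\mathcal{R}^{\text{Bin}}_{p_m,s}$, and the Bernoulli draws $\{\mathbf{u}_i\}$ — and each is zero mean, so the cross terms vanish and
\[
\mathbb{E}\|\hat{\mathbf{z}}-\overline{\mathbf{z}}\|_2^2=\sum_{k=1}^{m-1}4^{-k}\,\mathbb{E}\|\hat{\mathbf{b}}^{(k)}-\overline{\mathbf{b}}^{(k)}\|_2^2+4^{-m+1}\,\mathbb{E}\|\hat{\mathbf{u}}-\mathbb{E}\overline{\mathbf{u}}\|_2^2 .
\]
For the last term I condition on $\{\mathbf{u}_i\}$ and apply the law of total variance, splitting it as $\mathbb{E}\|\hat{\mathbf{u}}-\overline{\mathbf{u}}\|_2^2+\mathbb{E}\|\overline{\mathbf{u}}-\mathbb{E}\overline{\mathbf{u}}\|_2^2$; the first is itself a binary‑vector DME error with budget $v_m$, while the second is pure Bernoulli variance $\tfrac1{n^2}\sum_{i,j}\mathrm{Var}(\mathbf{u}_i[j])\le \tfrac{d}{4n}$, which after the $4^{-m+1}$ weight produces the quantization term $\tfrac{d}{n4^m}$ (equivalently $\tfrac{d^2}{n}\cdot\tfrac{1}{d4^m}$). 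Everything thus reduces to the per‑level errors $\mathbb{E}\|\hat{\mathbf{b}}^{(k)}-\overline{\mathbf{b}}^{(k)}\|_2^2$, for which Theorem~\ref{thm:binary_vector_ldp} gives $\mathcal{O}\big(\tfrac{d^2}{n}\max\{\tfrac1s,\tfrac{s}{v_k^2}\}\big)$ (LDP) and Theorem~\ref{thm:binary_vector_shuffle} the amplified analogue with $\tfrac{\log(1/\delta)}{n\epsilon^2}$ in place of $\tfrac{1}{v_k^2}$ after substituting $v^2=\tfrac{sn\epsilon^2}{4\log(1/\delta)}$ (MMS).

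The crux is the geometric collapse. Set $C_m=\sum_{l=1}^{m-1}4^{-l/3}+4^{-(m-1)/3}$, which is bounded by the absolute constant $\sum_{l\ge1}4^{-l/3}+1$ uniformly in $m$, so $v_k^2=4^{-2k/3}v^2/C_m^2=\Theta(4^{-2k/3}v^2)$. The dominant, noise‑injection contribution is then
\[
\sum_{k=1}^{m-1}\frac{4^{-k}}{v_k^2}\cdot\frac{d^2 s}{n}+\frac{4^{-m+1}}{v_m^2}\cdot\frac{d^2 s}{n}=\frac{d^2 s\,C_m^2}{nv^2}\Big(\sum_{k=1}^{m-1}4^{-k/3}+4^{-(m-1)/3}\Big)=\mathcal{O}\!\Big(\frac{d^2 s}{nv^2}\Big),
\]
the point being that the allocation exponent $\tfrac13$ is precisely the value making $4^{-k}/v_k^2\propto 4^{-k/3}$ a convergent geometric series — a larger exponent would make the last level too noisy, a smaller one the early levels. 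The remaining contributions $\sum_k 4^{-k}\cdot\tfrac{d^2}{ns}=\mathcal{O}(\tfrac{d^2}{ns})$ (LDP), respectively the $\tfrac1s-\tfrac1d$ versions (MMS), and the quantization term $\tfrac{d}{n4^m}$ are dominated by or equal to terms already present in the target bounds. Adding the three contributions, multiplying by $4r_\infty^2$, and substituting $v=\epsilon_0$ (LDP) or $v^2=\tfrac{sn\epsilon^2}{4\log(1/\delta)}$ (MMS) yields exactly~\eqref{eq:MSE-LDP} and~\eqref{eq:MSE-MMS}.

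The main obstacle I anticipate is making the error decomposition in the second display fully rigorous: the Bernoulli unbiasing variables $\{\mathbf{u}_i\}$ are generated \emph{inside} the level‑$m$ mechanism, so one must peel off randomness in the correct order (tower property over $\{\mathbf{u}_i\}$ first) to justify both that the level‑$m$ estimation noise and the quantization noise are uncorrelated, and that Theorem~\ref{thm:binary_vector_shuffle} (resp.~\ref{thm:binary_vector_ldp}) applies conditionally for an \emph{arbitrary} realized binary input $\mathbf{u}_i$ rather than only for a fixed, data‑dependent one. Once that is in place, the rest is bookkeeping of the geometric sum, using only the uniform boundedness of $C_m$ and the elementary facts $\|\mathbf{b}_i^{(k)}\|_2^2,\|\mathbf{u}_i\|_2^2\le d$.
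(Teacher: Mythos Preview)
Your proposal is correct and follows essentially the same route as the paper: scale to $\mathbf{z}$-space, decompose $\hat{\mathbf{z}}-\overline{\mathbf{z}}$ orthogonally into the $m$ per-level binary DME errors plus the Bernoulli quantization variance, invoke the binary-vector MSE bounds (Theorems~\ref{thm:binary_vector_ldp}/\ref{thm:binary_vector_shuffle}) expressed in terms of $v_k$ via $\tfrac{p_k(1-p_k)}{(1-2p_k)^2}=\tfrac{s^2}{v_k^2}$, and then collapse the weighted sum using $4^{-k}/v_k^2\propto 4^{-k/3}$ and the uniform boundedness of $C_m$. Your anticipated obstacle about the level-$m$ conditioning is handled exactly as you sketch---the tower property over $\{\mathbf{u}_i\}$ and the fact that the binary MSE bounds are worst-case over inputs (hence apply to any realized $\mathbf{u}_i\in\{0,1\}^d$) are all that is needed.
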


Theorem~\ref{thm:l_inf_vector_ldp} follows from \eqref{eq:MSE-LDP} and Theorem~~\ref{thm:l_inf_vector_shuffle} follows from \eqref{eqn:eps_delta_l_inf} and \eqref{eq:MSE-MMS}. Theorems~\ref{thm:l_2_vector_ldp}, \ref{thm:l_2_vector_shuffle}  directly follow by using Theorem \ref{thm:bounded_norm_2} in Appendix~\ref{app:l_2_vector} in ~\ref{thm:l_inf_vector_ldp} and ~\ref{thm:l_inf_vector_shuffle}.

\section{Numerical Results}
\label{sec:numerics}
In this section, we evaluate the performance of our algorithms in the local DP model and the shuffle model.

\subsection{Local DP model}
We start by comparing the performance of our algorithm $\mathcal{R}^{\ell{\infty}}_{v,m,s}$ with the performance of the Laplace mechanism~\cite{pmlr-v162-chen22c} in the local model for scalar case, i.e., $d=1$. Hence, the elements $\mathbf{x}_i\in[-1,1]$. Observe that the Laplace mechanism is the optimal scheme is this case, however, it has infinite communication bits. In Figure~\ref{fig:local}, we plot the MSE of our $\mathcal{R}^{\ell{\infty}}_{v,m,s}$ with different communication budget $s=1$ and $m\in\lbrace1,2,3,4\rbrace$ for a single client $n=1$. We can observe that our mechanism achieves MSE closer to the MSE of the Laplace mechanism. Furthermore, we only need at most $m=3$ bits to achieve similar performance as Laplace mechanism.  
\subsection{Shuffler model}
We consider two cases in the shuffler model: 1) The scalar case when $d=1$ to evaluate the performance of our $\mathcal{R}^{\ell{\infty}}_{v,m,s}$ mechanism in the shuffle model. 2) The vector case when $d=1000$ to evaluate the performance of our $\mathcal{R}^{\ell{2}}_{v,m,s}$ mechanism in the shuffle model.\\
\paragraph{Scalar} In Figure~\ref{fig:shuffle_scalar}, we plot the MSE of two different mechanisms versus the central privacy $\epsilon$ for fixed $\delta=10^{-5}$. The first mechanism is single message shuffle (SMS) obtained using Laplace mechanism with privacy amplification results in~\cite{}. Observe that Laplace mechanism is the optimal LDP mechanism for LDP and the privacy amplification results in~\cite{feldman2022hiding} is approximately optimal for $\left(\epsilon,\delta\right)$-DP. Hence, we expect that this is the best that an SMS mechanism can achieve. The second mechanism is our multi-message shuffling (MMS) mechanism $\mathcal{R}^{\ell{\infty}}_{v,m,s}$ mechanism for $d=1$ and $m\in\lbrace 4,6\rbrace$. Since we have MMS, we use the RDP results of privacy amplification by shuffling in~\cite{girgis2021renyi-CCS} which is better for composition to compute the RDP of our mechanism. Then, we transform from RDP bound to approximate $\left(\epsilon,\delta\right)$-DP. We choose number of clients $n=1000$. We can see that our multi-message shuffle model achieve lower MSE than the single message shuffle especially for large value of central DP parameter $\epsilon$.

\paragraph{Bounded $\ell_2$-norm vectors} Similar to the scalar case, we consider two mechanisms. The first mechanism SMS is obtained by using \texttt{privunit} mechanism with the privacy amplification results in~\cite{feldman2022hiding}, where \texttt{privunit}~\cite{bhowmick2018protection} is asymptotically optimal LDP mechanism~\cite{asi2022optimal}. We choose $n=1000$ and $d=300$. For our MMS $\mathcal{R}^{\ell_2}_{v,m,s}$, we choose $s\in\lbrace 200,250\rbrace$. It is clear from Figure~\ref{fig:shuffle_vector} that our MMS mechanism has better performance than SMS mechanism.

\nocite{*}
\bibliographystyle{ieeetr}
\bibliography{Priv-tradeoff}

\newpage
\appendices
\section{Binary Randomized Response}~\label{app:2rr_mse}

In this section we review an unbiased version of the classical binary
randomized response (\textsl{2RR} mechanism) in
Algorithm~\ref{algo:2RR}. We also gather some results on the classical
binary randomized response, which will be useful for our proofs.

\begin{theorem}[Repeating Theorem~\ref{thm:binary_mse}]~\label{thm_app:binary_mse} For any $p\in[0,1/2)$, the \textsl{2RR} is $\epsilon_0$-LDP, where $\epsilon_0=\log\left(\frac{1-p}{p}\right)$. The output $y$ of the \textsl{2RR} mechanism is an unbiased estimate of $b$ with bounded MSE:
\begin{equation}~\label{eqn:app_mse_binary_sum}
\mathsf{MSE}^{\textsl{2RR}}=\sup_{ b\rbrace\in\lbrace 0,1}\mathbb{E}\left[\|b-y\|_2^2\right] = \frac{p(1-p)}{(1-2p)^2}.
\end{equation}
\end{theorem}

\noindent\textbf{Proof of Theorem~\ref{thm:binary_mse}} (The MSE of the \textsl{2RR})
First, we show that the output of Algorithm~\ref{algo:2RR} is unbiased estimate of $b$. Let $y$ be the output of the 2RR Algorithm~\ref{algo:2RR}. Then, we have 
\begin{equation}
\begin{aligned}
\mathbb{E}\left[y\right]&=\frac{b-p}{1-2p} (1-p)+\frac{1-b-p}{1-2p} p\\
&=b\left(\frac{1-2p}{1-2p}\right)-\frac{p(1-p)}{1-2p}+\frac{p(1-p)}{1-2p}\\
&= b.
\end{aligned}
\end{equation}
Hence, the Algorithm~\ref{algo:2RR} is an unbiased estimate of the input $b$. Furthermore, the MSE of the 2RR is bounded by:
\begin{equation}
\begin{aligned}
\mathsf{MSE}^{\textsl{2RR}}&=
\mathbb{E}\left[\|y-b\|^2\right]=\mathbb{E}\left[y^2\right]-b^2\\
&=\frac{1}{(1-2p)^2}\left[(b-p)^2(1-p)+(1-b-p)^2p\right]-b^2\\
&=\frac{1}{(1-2p)^2}\left[b^2-4p(1-p)b+p(1-p)\right]-b^2\\
&=\frac{1}{(1-2p)^2}\left[b^2-4p(1-p)b+p(1-p)\right]-b^2\\
&=\frac{1}{(1-2p)^2}\left[b^2(4p(1-p))-4p(1-p)b+p(1-p)\right]\\
&=\frac{p(1-p)}{(1-2p)^2}.
\end{aligned}
\end{equation}
The LDP guarantees of the \textsl{2RR} is obtained from the fact that $e^{-\epsilon_0}\leq 1\leq\frac{1-p}{p}\leq e^{\epsilon_0}$ for any $p\in(0,1/2]$.
Furthermore, we can prove that the \textsl{2RR} satisfies $\left(\alpha,\epsilon(\alpha)\right)$-RDP, where $\epsilon\left(\alpha\right)$ is given by:
\begin{equation}
\epsilon\left(\alpha\right)=\frac{1}{\alpha-1}\log\left(p^{\alpha}(1-p)^{1-\alpha}+p^{1-\alpha}(1-p)^{\alpha}\right),
\end{equation}
where this bound is obtained from the definition of the RDP and also given in~\cite{mironov2017renyi}. This completes the proof of Theorem~\ref{thm:binary_mse}.
\hfill $\blacksquare$

Next we present the following lemma which is useful for bounding the privacy parameter, $\epsilon_0$, parameter of our mechanisms which depend on the binary randomized response.
\begin{lemma}(Privacy parameter)~\label{lemm:eps_0_LDP} For any $v>0$, by setting $p=\frac{1}{2}\left(1-\sqrt{\frac{v^2}{v^2+4}}\right)$, the \textsl{2RR} mechanism with parameter $p$ satisfies $\epsilon_0$-LDP, where $\epsilon_0\leq v$.
\end{lemma} 

\begin{proof}
From Theorem~\ref{thm:binary_mse}, the \textsl{2RR} mechanism with parameter $p<1/2$ is $\epsilon_0$-LDP, where $\epsilon_0=\log\left(\frac{1-p}{p}\right)$. Hence, it is sufficient to prove that $\epsilon_0=\log\left(\frac{1-p}{p}\right)\leq v$ when choosing $p=\frac{1}{2}\left(1-\sqrt{\frac{v^2}{v^2+4}}\right)$ for any $v\geq 0$.

Observe that $1-p=\frac{1}{2}\left(1+\sqrt{\frac{v^2}{v^2+4}}\right)$ when $p=\frac{1}{2}\left(1-\sqrt{\frac{v^2}{v^2+4}}\right)$. Let $f(v)=v-\log\left(\frac{\sqrt{v^2+4}+v}{\sqrt{v^2+4}-v}\right)$. We have that
\begin{equation}
\begin{aligned}
\frac{\partial f}{\partial v} &= 1 -\frac{\sqrt{v^2+4}-v}{\sqrt{v^2+4}+v}\frac{8}{\left(\sqrt{v^2+4}-v\right)^2\sqrt{v^2+4}} \\
&=1 -\frac{8}{(v^2+4-v^2)\sqrt{v^2+4}}\\
&=1-\frac{2}{\sqrt{v^2+4}}\\
&\geq 0\qquad \forall\ v\geq 0.
\end{aligned}
\end{equation}
Hence the function $f(v)$ is a non-decreasing function for all $v\geq 0$. As a result $f(v)\geq f(0)=0$ for all $v\geq 0$. Thus, we have $v\geq \log\left(\frac{1-p}{p}\right)$ for all $v\geq 0$. This completes the proof of Lemma~\ref{lemm:eps_0_LDP}. 
\end{proof}
\section{Proofs of Theorem~\ref{thm:binary_vector_ldp} and Theorem~\ref{thm:binary_vector_shuffle} (Binary vectors)}~\label{app:binary_vector}

In this section, we prove Theorem~\ref{thm:binary_vector_ldp} and Theorem~\ref{thm:binary_vector_shuffle} for the mean of binary vectors in local DP and MMS models, respectively.
\subsection{Communication Bound for Theorem~\ref{thm:binary_vector_ldp} and Theorem~\ref{thm:binary_vector_shuffle}}
Observe that each client sends $s$ messages; each message consists of a pair $\left(a_{ij},y_{ij}\right)$, where $a_{ij}$ is drawn uniformly at random from $\lceil \frac{d}{s}\rceil$ values and $y_{ij}$ is a binary elements. Hence, each message requires $\log\left(\lceil \frac{d}{s}\rceil\right)+1$ bits. As a result the total communication bits per client is given by $s\left(\log\left(\lceil \frac{d}{s}\rceil\right)+1\right)$-bits.

\subsection{Privacy of the local DP model in Theorem~\ref{thm:binary_vector_ldp}}
In the mechanism $\mathcal{R}^{\text{Bin}}_{p,s}$, each client sends $s$ messages of the \textsl{2RR} mechanism $\left(\left(a_{i1},y_{i1}\right),\ldots,\left(a_{is},y_{is}\right)\right)$ with parameter $p=\frac{1}{2}\left(1-\sqrt{\frac{\epsilon_0^2/s^2}{\epsilon_0^2/s^2+4}}\right)$. Hence, from Lemma~\ref{lemm:eps_0_LDP}, each message is $\frac{\epsilon_0}{s}$-LDP. As a results, the total mechanism $\mathcal{R}^{\text{Bin}}_{p,s}$ is $\epsilon_0$-LDP from the composition of the DP mechanisms~\cite{dwork2014algorithmic}.

In addition, we can bound the RDP of the mechanism $\mathcal{R}^{\text{Bin}}_{p,s}$ in the local DP model by using the composition of the RDP (see Lemma~\ref{lemm:compostion_rdp}). From the proof of Theorem~\ref{thm:binary_mse} in Appendix~\ref{app:2rr_mse}, the \textsl{2RR} mechanism is $\left(\alpha,\epsilon\left(\alpha\right)\right)$-RDP, where $\epsilon\left(\alpha\right)$ is bounded by:
\begin{equation}~\label{eqn:rdp_2rr_bound}
\epsilon\left(\alpha\right)=\frac{1}{\alpha-1}\log\left(p^{\alpha}(1-p)^{1-\alpha}+p^{1-\alpha}(1-p)^{\alpha}\right),
\end{equation}
In the mechanism $\mathcal{R}^{\text{Bin}}_{p,s}$, each client sends $s$ messages of the \textsl{2RR} mechanism. Hence, the mechanism $\mathcal{R}^{\text{Bin}}_{p,s}$ is $\left(\alpha,s\epsilon\left(\alpha\right)\right)$-RDP, where $\epsilon\left(\alpha\right)$ is given is~\eqref{eqn:rdp_2rr_bound}. 

\subsection{Privacy of the MMS model in Theorem~\ref{thm:binary_vector_shuffle}}
In the mechanism $\mathcal{R}^{\text{Bin}}_{p,s}$, each client sends $s$ messages of the \textsl{2RR} mechanism $\left(\left(a_{i1},y_{i1}\right),\ldots,\left(a_{is},y_{is}\right)\right)$. We assume that there exist $s$ shuffler, where the $j$-th shuffler randomly permutes the set of messages $\left\{\left(a_{ij},y_{ij}\right):i\in[n]\right\}$ from the $n$ clients. Hence from composition of the RDP, it is sufficient to bound the RDP of shuffling $n$ outputs of the \textsl{2RR} mechanism.

We use the recent results of privacy amplification by shuffling in~\cite{girgis2021renyi-CCS}, which states the following
\begin{lemma}~\cite{girgis2021renyi-CCS}
\label{lem:RDP-CCS21} For any $n\in\mathbb{N}$, $\epsilon_0>0$, and $\alpha$ such that $\alpha^{4}e^{5\epsilon_0}\leq \frac{n}{9}$, the output of shuffling $n$ messages of an $\epsilon_0$-LDP mechanism is $\left(\alpha,\epsilon\left(\alpha\right)\right)$-RDP, where $\epsilon\left(\alpha\right)$ is bounded by:
\begin{equation}
\epsilon\left(\alpha\right)\leq \frac{1}{\alpha-1}\log\left(1+\alpha(\alpha-1)\frac{2\left(e^{\epsilon_0}-1\right)^2}{n}\right)\leq 2\alpha\frac{\left(e^{\epsilon_0}-1\right)^2}{n}
\end{equation} 
\end{lemma}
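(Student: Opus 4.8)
The statement is the Rényi-DP amplification-by-shuffling bound imported from \cite{girgis2021renyi-CCS}; the plan is to reproduce its proof along the lines of the ``privacy blanket''/``clones'' methodology (cf.\ also \cite{feldman2022stronger}). First I would fix an $\epsilon_0$-LDP local randomizer $\calR:\calX\to\calY$ and a worst-case pair of neighboring datasets $D=(d_1,d_2,\dots,d_n)$ and $D'=(d_1',d_2,\dots,d_n)$ differing only in the first record, and let $P$ and $Q$ be the distributions of the shuffler's output on $D$ and $D'$; since the permutation erases order, $P$ (resp.\ $Q$) is just the law of the multiset $\{\calR(d_i)\}_{i=1}^n$. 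The key structural input is that $\epsilon_0$-LDP forces a mixture decomposition: there is a fixed ``blanket'' distribution $\omega$ on $\calY$ (e.g.\ $\omega\propto\min_x \calR(x)$) such that $\calR(x)=e^{-\epsilon_0}\,\omega+(1-e^{-\epsilon_0})\,\mu_x$ for every input $x$, with $\mu_x$ an input-dependent residual distribution. Consequently each of the $n-1$ shared messages is, independently, a pure blanket sample with probability $e^{-\epsilon_0}$, and so is client~$1$'s message.

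Next I would peel off everything that is identical under $P$ and $Q$. Condition on the set of indices in $\{2,\dots,n\}$ whose message came from the blanket --- call its size $C$, so $C\sim\mathrm{Bin}(n-1,e^{-\epsilon_0})$ --- and on the realized values of the non-blanket messages among $\{2,\dots,n\}$. By the data-processing inequality for Rényi divergence it suffices to bound $D_\alpha$ between the two conditional laws, and after also discarding the (common) non-blanket messages this compares ``$C$ i.i.d.\ $\omega$-draws together with one draw from $e^{-\epsilon_0}\omega+(1-e^{-\epsilon_0})\mu_{d_1}$'' against the same object with $\mu_{d_1'}$ in place of $\mu_{d_1}$. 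Conditioning once more on whether client~$1$'s own draw is a blanket draw (probability $e^{-\epsilon_0}$, in which case the two laws are \emph{identical}), the problem collapses to a one-dimensional one: a sufficient statistic for the remaining comparison is the occupancy count of a generic blanket atom, which is (conditionally) $\mathrm{Bin}(C,\cdot)$ possibly incremented by an independent Bernoulli whose bias differs by $O((e^{\epsilon_0}-1)/C)$ between the $\mu_{d_1}$ and $\mu_{d_1'}$ worlds.

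The core computation is then to bound this reduced Rényi divergence. Writing $P/Q=1+\Delta$ with $\bbE_Q[\Delta]=0$, I would use the expansion $\bbE_Q[(1+\Delta)^\alpha]\le 1+\binom{\alpha}{2}\bbE_Q[\Delta^2]+(\text{higher-order moments})$; the quadratic term is of order $(e^{\epsilon_0}-1)^2/n$ because the atom-occupancy count has variance $\Theta(C)=\Theta(n)$ on the typical event while the injected perturbation has magnitude $O(1/C)$, and averaging over the binomial fluctuations of $C$ preserves this order. The quantitative hypothesis $\alpha^4e^{5\epsilon_0}\le n/9$ is exactly what makes two things work: it forces $C$ to concentrate around $e^{-\epsilon_0}n$ tightly enough that $1/C$ may be replaced by $1/(e^{-\epsilon_0}n)$ up to lower-order error, and it guarantees that the cubic and higher terms of the $(1+\Delta)^\alpha$ expansion are dominated by the quadratic one. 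Collecting everything yields $D_\alpha(P\|Q)\le \frac{1}{\alpha-1}\log\!\big(1+\alpha(\alpha-1)\tfrac{2(e^{\epsilon_0}-1)^2}{n}\big)$, and then $\log(1+u)\le u$ gives the simplified bound $D_\alpha(P\|Q)\le 2\alpha\,(e^{\epsilon_0}-1)^2/n$.

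The hardest part will be the reduction in the second step: justifying each conditioning/post-processing move and, in particular, handling the randomness of the clone count $C$ rather than pretending it equals its mean --- this is where the side condition on $(\alpha,\epsilon_0,n)$ gets consumed --- together with the not-entirely-routine bookkeeping of the higher-order moment terms in the last step. Since this lemma is used here only as a black box, in the present paper one may simply invoke \cite{girgis2021renyi-CCS} and proceed; the sketch above is how I would re-derive it from scratch.
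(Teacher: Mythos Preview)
The paper does not prove this lemma at all; it is imported verbatim from \cite{girgis2021renyi-CCS} and used as a black box, exactly as you note in your final paragraph. So there is no ``paper's own proof'' to compare your proposal against, and your instinct to simply cite the reference and move on is the correct one for the purposes of the present paper.

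As for the sketch itself, it is a faithful high-level outline of the blanket/clones methodology that underlies \cite{girgis2021renyi-CCS} (and the closely related \cite{feldman2022stronger}): the mixture decomposition $\calR(x)=e^{-\epsilon_0}\omega+(1-e^{-\epsilon_0})\mu_x$, the conditioning on the random number $C$ of blanket draws, the post-processing reduction to a one-dimensional count, and the moment expansion $\bbE_Q[(1+\Delta)^\alpha]$ are all the right moves, and the final $\log(1+u)\le u$ step is exactly how the second inequality in the statement is obtained. One minor imprecision: your description of the reduced one-dimensional problem (``occupancy count of a generic blanket atom \dots\ incremented by an independent Bernoulli whose bias differs by $O((e^{\epsilon_0}-1)/C)$'') is not quite how the reduction is carried out in \cite{girgis2021renyi-CCS}; there the worst case is identified with a specific canonical local randomizer (essentially a ternary variant of randomized response), and the resulting comparison is between $A+\mathrm{Bin}(C,1/2)$ and $1-A+\mathrm{Bin}(C,1/2)$ for a fixed bit $A$, not a small perturbation of a Bernoulli bias. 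This does not affect the order of the final bound, but if you were to write the argument out in full you would want to follow the canonical-mechanism reduction rather than the informal ``generic atom'' picture. You are also right that the side condition $\alpha^4 e^{5\epsilon_0}\le n/9$ is consumed in controlling both the tail of $C$ and the higher-order terms of the expansion; that bookkeeping is indeed the most delicate part.
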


Recently~\cite{feldman2022stronger} improved the dependence on
$\epsilon_0$ of the result in ~\cite{girgis2021renyi-CCS} by showing
the following.
\begin{lemma}~\cite{feldman2022stronger}[Corollary 4.3]~\label{lemm:rdp_bound_feldman} For any $n\in\mathbb{N}$, $\epsilon_0>0$, and $\alpha\leq\frac{n}{16\epsilon_0e^{\epsilon_0}}$, the output of shuffling $n$ messages of an $\epsilon_0$-LDP mechanism is $\left(\alpha,\epsilon\left(\alpha\right)\right)$-RDP, where $\epsilon\left(\alpha\right)$ is bounded by:
\begin{equation}
\epsilon\left(\alpha\right)\leq \alpha \frac{c\left(e^{\epsilon_0}-1\right)^2}{ne^{\epsilon_0}},
\end{equation} 
for some universal constant $c$. 
\end{lemma}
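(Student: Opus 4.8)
I expect this lemma is quoted verbatim from \cite{feldman2022stronger} and used as a black box here, so a ``proof'' means sketching their \emph{clones} / privacy-blanket analysis. The plan is, first, to reduce to a scalar counting problem. Fix a pair of neighbouring input tuples differing only in user $i$. By the post-processing inequality for R\'enyi divergence it suffices to bound $D_\alpha$ between the two shuffled message multisets. Because an $\epsilon_0$-LDP randomiser $\mathcal{R}$ satisfies $\mathcal{R}(x)(\theta)\ge e^{-\epsilon_0}\sup_{x'}\mathcal{R}(x')(\theta)$ pointwise, there is a ``blanket'' distribution $\omega$ and a mass $\gamma\ge e^{-\epsilon_0}$ with $\mathcal{R}(x)=\gamma\,\omega+(1-\gamma)\,\nu_x$ for every input $x$. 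I would rewrite each user's output as ``toss a $\gamma$-coin, then sample from $\omega$ or from $\nu_x$'', keep only the indicator of which drawn messages are blanket messages together with a coarse two-valued ``consistent with $x_i^{(0)}$ vs.\ with $x_i^{(1)}$'' label on the informative ones, and observe that the shuffled view then collapses (up to post-processing) to a count statistic --- essentially the number of blanket messages plus the location of the differing user's informative message among the exchangeable slots.

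Second, I would bound the divergence conditionally on $C\sim\mathrm{Bin}(n-1,\gamma)$, the number of blanket draws among the $n-1$ common users. Conditioned on $C$, the two hypotheses become two explicit laws on a small integer support whose pointwise likelihood ratio is trapped in $[e^{-\epsilon_0},e^{\epsilon_0}]$ and which differ only through the single differing user; a direct Bernoulli/multinomial moment computation then yields a conditional R\'enyi divergence $D_\alpha^{(C)}=O\big(\tfrac{\alpha(e^{\epsilon_0}-1)^2}{C\,e^{\epsilon_0}}\big)$, valid when $\alpha$ is small relative to $C$. Third, I would de-condition: revealing $C$ to the adversary only strengthens it, so $e^{(\alpha-1)\epsilon(\alpha)}\le\mathbb{E}_C\big[e^{(\alpha-1)D_\alpha^{(C)}}\big]$; since $C$ concentrates around $(n-1)\gamma\asymp n e^{-\epsilon_0}$, the integrand is $\approx 1+\tfrac{c\,\alpha(e^{\epsilon_0}-1)^2}{n\,e^{\epsilon_0}}$ on the bulk, and the hypothesis $\alpha\le n/(16\epsilon_0 e^{\epsilon_0})$ is used both to keep $D_\alpha^{(C)}$ finite and small on the bulk and --- via a Chernoff tail for the binomial played against a crude worst-case bound on $D_\alpha^{(C)}$ --- to make the event that $C$ falls well below its mean contribute a negligible amount. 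Exponentiating back gives $\epsilon(\alpha)\le \alpha\,\tfrac{c(e^{\epsilon_0}-1)^2}{n\,e^{\epsilon_0}}$.

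The hard part will be precisely this last de-conditioning. On the rare event that $C$ drops well below its mean the conditional divergence blows up --- it is literally infinite once $\alpha>C$ --- so one must argue that $\Pr[C<\alpha]$ is exponentially small in $n$ and that it remains harmless after the $e^{(\alpha-1)(\cdot)}$ weighting; the tension between taking $\alpha$ large enough for the bound to be of the right order and small enough that $\{C<\alpha\}$ is negligible is exactly what pins the threshold $\alpha\le n/(16\epsilon_0 e^{\epsilon_0})$. By comparison, the conditional multinomial-moment estimate in the middle step is technical but mechanical once the blanket reduction is in place, and the inequality $\gamma\ge e^{-\epsilon_0}$ (rather than equality) only helps, since more blanket mass means more clones masking the differing user.
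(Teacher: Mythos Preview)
You correctly anticipated that the paper does not prove this lemma at all: it is quoted verbatim from \cite{feldman2022stronger} (their Corollary~4.3) and invoked as a black box, with the very next line already applying it to the \textsl{2RR} messages. Your sketch is a faithful high-level outline of the clones/privacy-blanket argument in that reference --- the mixture decomposition with mass $\gamma\asymp e^{-\epsilon_0}$, the post-processing reduction to a count, the conditional R\'enyi bound given the number $C$ of clones, and the Chernoff de-conditioning that forces the threshold $\alpha\lesssim n/(\epsilon_0 e^{\epsilon_0})$ --- so it is correct, but strictly more than the present paper requires.
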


From Theorem~\ref{thm:binary_mse}, each message of the client is $\epsilon_0=\log\left(\frac{1-p}{p}\right)$-LDP. Hence, from Lemma~\ref{lemm:rdp_bound_feldman}, the output of one shuffler is $\left(\alpha,\tilde{\epsilon}\left(\alpha\right)\right)$-RDP, where $\tilde{\epsilon}\left(\alpha\right)\leq c\alpha\frac{(1-2p)^2}{np(1-p)}$. Thus, from composition, the output of the $s$ shufflers is $\left(\alpha,\epsilon\left(\alpha\right)\right)$-RDP, where $\epsilon\left(\alpha\right)$ is bounded by:
\begin{equation}~\label{eqn:rdp_binary_vector}
 \epsilon\left(\alpha\right)\leq c\alpha\frac{s(1-2p)^2}{np(1-p)}.
\end{equation} 
Observe that ~\eqref{eqn:rdp_binary_vector} gives a closed form bound on the RDP of the mechanism $\mathcal{R}^{\text{Bin}}_{p,s}$ in the shuffle model. However, we can numerically provide better bound on the RDP of the shuffle  model using~\cite{feldman2022stronger}. Now, we use Lemmas~\ref{lem:RDP_DP} to convert from RDP to central DP. For given $\delta>0$, shuffling the outputs of $n$ mechanisms $\mathcal{R}^{\text{Bin}}_{p,s}$ is $\left(\epsilon,\delta\right)$-DP, where $\epsilon$ is bounded by
\begin{equation}~\label{eqn:eps_delta_binary}
\epsilon\leq 2\sqrt{\frac{s(1-2p)^2\log(1/\delta)}{np(1-p)}}.
\end{equation}
By setting $p=\frac{1}{2}\left(1-\sqrt{\frac{v^2}{v^2+4}}\right)$ and $v^2=\frac{n\epsilon^2}{4s\log(1/\delta)}$, we can easily show that~\eqref{eqn:eps_delta_binary} is satisfied, and hence, the output of the shufflers is $\left(\epsilon,\delta\right)$-DP.  

\subsection{MSE bound of the local DP model (Theorem~\ref{thm:binary_vector_ldp}) and shuffle model (Theorem~\ref{thm:binary_vector_shuffle})}

For ease of analysis, we assume in the remaining part that $\frac{d}{s}$ is integer, otherwise, we can add dummy $s\lceil \frac{d}{s}\rceil-d$ zeros to the vector $\mathbf{b}_i$ to make the size of the vector divisible by $s$.

Now, we show that the output of the mechanism $\mathcal{R}^{\text{Bin}}_{p,s}$ is unbiased estimate of $\mathbf{b}_i$. Let $\mathcal{Y}_i$ be the output of Algorithm~\ref{algo:binary_vector} and $a=\frac{d}{s}$. We can represent the output $\mathcal{Y}_i$ as a vector of dimension $d$ that has $s$ non-zero elements $\mathbf{y}_i = [\mathbf{y}_{i1},\ldots,\mathbf{y}_{is}]$, where $\mathbf{y}_{ij}=a\mathcal{R}_p^{\textsl{2RR}}\left(\mathbf{b}_i[a_{ij}]\right)\mathbf{e}_{a_{ij}}$ is a sub-vector of dimensions $a$ that has only one non-zero element. Then, we have 
\begin{equation}
\begin{aligned}
\mathbb{E}\left[\mathbf{y}_{ij}\right]&=\frac{1}{a} \sum_{a_{ij}=(j-1)a+1}^{ja} a\mathbf{e}_{a_{ij}}\mathbb{E}\left[\mathcal{R}_p^{\textsl{2RR}}\left(\mathbf{b}_i[a_{ij}]\right)\right]\\
&\stackrel{\text{(a)}}{=}\sum_{a_{ij}=(j-1)a+1}^{ja}\mathbf{e}_{a_{ij}}\mathbf{b}_i[a_{ij}]\\
&=\mathbf{b}_i[(j-1)a+1:ja],
\end{aligned}
\end{equation}
where $\mathbf{e}_j$ denotes the $j$th basis vector and (a) follows from the fact that the mechanism $\mathcal{R}_p^{\textsl{2RR}}$ shown in Theorem~\ref{thm:binary_mse} is unbiased. $\mathbf{b}_i\left[l:m\right]$ denotes the values of the coordinates $l,l+1,\ldots,m$. As a result, we have that $\mathbb{E}\left[\mathbf{y}_i\right] = [\mathbb{E}\left[\mathbf{y}_{i1}\right],\ldots,\mathbb{E}\left[\mathbf{y}_{is}\right]]=\mathbf{b}_i$.
Hence, Algorithm~\ref{algo:binary_vector} is an unbiased estimate of the input $\mathbf{b}_i$. Furthermore, the variance of Algorithm~\ref{algo:binary_vector} is bounded by:
\begin{equation}~\label{eqn:var_bound_R}
\begin{aligned}
\mathbb{E}\left[\|\mathbf{y}_i-\mathbf{b}_i\|^2_2\right]&=\sum_{j=1}^{s}\mathbb{E}\left[\|\mathbf{y}_{ij}-\mathbf{b}_i[(j-1)a+1:ja]\|^2_2\right]\\
&=\sum_{j=1}^{s}\frac{1}{a}\sum_{a_{ij}=(j-1)a+1}^{ja}\mathbb{E}\left[\|a\mathbf{e}_{a_{ij}} \mathcal{R}_p^{\textsl{2RR}}\left(\mathbf{b}_i[a_{ij}]\right)-\mathbf{b}_i[(j-1)a+1:ja]\|^2\right]\\
&=\frac{1}{a}\sum_{j=1}^{s}\sum_{a_{ij}=(j-1)a+1}^{ja}\mathbb{E}\left[\|\mathbf{e}_{a_{ij}} a\mathcal{R}_p^{\textsl{2RR}}\left(\mathbf{b}_i[a_{ij}]\right)-\mathbf{e}_{a_{ij}} a\mathbf{b}_i[a_{ij}]+\mathbf{e}_{a_{ij}} a\mathbf{b}_i[a_{ij}]-\mathbf{b}_i[(j-1)a+1:ja]\|^2\right]\\
&\stackrel{\text{(a)}}{=}\frac{1}{a}\sum_{j=1}^{s}\sum_{a_{ij}=(j-1)a+1}^{ja}\mathbb{E}\left[\|\mathbf{e}_{a_{ij}} a\mathcal{R}_p^{\text{Bin}}\left(\mathbf{b}_i[a_{ij}]\right)-\mathbf{e}_{a_{ij}} a\mathbf{b}_i[a_{ij}]\|^2\right]\\
&\qquad+\|\mathbf{e}_{a_{ij}} a\mathbf{b}_i[a_{ij}]-\mathbf{b}_i[(j-1)a+1:ja]\|^2\\
&\stackrel{\text{(b)}}{=}\frac{sa^2p(1-p)}{(1-2p)^2}+\frac{1}{a}\sum_{j=1}^{d}\left((a-1)^2+(a-1)\right)\mathbf{b}_{i}^2[j]\\
&=\frac{sa^2p(1-p)}{(1-2p)^2}+\frac{(a-1)\left((a-1)+1\right)}{a}\sum_{j=1}^{d}\mathbf{b}_{i}^2[j]\\
&=\frac{a^2 s p(1-p)}{(1-2p)^2}+(a-1)\|\mathbf{b}_{i}\|^2\\
&\stackrel{\text{(c)}}{\leq} \frac{sa^2p(1-p)}{(1-2p)^2}+(a-1)d,
\end{aligned}
\end{equation}
where (a) follows from the fact that the 2RR mechanism $\mathcal{R}_p^{\textsl{2RR}}$ is unbiased and (b) from the variance of the 2RR mechanism $\mathcal{R}_p^{\textsl{2RR}}$ (see Theorem~\ref{thm:binary_mse}). Step (c) follows from the fact that $\|\mathbf{b}_i\|^2\leq d$. Hence, we can bound the MSE in the local DP model and the shuffle model as follows.

\textbf{MSE for the local DP model (Theorem~\ref{thm:binary_vector_ldp}):}
Observe that the output of the server $\hat{b}=\mathcal{A}^{\text{Bin}}\left(\mathcal{Y}_1,\ldots,\mathcal{Y}_n\right)$ can be represented as $\hat{b}=\frac{1}{n}\sum_{i=1}^{n}\mathbf{y}_i$, where $\mathbf{y}_i$ is the sparse representation of the $i$-th client private message discussed above. By setting $p=\frac{1}{2}\left(1-\sqrt{\frac{v^2}{v^2+4}}\right)$ and $v^2=\epsilon_0^2/s^2$, we have that:
\begin{equation}
\begin{aligned}
\mathsf{MSE}^{\text{Bin}}_{\text{ldp}}&=\sup_{\lbrace \mathbf{b}_i\in\lbrace 0,1\rbrace^d\rbrace}\mathbb{E}\left[\|\hat{\mathbf{b}}-\overline{\mathbf{b}}\|_2^2\right]\\
&\stackrel{\text{(a)}}{=}\frac{1}{n^2}\sum_{i=1}^{n}\mathbb{E}\left[\|\mathbf{y}_i-\mathbf{b}_i\|^2_2\right]\\
&\stackrel{\text{(b)}}{\leq} \frac{d(a-1)}{n}+ a^2\frac{sp(1-p)}{n(1-2p)^2}\\
&=\frac{d(\frac{d}{s}-1)}{n}+ d^2\frac{p(1-p)}{sn(1-2p)^2}\\
&\stackrel{\text{(c)}}{=}\frac{d^2}{n}\left( \left(\frac{1}{s}-\frac{1}{d}\right)+ \frac{s}{\epsilon_0^2}\right)\\
&=\mathcal{O}\left(\frac{d^2}{n}\max\left\{\frac{1}{s},\frac{s}{\epsilon_0^2}\right\}\right),
\end{aligned}
\end{equation}
where (a) follows from the i.i.d of the random mechanisms $\mathcal{R}^{\text{Bin}}_{p,s}$. Step (b) follows from the variance of the mechanism $\mathcal{R}^{\text{Bin}}_{p,s}$ in~\eqref{eqn:var_bound_R}. Step (c) follows from substituting $p=\frac{1}{2}\left(1-\sqrt{\frac{v^2}{v^2+4}}\right)$ and $v^2=\epsilon_0^2/s^2$. This completes the proof of Theorem~\ref{thm:binary_vector_ldp}.

\textbf{MSE for the MMS model (Theorem~\ref{thm:binary_vector_shuffle}):}
Observe that the output of the server $\hat{b}=\mathcal{A}^{\text{Bin}}\left(\mathcal{Y}_1,\ldots,\mathcal{Y}_n\right)$ can be represented as $\hat{b}=\frac{1}{n}\sum_{i=1}^{n}\mathbf{y}_i$, where $\mathbf{y}_i$ is the sparse representation of the $i$-th client private message discussed above. By setting $p=\frac{1}{2}\left(1-\sqrt{\frac{v^2}{v^2+4}}\right)$ and $v^2=\frac{n\epsilon^2}{4s\log(1/\delta)}$, we have that:
\begin{equation}
\begin{aligned}
\mathsf{MSE}^{\text{Bin}}_{\text{shuffle}}&=\sup_{\lbrace \mathbf{b}_i\in\lbrace 0,1\rbrace^d\rbrace}\mathbb{E}\left[\|\hat{\mathbf{b}}-\overline{\mathbf{b}}\|_2^2\right]\\
&\stackrel{\text{(a)}}{=}\frac{1}{n^2}\sum_{i=1}^{n}\mathbb{E}\left[\|\mathbf{y}_i-\mathbf{b}_i\|^2_2\right]\\
&\stackrel{\text{(b)}}{\leq} \frac{d(a-1)}{n}+ a^2\frac{sp(1-p)}{n(1-2p)^2}\\
&=\frac{d(\frac{d}{s}-1)}{n}+ d^2\frac{p(1-p)}{sn(1-2p)^2}\\
&\stackrel{\text{(c)}}{=}\frac{d^2}{n^2}\left( n\left(\frac{1}{s}-\frac{1}{d}\right)+ \frac{4\log(1/\delta)}{\epsilon^2}\right)\\
&=\mathcal{O}\left(\frac{d^2}{n^2}\max\left\{n\left(\frac{1}{s}-\frac{1}{d}\right),\frac{\log(1/\delta)}{\epsilon^2}\right\}\right),
\end{aligned}
\end{equation}
where (a) follows from the i.i.d of the random mechanisms $\mathcal{R}^{\text{Bin}}_{p,s}$. Step (b) follows from the variance of the mechanism $\mathcal{R}^{\text{Bin}}_{p,s}$ in~\eqref{eqn:var_bound_R}. Step (c) follows from substituting $p=\frac{1}{2}\left(1-\sqrt{\frac{v^2}{v^2+4}}\right)$ and $v^2=\frac{n\epsilon^2}{4s\log(1/\delta)}$. This completes the proof of Theorem~\ref{thm:binary_vector_shuffle}.

\section{Properties of Quantization scheme}~\label{app:quantization}
In this section, we prove some properties of the quantization scheme proposed in Section~\ref{sec:l_inf_norm} for vector $\mathbf{z}_i\in[0,1]^d$. We first prove some properties for a scalar case when $x\in[0,1]$, and then, the results of the bounded $\ell_{\infty}$ will be obtained directly from repeating the scalar case on each coordinate.

Let $x\in[0,1]$ and $x^{(k)}=\sum_{l=1}^{s} b_{l} 2^{-l}$ for $k\geq 1$, where $x^{(0)}=0$ and $b_{k} = \lfloor 2^{k}(x-x^{k-1}) \rfloor$. For given $m\geq 1$, we represent $x$ using $m$ bits as follows: $\tilde{x}^{(m)}=\sum_{k=1}^{m-1}b_{k} 2^{-k}+u 2^{-m+1}$, where $u =\mathsf{Bern}\left(2^{m-1}(x-x^{(m-1)}[j])\right)$. This estimator needs only $m$ bits of representation.
\begin{lemma}~\label{lemm:quant_scalar} For given $x\in[0,1]$, let $\tilde{x}^{(m)}$ be the quantization of $x$ presented above. We have that $\tilde{x}^{(m)}$ is an unbiased estimate of $x$ with bounded MSE:
\begin{equation}
\mathsf{MSE}^{\text{quan}}_{\text{scalar}}=\sup_{x\in[0,1]}\mathbb{E}\left[\|x-\tilde{x}^{(m)}\|_2^2\right]\leq \frac{1}{4^{m}},
\end{equation}
where the expectation is taken over the randomness in the quantization scheme.
\end{lemma}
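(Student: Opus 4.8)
The plan is to reduce the claim to computing the variance of a single Bernoulli random variable. The first step is to record the standard binary-expansion invariant: by induction on $k\ge 0$,
\[
0\le x-x^{(k)}<2^{-k}.
\]
The base case is $0\le x<1$; the corner case $x=1$ I would dispatch separately, since then every $b_k=1$, $u=0$, and $\tilde{x}^{(m)}=1$ deterministically, so the lemma is trivial. For the inductive step, $b_{k+1}=\lfloor 2^{k+1}(x-x^{(k)})\rfloor$ is the integer part of a number in $[0,2)$, hence $b_{k+1}\in\{0,1\}$, and $x-x^{(k+1)}=2^{-(k+1)}\bigl(2^{k+1}(x-x^{(k)})-b_{k+1}\bigr)$ equals $2^{-(k+1)}$ times the fractional part of $2^{k+1}(x-x^{(k)})$, which lies in $[0,1)$. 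This invariant simultaneously certifies that each $b_k$ is a genuine bit and that the bias $p:=2^{m-1}(x-x^{(m-1)})$ of $u$ lies in $[0,1)$, so the estimator $\tilde{x}^{(m)}=x^{(m-1)}+u\,2^{-m+1}$ is well-defined.

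Unbiasedness then follows immediately from linearity of expectation, since $x^{(m-1)}$ is deterministic and $\mathbb{E}[u]=p$:
\[
\mathbb{E}\bigl[\tilde{x}^{(m)}\bigr]=x^{(m-1)}+2^{-m+1}\cdot 2^{m-1}\bigl(x-x^{(m-1)}\bigr)=x.
\]
Because an unbiased estimator's mean-squared error equals its variance and the only randomness in $\tilde{x}^{(m)}$ is $u$, we get
\[
\mathbb{E}\bigl[(x-\tilde{x}^{(m)})^2\bigr]=\mathrm{Var}\bigl(u\,2^{-m+1}\bigr)=4^{-(m-1)}\,p(1-p)\le \frac{4^{-(m-1)}}{4}=4^{-m},
\]
using $p(1-p)\le \tfrac14$ for any $p\in[0,1]$. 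Since this bound is uniform in $x$, taking the supremum over $x\in[0,1]$ proves the lemma.

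I do not expect any genuine obstacle here: essentially everything is forced once the invariant $0\le x-x^{(k)}<2^{-k}$ is in hand, and the only place demanding a little care is getting that induction (and the boundary case $x=1$) right, after which the result is a one-line computation with the Bernoulli variance. The subsequent bounded-$\ell_\infty$ statement for $\mathbf{z}_i\in[0,1]^d$ is then obtained by applying this scalar bound coordinatewise and summing the $d$ per-coordinate errors, yielding the factor $d$.
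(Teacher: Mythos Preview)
Your proof is correct and follows essentially the same route as the paper: both reduce the MSE to the variance of the single Bernoulli $u$ scaled by $2^{-(m-1)}$ and invoke $p(1-p)\le\tfrac14$; your added induction $0\le x-x^{(k)}<2^{-k}$ is extra rigor the paper omits. One harmless slip in your corner case $x=1$: if every $b_k=1$ then the bias of $u$ is $2^{m-1}\bigl(1-(1-2^{-(m-1)})\bigr)=1$, so $u=1$ (not $0$) almost surely---but $\tilde{x}^{(m)}=1$ either way, so the conclusion stands.
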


\begin{proof}
First, we show that $\tilde{x}^{(m)}$ is an unbiased estimate of $x$:
\begin{equation}
\begin{aligned}
\mathbb{E}\left[\tilde{x}^{m}\right]&=\sum_{k=1}^{m-1}b_{k}2^{-k}+\mathbb{E}\left[u\right]2^{-m+1}\\
&\stackrel{\text{(a)}}{=}\sum_{k=1}^{m-1}b_{k}2^{-k}+ 2^{m-1}(x-x^{(m-1)})2^{-m+1}\\
&=x_i,
\end{aligned}
\end{equation}
where step (a) is obtained from the fact that $u$ is a Bernoulli random variable with bias $p=2^{m-1}(x-x^{(m-1)})$. We show that the estimator $\tilde{x}^{(m)}$ has a bounded MSE by $4^{-m}$:
\begin{equation}~\label{eqn:non_private_variace}
\begin{aligned}
\mathsf{MSE}^{\text{quan}}_{\text{scalar}}&=\sup_{x\in[0,1]}\mathbb{E}\left[\|x-\tilde{x}^{(m)}\|_2^2\right]\\
&=\sup_{x\in[0,1]}\mathbb{E}\left[\|x-x^{(m-1)}-u2^{-m+1}\|^2\right]\\
&=\sup_{x\in[0,1]}4^{-(m-1)}\mathbb{E}\left[\|2^{-(m-1)}(x-x^{(m-1)})-u\|^2\right]\\
&\stackrel{\text{(a)}}{\leq}\frac{1}{4^{m}},
\end{aligned}
\end{equation}
where the inequality (a) is obtained from the fact that $u$ is a Bernoulli random variable and hence has a variance less that $1/4$. This completes the proof of Lemma~\ref{lemm:quant_scalar}. 
\end{proof}
\begin{corollary}~\label{cor:quant_vector} For given a vector $\mathbf{z}_i\in[0,1]^{d}$, let $\tilde{\mathbf{z}}_i^{(m)}$ be the quantization of $\mathbf{z}_i$ by applying the above scalar quantization scheme on each coordinate $\mathbf{z}_i[j]$ for $j\in[d]$. Then, $\tilde{\mathbf{z}}_i^{(m)}$ is an ubiased estimate of $\mathbf{z}_i$ with bounded MSE:
\begin{equation}
\mathsf{MSE}^{\text{quan}}_{\text{vector}}=\sup_{\mathbf{z}_i\in[0,1]^d}\mathbb{E}\left[\|\mathbf{z}_i-\tilde{\mathbf{z}}^{(m)}_i\|_2^2\right]\leq \frac{d}{4^{m}},
\end{equation}
where the expectation is taken over the randomness in the quantization scheme.
\end{corollary}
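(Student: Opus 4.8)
The plan is to reduce the vector statement to the scalar statement of Lemma~\ref{lemm:quant_scalar} coordinate by coordinate, exploiting the fact that the quantization scheme is applied \emph{independently} to each coordinate $\mathbf{z}_i[j]$, $j\in[d]$. So the only real ingredients are (i) linearity of expectation for unbiasedness, (ii) additivity of the squared $\ell_2$-norm across coordinates, and (iii) the per-coordinate bound $4^{-m}$ already established.

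First I would record unbiasedness: since $\tilde{\mathbf{z}}_i^{(m)}[j]$ is exactly the scalar quantizer $\tilde{x}^{(m)}$ applied to $x=\mathbf{z}_i[j]$, Lemma~\ref{lemm:quant_scalar} gives $\mathbb{E}\big[\tilde{\mathbf{z}}_i^{(m)}[j]\big]=\mathbf{z}_i[j]$ for every $j$, hence $\mathbb{E}\big[\tilde{\mathbf{z}}_i^{(m)}\big]=\mathbf{z}_i$ by stacking the coordinates. Next, for the MSE I would write
\begin{equation*}
\mathbb{E}\left[\|\mathbf{z}_i-\tilde{\mathbf{z}}^{(m)}_i\|_2^2\right]
=\sum_{j=1}^{d}\mathbb{E}\left[\big(\mathbf{z}_i[j]-\tilde{\mathbf{z}}^{(m)}_i[j]\big)^2\right]
\leq \sum_{j=1}^{d}\frac{1}{4^{m}}=\frac{d}{4^{m}},
\end{equation*}
where the inequality is the scalar bound from Lemma~\ref{lemm:quant_scalar} applied to each coordinate (the bound there is already a supremum over $x\in[0,1]$, so it holds for whatever value $\mathbf{z}_i[j]$ takes). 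Since this estimate does not depend on the particular $\mathbf{z}_i\in[0,1]^d$, taking the supremum over $\mathbf{z}_i\in[0,1]^d$ leaves the right-hand side unchanged, yielding $\mathsf{MSE}^{\text{quan}}_{\text{vector}}\leq d/4^{m}$.

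There is essentially no obstacle here; the statement is a routine "tensorization" of the scalar lemma. The only point worth stating explicitly — so the reader sees why no cross terms appear — is that the squared $\ell_2$-error decomposes as a sum of per-coordinate squared errors, so independence across coordinates is not even needed for the MSE bound (it would only be needed if one wanted to control higher moments or tail behavior). I would also note that the same decomposition shows the randomness used by the $d$ scalar quantizers can be independent without affecting either claim.
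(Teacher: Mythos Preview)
Your proposal is correct and is exactly the intended argument: the paper states the corollary immediately after Lemma~\ref{lemm:quant_scalar} without a separate proof, treating it as the obvious coordinate-wise application of the scalar bound, which is precisely the decomposition you wrote out.
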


\section{Proofs of Theorem~\ref{thm:l_inf_vector_ldp} and Theorem~\ref{thm:l_inf_vector_shuffle} (Bounded $\ell_{\infty}$-norm vectors)}~\label{app:l_inf_vector}

In this section, we prove Theorem~\ref{thm:l_inf_vector_ldp} and Theorem~\ref{thm:l_inf_vector_shuffle} for the mean of bounded $\ell_{\infty}$-norm vectors in local DP and shuffle models, respectively.
\subsection{Communication Bound for Theorem~\ref{thm:l_inf_vector_ldp} and Theorem~\ref{thm:l_inf_vector_shuffle}}

In the mechanism $\mathcal{R}^{\ell_{\infty}}_{v,m,s}$, the client sends $m$ binary vectors $\mathbf{b}_i^{(1)},\ldots,\mathbf{b}_i^{(m-1)},\mathbf{u}_i$ using the private mechanism $\mathcal{R}_{p,s}^{\text{Bin}}$. From Theorem~\ref{thm:binary_vector_ldp} and Theorem~\ref{thm:binary_vector_shuffle}, the private mechanism $\mathcal{R}_{p,s}^{\text{Bin}}$ needs   
$\log\left(\lceil \frac{d}{s}\rceil\right)+1$ bits for communication. Thus, the total communication of the private mechanism $\mathcal{R}^{\ell_{\infty}}_{v,m,s}$ is $ms\left(\log\left(\lceil \frac{d}{s}\rceil\right)+1\right)$-bits.

\subsection{Privacy of the local DP model in Theorem~\ref{thm:l_inf_vector_ldp}}
In the mechanism $\mathcal{R}^{\ell_{\infty}}_{v,p,s}$, each client sends $m$ messages from the private mechanism $\mathcal{R}_{p,s}^{\text{Bin}}$ as follows: $\mathcal{R}^{\text{Bin}}_{p_1,s}(\mathbf{b}_i^{(1)}),\ldots,\mathcal{R}^{\text{Bin}}_{p_{m-1},s}(\mathbf{b}_i^{(m-1)}),\mathcal{R}^{\text{Bin}}_{p_m,s}(\mathbf{u}_i)$, where $p_i=\frac{1}{2}\left(1\sqrt{\frac{v_k^2/s^2}{v_k^2/s^2+4}}\right)$ and $v_k=\frac{4^{\frac{-k}{3}}}{\left(\sum_{l=1}^{m-1}4^{\frac{-l}{3}}+4^{\frac{-m+1}{3}}\right)}v$ for $k\in[m-1]$ and $v_m=\frac{4^{\frac{-m+1}{3}}}{\left(\sum_{l=1}^{m-1}4^{\frac{-l}{3}}+4^{\frac{-m+1}{3}}\right)}v$. Hence, from Theorem~\ref{thm:binary_vector_ldp}, the $k$-th message $\mathcal{R}^{\text{Bin}}_{p_k,s}(\mathbf{b}_i^{(k)})$ is $\epsilon_0^{(k)}$-LDP, where $\epsilon_0^{(k)}=v_k$ for $k\in[m]$. As a results, the total mechanism $\mathcal{R}^{\ell_{\infty}}_{v,m,s}$ is bounded by:
\begin{equation}
\begin{aligned}
\epsilon_0&=\sum_{k=1}^{m}\epsilon_0^{(k)} =\sum_{k=1}^{m}v_{k}=\sum_{k=1}^{m-1}\left\{\frac{4^{\frac{-k}{3}}}{\left(\sum_{l=1}^{m-1}4^{\frac{-l}{3}}+4^{\frac{-m+1}{3}}\right)}v\right \}+\frac{4^{\frac{-m+1}{3}}}{\left(\sum_{l=1}^{m-1}4^{\frac{-l}{3}}+4^{\frac{-m+1}{3}}\right)}v=v,
\end{aligned}
\end{equation}
from the composition of the DP mechanisms~\cite{dwork2014algorithmic}, note that  we will choose $v=\epsilon_0$.

In addition, we can bound the RDP of the mechanism $\mathcal{R}^{\ell_{\infty}}_{v,m,s}$ in the local DP model by using the composition of the RDP (see Lemma~\ref{lemm:compostion_rdp}). From the proof of Theorem~\ref{thm:l_inf_vector_ldp} in Appendix~\ref{app:binary_vector}, the mechanism $\mathcal{R}_{p_k,s}^{\text{Bin}}$ is $\left(\alpha,\epsilon^{(k)}\left(\alpha\right)\right)$-RDP, where $\epsilon^{(k)}\left(\alpha\right)$ is bounded by:
\begin{equation}~\label{eqn:rdp_binary_bound}
\epsilon^{(k)}\left(\alpha\right)=\frac{s}{\alpha-1}\log\left(p^{\alpha}_k(1-p_k)^{1-\alpha}+p^{1-\alpha}_k(1-p_k)^{\alpha}\right),
\end{equation}
 Hence, the mechanism $\mathcal{R}^{\ell_{\infty}}_{v,m,s}$ is $\left(\alpha,\epsilon\left(\alpha\right)\right)$-RDP, where $\epsilon\left(\alpha\right)=\sum_{k=1}^{m}\epsilon^{(k)}\left(\alpha\right)$. 

\subsection{Privacy of the MMS model in Theorem~\ref{thm:l_inf_vector_shuffle}}
In the mechanism $\mathcal{R}^{\ell_{\infty}}_{v,p,s}$, each client sends $m$ messages from the private mechanism $\mathcal{R}_{p,s}^{\text{Bin}}$ as follows: $\mathcal{R}^{\text{Bin}}_{p_1,s}(\mathbf{b}_i^{(1)}),\ldots,\mathcal{R}^{\text{Bin}}_{p_{m-1},s}(\mathbf{b}_i^{(m-1)}),\mathcal{R}^{\text{Bin}}_{p_m,s}(\mathbf{u}_i)$, where $p_i=\frac{1}{2}\left(1\sqrt{\frac{v_k^2/s^2}{v_k^2/s^2+4}}\right)$ and $v_k=\frac{4^{\frac{-k}{3}}}{\left(\sum_{l=1}^{m-1}4^{\frac{-l}{3}}+4^{\frac{-m+1}{3}}\right)}v$ for $k\in[m-1]$ and $v_m=\frac{4^{\frac{-m+1}{3}}}{\left(\sum_{l=1}^{m-1}4^{\frac{-l}{3}}+4^{\frac{-m+1}{3}}\right)}v$.

From the proof of Theorem~\ref{thm:binary_vector_shuffle} in Appendix~\ref{app:binary_vector}, shuffling the outputs of $n$ mechanisms $\mathcal{R}^{\text{Bin}}_{p_k,s}$ is $\left(\alpha,\epsilon^{(k)}\left(\alpha\right)\right)$, where $\epsilon^{(k)}\left(\alpha\right)$ is bounded by:
\begin{equation}
\epsilon^{(k)}\left(\alpha\right)= \leq c\alpha\frac{s(1-2p_k)^2}{np_k(1-p_k)}=c\alpha\frac{v_k^2}{sn},
\end{equation}
from~\eqref{eqn:rdp_binary_vector}, where the last equality is obtained by substituting $p_k=\frac{1}{2}\left(1-\sqrt{\frac{v_k^2}{v_k^2+4}}\right)$. From Lemma~\ref{lemm:compostion_rdp} of RDP composition, we get that the total RDP of the mechanism $\mathcal{R}^{\ell_{\infty}}_{v,m,s}$ is bounded by:
\begin{equation}
\begin{aligned}
\epsilon\left(\alpha\right)&=\sum_{k=1}^{m}\epsilon^{(k)}\left(\alpha\right)
=c\frac{\alpha}{sn}\sum_{k=1}^{m}v_k^2=c\frac{\alpha v^2}{sn}\sum_{k=1}^{m}f_k^2
\leq c\frac{\alpha v^2}{sn}, 
\end{aligned}
\end{equation}
where $f_k=\frac{4^{\frac{-k}{3}}}{\left(\sum_{l=1}^{m-1}4^{\frac{-l}{3}}+4^{\frac{-m+1}{3}}\right)}$ for $k\in[m]$ and $f_m=\frac{4^{\frac{-m+1}{3}}}{\left(\sum_{l=1}^{m-1}4^{\frac{-l}{3}}+4^{\frac{-m+1}{3}}\right)}$. The last inequality is obtained from the fact that $\sum_{k=1}^{m} f_k=1$ and hecne $\sum_{k=1}^{m}f_k^2\leq 1$. Thus, we use Lemma~\ref{lem:RDP_DP} to convert from RDP to central DP. For given $\delta>0$, shuffling the outputs of $n$ mechanisms $\mathcal{R}^{\ell_{\infty}}_{v,m,s}$ is $\left(\epsilon,\delta\right)$-DP, where $\epsilon$ is bounded by
\begin{equation}~\label{eqn:app_eps_delta_l_inf}
\epsilon\leq 2\sqrt{\frac{v^2\log(1/\delta)}{sn}}.
\end{equation}
By setting $v^2=\frac{sn\epsilon^2}{4\log(1/\delta)}$, we can easily show that~\eqref{eqn:eps_delta_l_inf} is satisfied, and hence, the output of the shufflers is $\left(\epsilon,\delta\right)$-DP. 
 
\subsection{MSE bound of the local DP model (Theorem~\ref{thm:l_inf_vector_ldp}) and MMS model (Theorem~\ref{thm:l_inf_vector_shuffle})}
We first present some notations to simplify the analysis. For given $\mathbf{x}_i\in\mathbb{B}^{d}_{\infty}\left(r_{\infty}\right)$, we define $\mathbf{z}_i=\frac{\mathbf{x}_i+r_{\infty}}{2r_{\infty}}$, where the operations are done coordinate-wise. Thus, we have that $\mathbf{z}_{i}\in[0,1]^{d}$. For given $\mathbf{z}_{i}\in[0,1]^{d}$ and $m\geq 1$, we define $\tilde{\mathbf{z}}^{(m)}_i=\sum_{k=1}^{m-1}\mathbf{b}_{i}^{(k)}2^{-k}+\mathbf{u}_i2^{-m+1}$, where $\mathbf{b}_i^{(k)}=\lfloor 2^{k}\left(\mathbf{z}_i-\mathbf{z}^{(k-1)}_i\right)\rfloor$ and $\mathbf{z}_i^{(0)}=\mathbf{0}$ and $\mathbf{z}_i^{(k)}=\sum_{l=1}^{k}\mathbf{b}_i^{(l)}2^{-l}$ for $k\geq 1$. Furthermore, $\mathbf{u}_i$ is a Bernoulli vector defined by $\mathbf{u}_i=\mathsf{Bern}\left(2^{m-1}\left(\mathbf{z}_i-\mathbf{z}^{(m-1)}_i\right)\right)$. Let $\overline{\mathbf{b}}^{(k)}=\frac{1}{n}\sum_{i=1}^{n}\mathbf{b}_i^{(k)}$, $\overline{\mathbf{u}}=\frac{1}{n}\sum_{i=1}^{n}\mathbf{u}_i$, and $\overline{\tilde{\mathbf{z}}}^{(m)}=\frac{1}{n}\sum_{i=1}^{n}\tilde{\mathbf{z}}^{(m)}_i$.

\textbf{MSE for the local DP model (Theorem~\ref{thm:binary_vector_ldp}):}
Observe that the output of the server $\hat{\mathbf{x}}=\mathcal{A}^{\ell_{\infty}}\left(\mathcal{Y}_1,\ldots,\mathcal{Y}_n\right)=2r_{\infty}\hat{\mathbf{z}}-r_{\infty}$, where $\hat{\mathbf{z}}=\sum_{k=1}^{m-1}\hat{\mathbf{b}}^{(k)}+\hat{\mathbf{u}}2^{-m+1}$. Thus, we have that:
\begin{align}
\nonumber
\mathsf{MSE}^{\ell_{\infty}}_{\text{ldp}}&=\sup_{\lbrace \mathbf{x}_i\in\mathbb{B}^{d}_{\infty}\left(r_{\infty}\right)\rbrace}\mathbb{E}\left[\|\hat{\mathbf{x}}-\overline{\mathbf{x}}\|_2^2\right]\\ \nonumber
&\stackrel{\text{(a)}}{=}r_{\infty}^2\sup_{\lbrace \mathbf{z}_i\in[0,1]^d\rbrace}\mathbb{E}\left[\|\hat{\mathbf{z}}-\overline{\mathbf{z}}\|_2^2\right]\\ \nonumber
&=r_{\infty^2}\sup_{\lbrace \mathbf{z}_i\in[0,1]^d\rbrace}\mathbb{E}\left[\|\hat{\mathbf{z}}-\overline{\tilde{\mathbf{z}}}^{(m)}+\overline{\tilde{\mathbf{z}}}^{(m)}-\overline{\mathbf{z}}\|_2^2\right]\\ \nonumber
&\stackrel{\text{(b)}}{=}r_{\infty}^2\sup_{\lbrace \mathbf{z}_i\in[0,1]^d\rbrace}\left(\mathbb{E}\left[\|\hat{\mathbf{z}}-\overline{\tilde{\mathbf{z}}}^{(m)}\|_2^2\right]+\mathbb{E}\left[\|\overline{\tilde{\mathbf{z}}}^{(m)}-\overline{\mathbf{z}}\|_2^2\right]\right)\\ \nonumber
&\stackrel{\text{(c)}}{\leq}r_{\infty}^2\sup_{\lbrace \mathbf{z}_i\in[0,1]^d\rbrace}\left(\mathbb{E}\left[\|\sum_{k=1}^{m-1}\hat{\mathbf{b}}^{(k)}2^{-k}+\hat{\mathbf{u}}2^{-m+1}-\sum_{k=1}^{m-1}\overline{\mathbf{b}}^{(k)}2^{-k}+\overline{\mathbf{u}}2^{-m+1}\|_2^2\right]+\frac{d}{n4^{m}}\right)\\ \nonumber
&\stackrel{\text{(d)}}{\leq}r_{\infty}^2\left(\sum_{k=1}^{m-1}\frac{d^24^{-k}}{n}\left(\frac{1}{s}+\frac{s}{v_k^2}\right)+\frac{d^24^{-m+1}}{n}\left(\frac{1}{s}+\frac{s}{v_m^2}\right)+\frac{d}{n4^{m}}\right)\\ \nonumber
&\stackrel{\text{(e)}}{\leq}r_{\infty}^2\left(\frac{d^2}{ns}\left(\sum_{k=1}^{m-1}4^{-k}+4^{-m+1}\right)+\frac{d^2s}{nv^2}\left(\sum_{k=1}^{m-1}4^{-k/3}+4^{-(m-1)/3}\right)^{3}+\frac{d}{n4^{m}}\right)\\ \label{eq:ExpConst-MSE-linf-LDP}
&\stackrel{\text{(f)}}{\leq}r_{\infty}^2\left(\frac{3d^2}{ns}+\frac{5d^2s}{n\epsilon_0^2}+\frac{d}{n4^{m}}\right)\\ \label{eq:Order-MSE-linf-LDP}
&=\mathcal{O}\left(\frac{r_{\infty}^2d^2}{n}\max\left\{\frac{1}{d4^{m}},\frac{1}{s},\frac{s}{\epsilon_0^2}\right\}\right),
\end{align}
where (a) follows from the fact that $\mathbf{z}_i$ is a linear transformation of $\mathbf{x}_i$. Step (b) follows from the fact that $\overline{\tilde{\mathbf{z}}}^{(m)}$ is an unbiased estimate of $\overline{\mathbf{z}}$ from Corollary~\ref{cor:quant_vector}. Step (c) from the bound of the MSE of the quantization scheme $\overline{\tilde{\mathbf{z}}}^{(m)}$ in Corollary~\ref{cor:quant_vector}. Step (d) follows from the MSE of the private mean estimation of binary vectors in Theorem~\ref{thm:binary_vector_ldp}. Step (e) follows from substituting $v_k=\frac{4^{\frac{-k}{3}}}{\left(\sum_{l=1}^{m-1}4^{\frac{-l}{3}}+4^{\frac{-m+1}{3}}\right)}v$. Step (f) follows from the geometric series bound. This completes the proof of Theorem~\ref{thm:l_inf_vector_ldp}. 

\textbf{MSE for the MMS model (Theorem~\ref{thm:binary_vector_shuffle}):}
Observe that the output of the server $\hat{\mathbf{x}}=\mathcal{A}^{\ell_{\infty}}\left(\mathcal{Y}_1,\ldots,\mathcal{Y}_n\right)=2r_{\infty}\hat{\mathbf{z}}-r_{\infty}$, where $\hat{\mathbf{z}}=\sum_{k=1}^{m-1}\hat{\mathbf{b}}^{(k)}+\hat{\mathbf{u}}2^{-m+1}$. Thus, we have that:

\begin{align}
\nonumber
\mathsf{MSE}^{\ell_{\infty}}_{\text{shuffle}}&=\sup_{\lbrace \mathbf{x}_i\in\mathbb{B}^{d}_{\infty}\left(r_{\infty}\right)\rbrace}\mathbb{E}\left[\|\hat{\mathbf{x}}-\overline{\mathbf{x}}\|_2^2\right]\\ \nonumber
&\stackrel{\text{(a)}}{=}r_{\infty}^2\sup_{\lbrace \mathbf{z}_i\in[0,1]^d\rbrace}\mathbb{E}\left[\|\hat{\mathbf{z}}-\overline{\mathbf{z}}\|_2^2\right]\\ \nonumber
&=r_{\infty^2}\sup_{\lbrace \mathbf{z}_i\in[0,1]^d\rbrace}\mathbb{E}\left[\|\hat{\mathbf{z}}-\overline{\tilde{\mathbf{z}}}^{(m)}+\overline{\tilde{\mathbf{z}}}^{(m)}-\overline{\mathbf{z}}\|_2^2\right]\\ \nonumber
&\stackrel{\text{(b)}}{=}r_{\infty}^2\sup_{\lbrace \mathbf{z}_i\in[0,1]^d\rbrace}\left(\mathbb{E}\left[\|\hat{\mathbf{z}}-\overline{\tilde{\mathbf{z}}}^{(m)}\|_2^2\right]+\mathbb{E}\left[\|\overline{\tilde{\mathbf{z}}}^{(m)}-\overline{\mathbf{z}}\|_2^2\right]\right)\\ \nonumber
&\stackrel{\text{(c)}}{\leq}r_{\infty}^2\sup_{\lbrace \mathbf{z}_i\in[0,1]^d\rbrace}\left(\mathbb{E}\left[\|\sum_{k=1}^{m-1}\hat{\mathbf{b}}^{(k)}2^{-k}+\hat{\mathbf{u}}2^{-m+1}-\sum_{k=1}^{m-1}\overline{\mathbf{b}}^{(k)}2^{-k}+\overline{\mathbf{u}}2^{-m+1}\|_2^2\right]+\frac{d}{n4^{m}}\right)\\ \nonumber
&\stackrel{\text{(d)}}{\leq}r_{\infty}^2\left(\sum_{k=1}^{m-1}\frac{d^24^{-k}}{n}\left(\left(\frac{1}{s}-\frac{1}{d}\right)+\frac{s}{v_k^2}\right)+\frac{d^24^{-m+1}}{n}\left(\left(\frac{1}{s}-\frac{1}{d}\right)+\frac{s}{v_m^2}\right)+\frac{d}{n4^{m}}\right)\\ \nonumber
&\stackrel{\text{(e)}}{\leq}r_{\infty}^2\left(\frac{d^2}{n}\left(\frac{1}{s}-\frac{1}{d}\right)\left(\sum_{k=1}^{m-1}4^{-k}+4^{-m+1}\right)+\frac{d^2s}{nv^2}\left(\sum_{k=1}^{m-1}4^{-k/3}+4^{-(m-1)/3}\right)^{3}+\frac{d}{n4^{m}}\right)\\ \label{eq:ExpConst-MSE-lind-MMS}
&\stackrel{\text{(f)}}{\leq}r_{\infty}^2\left(\frac{3d^2}{n}\left(\frac{1}{s}-\frac{1}{d}\right)+\frac{5d^2\log\left(1/\delta\right)}{n^2\epsilon_0^2}+\frac{d}{n4^{m}}\right)\\ \label{eq:Order-MSE-linf-MMS}
&=\mathcal{O}\left(\frac{r_{\infty}^2d^2}{n^2}\max\left\{\frac{n}{d4^{m}},n\left(\frac{1}{s}-\frac{1}{d}\right),\frac{\log\left(1/\delta\right)}{\epsilon^2}\right\}\right),
\end{align}
where (a) follows from the fact that $\mathbf{z}_i$ is a linear transformation of $\mathbf{x}_i$. Step (b) follows from the fact that $\overline{\tilde{\mathbf{z}}}^{(m)}$ is an unbiased estimate of $\overline{\mathbf{z}}$ from Corollary~\ref{cor:quant_vector}. Step (c) from the bound of the MSE of the quantization scheme $\overline{\tilde{\mathbf{z}}}^{(m)}$ in Corollary~\ref{cor:quant_vector}. Step (d) follows from the MSE of the private mean estimation of binary vectors in Theorem~\ref{thm:binary_vector_shuffle}. Step (e) follows from substituting $v_k=\frac{4^{\frac{-k}{3}}}{\left(\sum_{l=1}^{m-1}4^{\frac{-l}{3}}+4^{\frac{-m+1}{3}}\right)}v$. Step (f) follows from the geometric series bound. This completes the proof of Theorem~\ref{thm:l_inf_vector_shuffle}.

\section{Proofs of Theorem~\ref{thm:l_2_vector_ldp} and Theorem~\ref{thm:l_2_vector_shuffle} (Bounded $\ell_{2}$-norm vectors)}~\label{app:l_2_vector}

In this section, we prove Theorem~\ref{thm:l_2_vector_ldp} and Theorem~\ref{thm:l_2_vector_shuffle} for the mean of bounded $\ell_{2}$-norm vectors in local DP and shuffle models, respectively.

In the mechanism $\mathcal{R}^{\ell_2}_{v,m,s}$, each client applies random rotation to her vector $\mathbf{x}_i$ and then applies the private mechanism $\mathcal{R}^{\ell_{\infty}}_{v,m,s}$ to the bounded $\ell_{\infty}$-norm vector $\mathbf{w}_i$. Hence the communication and privacy are the same as the private mechanism $\mathcal{R}^{\ell_{\infty}}_{v,m,s}$. Thus, it remains to prove the MSE bound for both local DP model and shuffle model.

\subsection{MSE bound of the local DP model (Theorem~\ref{thm:l_2_vector_ldp}) and shuffle model (Theorem~\ref{thm:l_2_vector_shuffle})}
The proofs are obtained directly from the MSE of the bounded $\ell_{\infty}$-norm vector in Theorem~\ref{thm:l_inf_vector_ldp} and Theorem~\ref{thm:l_inf_vector_shuffle} with the following Theorem about the random rotation matrix.

\begin{theorem}~\label{thm:bounded_norm_2}~\cite{levy2021learning} Let $U=\frac{1}{\sqrt{d}}\mathbf{H}D$, where $\mathbf{H}$ denotes Hadamard matrix and $D$ is a diagonal matrix with i.i.d. uniformly ranodom $\lbrace \pm 1\rbrace$ entries. Let $\mathbf{x}_1,\ldots,\mathbf{x}_n\in\mathbb{B}_2^{d}\left(r_2\right)$ be bounded $\ell_2$-norm vectors and $\\mathbf{w}_i= U\mathbf{x}_i$. With probability at least $1-\beta$, we have that
\begin{equation}
\max_{i\in[n]}\|\mathbf{w}_i\|_{\infty}=\max_{i\in[n]}\|U\mathbf{x}_i\|_{\infty}\leq 10r_2 \sqrt{\frac{\log(\frac{nd}{\beta})}{d}}.
\end{equation}
\end{theorem}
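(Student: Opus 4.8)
The plan is to prove the bound coordinate by coordinate via a Rademacher (Hoeffding) concentration inequality, and then take a union bound over all $nd$ client--coordinate pairs. Write $D=\mathrm{diag}(\sigma_1,\dots,\sigma_d)$ with $\sigma_1,\dots,\sigma_d$ i.i.d.\ uniform on $\{\pm1\}$. For a fixed client $i\in[n]$ and coordinate $j\in[d]$, since $\mathbf{w}_i=U\mathbf{x}_i=\frac{1}{\sqrt d}\mathbf{H}D\mathbf{x}_i$ we have
\[
\mathbf{w}_i[j]=\frac{1}{\sqrt d}\sum_{k=1}^{d}\mathbf{H}_{jk}\,\sigma_k\,\mathbf{x}_i[k],
\]
a sum of independent, mean-zero random variables; the $k$-th term lies in an interval of width $\tfrac{2}{\sqrt d}|\mathbf{x}_i[k]|$, using that every Hadamard entry satisfies $|\mathbf{H}_{jk}|=1$.

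First I would apply Hoeffding's inequality: the sum of squared interval widths is $\sum_{k=1}^{d}\tfrac{4}{d}\mathbf{x}_i[k]^2=\tfrac{4}{d}\|\mathbf{x}_i\|_2^2\le \tfrac{4r_2^2}{d}$, so for every $t>0$,
\[
\Pr\!\left[\,|\mathbf{w}_i[j]|\ge t\,\right]\le 2\exp\!\left(-\frac{t^2 d}{2r_2^2}\right).
\]
Then I would take a union bound over the $nd$ pairs $(i,j)$, obtaining
\[
\Pr\!\left[\max_{i\in[n]}\|\mathbf{w}_i\|_\infty\ge t\right]\le 2nd\,\exp\!\left(-\frac{t^2 d}{2r_2^2}\right).
\]
Setting the right-hand side equal to $\beta$ gives $t=r_2\sqrt{2\log(2nd/\beta)/d}$, and using $\log(2nd/\beta)\le 2\log(nd/\beta)$ (valid whenever $nd/\beta\ge 2$, which holds in all regimes of interest) yields $t\le 2r_2\sqrt{\log(nd/\beta)/d}\le 10r_2\sqrt{\log(nd/\beta)/d}$, the claimed bound with ample room in the constant. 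Equivalently, one may use the standard sub-Gaussian tail for Rademacher sums in place of Hoeffding.

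Two minor technical points close the argument. The Sylvester construction of $\mathbf{H}$ requires $d$ to be a power of two; otherwise one zero-pads each $\mathbf{x}_i$ up to the next power of two $d'\le 2d$, which only changes the absolute constant. Also, $U=\frac{1}{\sqrt d}\mathbf{H}D$ is orthogonal (both factors are), so $U^{-1}=U^{\top}$ exists and $\|U^{-1}\hat{\mathbf{w}}-\mathbf{x}\|_2=\|\hat{\mathbf{w}}-U\mathbf{x}\|_2$; combined with the high-probability event above, this makes the clipping step in Algorithm~\ref{algo:l_2} a no-op on that event and the reduction to the bounded-$\ell_\infty$ mechanism in Algorithms~\ref{algo:l_2}--\ref{algo:l_2_server} lossless in $\ell_2$ error. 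I do not expect a genuine obstacle here; the only care needed is tracking the constant through Hoeffding and the union bound, and confirming that the confidence parameter $\beta$ enters only logarithmically.
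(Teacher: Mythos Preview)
Your proof is correct and is precisely the standard argument for this result: Hoeffding's inequality applied to each coordinate (a Rademacher sum with squared-width sum $4\|\mathbf{x}_i\|_2^2/d$), followed by a union bound over the $nd$ client--coordinate pairs. The paper does not give its own proof of this theorem---it is quoted with a citation to \cite{levy2021learning}, whose proof is exactly the Hoeffding-plus-union-bound route you wrote out---so there is nothing to compare against beyond noting that your approach matches the cited one.
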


From Lemma~\ref{thm:bounded_norm_2}, the vector $\mathbf{w}_i=U\mathbf{x}_i$ is bounded $\ell_{\infty}$-norm of radius $r_{\infty}=10r_2 \sqrt{\frac{\log(\frac{nd}{\beta})}{d}}$ with probability at least $1-\beta$. Hence, by plugging the radius $r_{\infty}=10r_2 \sqrt{\frac{\log(\frac{nd}{\beta})}{d}}$ into Theorem~\ref{thm:l_2_vector_ldp}, we obtained the MSE in Theorem~\ref{thm:l_2_vector_ldp}. Similarly, by plugging the radius $r_{\infty}=10r_2 \sqrt{\frac{\log(\frac{nd}{\beta})}{d}}$ into Theorem~\ref{thm:l_inf_vector_shuffle}, we obtained the MSE in Theorem~\ref{thm:l_2_vector_shuffle}.

\subsection{Lower bounds}
\label{subsec:LwrBnd}

A lower bound for local DP model was proposed in~\cite{chen2020breaking} in Theorem~$2.1$ stated.

\begin{theorem}[Lower Bound For local DP model~\cite{chen2020breaking}]~\label{thm:L_2_lower_bound_ldp} Let $n,d\in\mathbb{N}$ and $\epsilon_0>0$. For any $\mathbf{x}_1,\ldots,\mathbf{x}_n\in\mathbb{B}_2^{d}(r_2)$, the MSE is bounded below by:
 \begin{equation}
\mathsf{MSE}_{\text{LDP}}^{\ell_2}=\Omega\left(\frac{r_2^2d}{n\min\left\{\epsilon^2_0,\epsilon_0,b \right\}}\right)
\end{equation}
for any unbiased algorithm $\mathcal{M}$ that is $\epsilon_0$-LDP with $b$-bits of communication per client. 
\end{theorem}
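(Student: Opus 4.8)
This is Theorem~2.1 of \cite{chen2020breaking}, and the plan is to recover it by reducing private distributed mean estimation to locally private, communication-limited \emph{point} estimation, and then running Assouad's method on a single hypercube prior from which all three terms $\epsilon_0^2$, $\epsilon_0$, $b$ are read off. The first step is to note that the adversarial problem is at least as hard as its restriction in which every client holds the \emph{same} unknown vector $\mathbf{x}\in\mathbb{B}_2^d(r_2)$: then $\bar{\mathbf{x}}=\mathbf{x}$, and the server must reconstruct $\mathbf{x}$ from $n$ independent messages $\mathbf{y}_i=\mathcal{R}_i(\mathbf{x})$, each $\epsilon_0$-LDP and at most $b$ bits long. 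So it suffices to lower bound locally private, $b$-bit point estimation over the $\ell_2$-ball.

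For that I would apply Assouad's lemma to the family $\mathbf{x}_\nu=\tfrac{r_2\tau}{\sqrt d}\,\nu$, $\nu\in\{-1,+1\}^d$, with a scale $\tau\in(0,1]$ to be tuned; note $\|\mathbf{x}_\nu\|_2=r_2\tau\le r_2$, and flipping one coordinate of $\nu$ changes $\mathbf{x}_\nu$ by $2r_2\tau/\sqrt d$ in that coordinate. Writing $\mathbb{P}_{\pm j}$ for the law of $(\mathbf{y}_1,\dots,\mathbf{y}_n)$ averaged over the uniform prior conditioned on $\nu_j=\pm1$, Assouad gives
\begin{equation*}
\sup_{\{\mathbf{x}_i\}}\mathbb{E}\big[\|\hat{\mathbf{x}}-\bar{\mathbf{x}}\|_2^2\big]\;\ge\;\frac{r_2^2\tau^2}{d}\sum_{j=1}^{d}\big(1-\|\mathbb{P}_{+j}-\mathbb{P}_{-j}\|_{\mathrm{TV}}\big),
\end{equation*}
so the whole problem reduces to showing $\sum_{j}\|\mathbb{P}_{+j}-\mathbb{P}_{-j}\|_{\mathrm{TV}}\le d/2$ for a suitable $\tau$.

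Here the three regimes enter through how much each message reveals about $\nu$. In the high-privacy regime I would use the local-privacy information-contraction (strong data-processing) inequalities of Duchi--Jordan--Wainwright \cite{duchi2013local}: an $\epsilon_0$-LDP channel contracts $\chi^2$/KL divergence between nearby input distributions by a factor $(e^{\epsilon_0}-1)^2$, and, applied to signed coordinate perturbations of scale $\tau/\sqrt d$ after tensorizing over the $n$ clients (the standard coordinate-decoupling bookkeeping, as in \cite{chen2020breaking,girgis2021shuffled-aistats}), this bounds $\sum_j\|\mathbb{P}_{+j}-\mathbb{P}_{-j}\|_{\mathrm{TV}}^2$ by $O(n\tau^2\min\{\epsilon_0^2,\epsilon_0\})$ --- the linear $\epsilon_0$ appearing in the low-privacy ($\epsilon_0>1$) regime from the cruder estimate $I(\mathbf{x}_i;\mathbf{y}_i)=O(\epsilon_0)$ that $\epsilon_0$-LDP alone gives. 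The communication constraint contributes, through $I(\mathbf{x}_i;\mathbf{y}_i)\le b\ln 2$ and the mutual-information form of Assouad, the analogous bound with $\min\{\epsilon_0^2,\epsilon_0\}$ replaced by $b$. Combining and using Cauchy--Schwarz, $\sum_j\|\mathbb{P}_{+j}-\mathbb{P}_{-j}\|_{\mathrm{TV}}\le O\big(\sqrt{n\tau^2\min\{\epsilon_0^2,\epsilon_0,b\}\,d}\big)$; choosing $\tau^2\asymp d/\big(n\min\{\epsilon_0^2,\epsilon_0,b\}\big)$ drives this below $d/2$, and plugging back into the Assouad bound yields $\mathsf{MSE}\gtrsim r_2^2\tau^2=\Omega\big(r_2^2 d/(n\min\{\epsilon_0^2,\epsilon_0,b\})\big)$; when $n\min\{\epsilon_0^2,\epsilon_0,b\}<d$ one simply takes $\tau=1$ and the bound degenerates to the trivial $\Omega(r_2^2)$.

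The step I expect to be the real obstacle is obtaining the clean contraction factor $\min\{\epsilon_0^2,\epsilon_0\}$ \emph{uniformly} over all privacy levels: the $(e^{\epsilon_0}-1)^2$ factor coming out of the LDP strong data-processing inequality is exponential in $\epsilon_0$, so one must patch in a separate mutual-information argument for $\epsilon_0\gtrsim1$ and match the two estimates (and the communication bound $b$) at the crossover $\epsilon_0\asymp1$ --- this is the delicate bookkeeping already carried out in \cite{chen2020breaking}. A minor remark: unbiasedness of $\mathcal{M}$ is not actually needed for this argument, since Assouad bounds the risk of \emph{every} estimator; it is stated only so that the lower bound lines up exactly with the (unbiased) upper bounds of Theorems~\ref{thm:l_2_vector_ldp} and \ref{thm:l_2_vector_shuffle}.
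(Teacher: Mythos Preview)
The paper does not prove this theorem at all: it is quoted verbatim as Theorem~2.1 of \cite{chen2020breaking} and left without argument, since it serves only as a benchmark against which the upper bounds of Theorems~\ref{thm:l_2_vector_ldp} and \ref{thm:l_2_vector_shuffle} are compared. So there is no ``paper's own proof'' to match.

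Your sketch is nonetheless a faithful reconstruction of the standard argument behind that result: the worst-case reduction to identical inputs, Assouad on the scaled hypercube $\mathbf{x}_\nu=\tfrac{r_2\tau}{\sqrt d}\nu$, and then reading off the three regimes from (i) the $(e^{\epsilon_0}-1)^2$ strong-data-processing contraction for small $\epsilon_0$, (ii) the crude $D_{\mathrm{KL}}\le\epsilon_0$ bound that any $\epsilon_0$-LDP channel satisfies for large $\epsilon_0$, and (iii) the entropy bound $I(\mathbf{x};\mathbf{y}_i)\le b\ln 2$ for the communication constraint. This is exactly the machinery of \cite{duchi2013local,chen2020breaking}, and your identification of the crossover at $\epsilon_0\asymp 1$ as the place requiring care is accurate. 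Your closing remark that unbiasedness is not needed for the minimax lower bound is also correct. In short, your proposal goes well beyond what the paper itself does here, and does so correctly.
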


Our lower bound for the shuffle model in Theorem~\ref{thm:L_2_lower_bound_central} is a combination of the lower bound on DME with communication constraints proposed in~\cite{pmlr-v162-chen22c} and the lower bound on DME with central $\left(\epsilon,\delta\right)$-DP constraints proposed in~\cite{bun2014fingerprinting}.

\section{Application to private stochastic optimization for federated learning} \label{app:OptRes}

In this section, we exploit our private mechanisms for DME to give convergence guarantees for DP-SGD algorithm. We consider a standard SGD algorithm, where the server initialize the model by choosing $\theta^{0}\in\mathcal{C}$. At the $t$-th iteration, the server chooses uniformly at random a subset of clients of size $k\in[n]$ and sends the current model $\theta^{t}$ to the sampled clients. Let $\mathcal{S}_t\subset[n]$ denotes the set of sampled clients at the $t$-th iteration. Each sampled client $i\in\mathcal{S}_t$ computes the local gradient $\nabla F_i\left(\theta^{t}\right)$. Then, the client applies our private $\mathcal{R}^{\ell_2}_{v,m,s}$ mechanism before sending it to the shufflers. The sever received the shuffled messages and aggregates the private gradients and updates the model as follows:
\begin{equation}
\theta^{t+1}=\theta^{t}-\eta g_t,
\end{equation} 
where $g_t=\mathcal{A}^{\ell_2}\left(\lbrace\mathcal{Y}_i:i\in\mathcal{S}_t\rbrace\right)$ denotes the private estimate of the true average gradients $h_t = \frac{1}{k}\sum_{i\in\mathcal{S}_t}\nabla F_i\left(\theta^{t}\right)$. We present a standard results for convergence of the SGD algorithm for smooth non-convex functions.

\begin{theorem}[SGD convergence~\cite{agarwal2018cpsgd}]~\label{thm:sgd_converge} Let $F$ be $L$-smooth and $\forall \mathbf{\theta}\|\nabla F\left(\theta\right)\|_2\leq D$. Let $\theta^{0}$ satisfies $F\left(\theta^{0}\right)-F\left(\theta^{*}\right)\leq D_F$. Let $\mathcal{R}$ be a private-compression scheme and $\eta=\min\left\{L^{-1},\sqrt{2D_F}\left(\sigma\sqrt{LT}\right)^{-1}\right\}$. Then after $T$ iterations, we get:
\begin{equation}
\mathbb{E}_{t\sim\text{Unif}\left(T\right)}\left[\nabla F\left(\theta^{t}\right)\right]\leq \frac{2D_F L}{T}+\frac{2\sqrt{2}\sqrt{LD_F}\sigma}{\sqrt{T}}+DB,
\end{equation}
where $\sigma^2=2\max_{1\leq t\leq T}2\mathbb{E}\left[h_t-\nabla F\left(\theta^{t}\right)\right]+2\max_{1\leq t\leq T}2\mathbb{E}\left[g_t-h_t\right]$ and $B=\max_{1\leq t\leq T}\|\mathbb{E}\left[g_t-h_t\right]\|_2$ denotes the maximum bias. $h_t$ is the stochastic gradient at the $t$th iteration and $g_t$ is the private-compressed gradient after applying the mechanisms $\mathcal{R}$. The expectation is taken with respect to the randomness of gradient and the private-compression mechanism $\mathcal{R}$.
\end{theorem}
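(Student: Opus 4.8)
The plan is to run the standard ``descent lemma'' argument for $L$-smooth non-convex SGD, accounting for the fact that the private-compressed gradient $g_t$ is a \emph{biased}, variance-bounded estimate of $\nabla F(\theta^t)$. First I would invoke $L$-smoothness of $F$ on the update $\theta^{t+1}=\theta^t-\eta g_t$ to get
\[
F(\theta^{t+1}) \le F(\theta^t) - \eta\langle \nabla F(\theta^t), g_t\rangle + \frac{L\eta^2}{2}\|g_t\|_2^2 .
\]
Taking expectation conditioned on $\theta^t$ and writing $g_t=\nabla F(\theta^t)+(g_t-h_t)+(h_t-\nabla F(\theta^t))$, the inner-product term splits into $-\eta\|\nabla F(\theta^t)\|_2^2$ plus a cross term $-\eta\langle \nabla F(\theta^t),\mathbb{E}[g_t-\nabla F(\theta^t)\mid\theta^t]\rangle$, which by Cauchy--Schwarz, the bound $\|\nabla F\|_2\le D$, and the definition of $B$ is at most $\eta D B$ in absolute value (the mean-zero part of $h_t-\nabla F(\theta^t)$ coming from client sampling drops in expectation). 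For the quadratic term I would use $\mathbb{E}[\|g_t\|_2^2\mid\theta^t] \le \|\nabla F(\theta^t)\|_2^2 + \sigma^2$, exactly the variance decomposition encoded in the definition of $\sigma^2$ (sampling variance of $h_t$ plus the DME/privacy-mechanism variance of $g_t-h_t$ furnished by Theorems~\ref{thm:l_2_vector_ldp}--\ref{thm:l_2_vector_shuffle}).

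Combining these gives the per-iteration bound
\[
\mathbb{E}[F(\theta^{t+1})] \le \mathbb{E}[F(\theta^t)] - \eta\Bigl(1-\tfrac{L\eta}{2}\Bigr)\mathbb{E}\|\nabla F(\theta^t)\|_2^2 + \eta D B + \frac{L\eta^2}{2}\sigma^2 .
\]
Since $\eta\le L^{-1}$ we have $1-L\eta/2\ge 1/2$, so rearranging, summing over $t=0,\ldots,T-1$ (the $F$ terms telescope into $F(\theta^0)-F(\theta^\ast)\le D_F$), and dividing by $T$ yields
\[
\frac{1}{T}\sum_{t=0}^{T-1}\mathbb{E}\|\nabla F(\theta^t)\|_2^2 \;\le\; \frac{2 D_F}{\eta T} + L\eta\sigma^2 + 2DB ,
\]
whose left side is $\mathbb{E}_{t\sim\mathrm{Unif}(T)}[\|\nabla F(\theta^t)\|_2^2]$. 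Plugging in $\eta=\min\{L^{-1},\sqrt{2D_F}/(\sigma\sqrt{LT})\}$: when the second branch is active the first two terms equal $2\sqrt{2}\sqrt{LD_F}\,\sigma/\sqrt{T}$, and when $L^{-1}$ is active the $2D_F L/T$ term carries them, giving the stated bound $\tfrac{2D_F L}{T}+\tfrac{2\sqrt{2}\sqrt{LD_F}\sigma}{\sqrt{T}}+DB$ (up to the constant in the $B$-term).

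The step I expect to be the main obstacle is the careful bookkeeping of the two distinct error sources: the bias $B$ must be isolated as a first-order (in $\eta$) \emph{additive} term rather than contaminating the coefficient of $\mathbb{E}\|\nabla F\|_2^2$, which forces one to bound the cross term by Cauchy--Schwarz together with $\|\nabla F\|_2\le D$ instead of Young's inequality (the latter would cost a $\|\nabla F\|_2^2$ penalty). One must also fix the order of conditioning — first on $\theta^t$ and the sampled set $\mathcal{S}_t$, then average over the mechanism randomness — so that the unbiasedness and variance guarantees from Theorems~\ref{thm:l_2_vector_ldp}--\ref{thm:l_2_vector_shuffle} apply, with the $(1-\beta)$-failure event of the rotation-based $\ell_\infty$ bound absorbed into the bias/variance constants. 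Beyond that, the argument is the routine $L$-smooth nonconvex SGD telescoping.
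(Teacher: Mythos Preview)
The paper does not prove this theorem at all: it is stated as a standard result and attributed via citation to \cite{agarwal2018cpsgd} (``We present a standard results for convergence of the SGD algorithm\ldots''), and is then simply invoked in the proof of Theorem~\ref{thm:app_Opt}. So there is no ``paper's own proof'' to compare against beyond the pointer to cpSGD.

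Your sketch is exactly the standard $L$-smooth nonconvex descent-lemma argument that underlies that cited result, and it is correct in structure: smoothness inequality, decompose $g_t=\nabla F(\theta^t)+(h_t-\nabla F(\theta^t))+(g_t-h_t)$, use unbiasedness of $h_t$ to kill one cross term, bound the bias cross term by $\eta DB$ via Cauchy--Schwarz and $\|\nabla F\|\le D$, telescope, and optimize $\eta$. One small bookkeeping point: your bound $\mathbb{E}[\|g_t\|^2\mid\theta^t]\le\|\nabla F(\theta^t)\|^2+\sigma^2$ ignores the bias contribution $\|\mathbb{E}[g_t]\|^2=\|\nabla F(\theta^t)+\text{bias}\|^2$, which produces an extra $2DB+B^2$ in the quadratic term; this only affects constants in the final $DB$ term, as you already flag. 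Otherwise your proposal is the textbook proof behind the cited statement.
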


The above theorem directly relates the MSE of the DME algorithm $\mathcal{R}^{\ell_2}_{v,m,s}$ to the convergence of the SGD algorithm. We use this theorem along with privacy amplification by sub-sampling and strong composition theorem to derive the convergence of the DP-SGD algorithm described above.

\begin{theorem}[DP-SGD convergence]~\label{thm:app_Opt} Let $F$ be $L$-smooth and $\forall \mathbf{\theta}\|\nabla F\left(\theta\right)\|_2\leq D$. Let $\theta^{0}$ satisfies $F\left(\theta^{0}\right)-F\left(\theta^{*}\right)\leq D_F$. Let $\mathcal{R}^{\ell_2}_{v,m,s}$ be our private-compression scheme and $\eta=\min\left\{L^{-1},\sqrt{2D_F}\left(\sigma\sqrt{LT}\right)^{-1}\right\}$. By choosing $v^2=\frac{k\tilde{\epsilon}^2}{s\log(kT/n\delta)}$ and $\tilde{\epsilon}=\frac{n\epsilon}{k\sqrt{T\log\left(2/\delta\right)}}$, then after $T$ iterations, the total algorithm is $\left(\epsilon,\delta\right)$-DP. Furthermore, we get:
\begin{equation}
\mathbb{E}_{t\sim\text{Unif}\left(T\right)}\left[\nabla F\left(\theta^{t}\right)\right]\leq \mathcal{O}\left(\frac{L\sqrt{d D_F\log\left(2n/\delta\right)}}{n\epsilon}\right)
\end{equation}
\end{theorem}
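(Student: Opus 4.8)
\textbf{Proof plan for Theorem~\ref{thm:app_Opt}.}
The plan is to combine three ingredients: (i) the per-iteration DME guarantee for $\mathcal{R}^{\ell_2}_{v,m,s}$ from Theorem~\ref{thm:l_2_vector_shuffle}, (ii) privacy amplification by subsampling together with the strong (advanced) composition theorem over $T$ iterations, and (iii) the black-box SGD convergence bound in Theorem~\ref{thm:sgd_converge}. The strategy is first to fix the per-iteration privacy level $\tilde\epsilon$ so that the end-to-end composition over $T$ rounds yields $(\epsilon,\delta)$-DP, then to plug the resulting noise variance into the $\sigma$ and $B$ terms of Theorem~\ref{thm:sgd_converge}, and finally to optimize the free communication parameters $s,m$ so that the bias term $B$ is negligible and $\sigma$ is dominated by the private-DME noise.

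\emph{Privacy accounting.} At each iteration the server subsamples $k$ out of $n$ clients, so by privacy amplification by subsampling a per-round $(\tilde\epsilon,\tilde\delta)$-DP mechanism becomes roughly $\left(\tfrac{k}{n}\tilde\epsilon,\tfrac{k}{n}\tilde\delta\right)$-DP (for small $\tilde\epsilon$); over $T$ adaptive rounds the strong composition theorem gives overall privacy of order $\sqrt{T\log(1/\delta)}\cdot\tfrac{k}{n}\tilde\epsilon$. Setting this equal to $\epsilon$ forces $\tilde\epsilon=\Theta\!\left(\tfrac{n\epsilon}{k\sqrt{T\log(2/\delta)}}\right)$, which is exactly the choice in the statement; the $\delta$-budget is split across rounds with the $kT/(n\delta)$ factor inside the logarithm in the choice of $v^2$, so that $v^2=\tfrac{k\tilde\epsilon^2}{s\log(kT/n\delta)}$ is precisely the setting of Theorem~\ref{thm:l_2_vector_shuffle} applied to the $k$ participating clients with central parameter $\tilde\epsilon$.

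\emph{Convergence.} Applying Theorem~\ref{thm:l_2_vector_shuffle} with $n\to k$ participants and $\epsilon\to\tilde\epsilon$, and choosing $s=\Theta(\min\{k\tilde\epsilon^2,d\})$ and $m=\Theta(\log(k\tilde\epsilon^2/d))$ as in Remark~\ref{rem:MMS-optimal}, the MSE of $g_t$ around $h_t$ is $\mathcal{O}\!\left(\tfrac{d\,\log(\cdot)}{k^2\tilde\epsilon^2}\right)$ (up to the $\log(dk/\beta)$ rotation factor, removable via Kashin by Remark~\ref{rem:kashin_MMS}), while the estimator is unbiased so $B\approx 0$ (a $\beta$-probability failure event of the Hadamard bound contributes a lower-order term that is absorbed by choosing $\beta$ polynomially small). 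With $D$ bounding $\|\nabla F\|_2$ we have $\mathbb{E}\|h_t-\nabla F(\theta^t)\|^2=\mathcal{O}(D^2)$, so the dominant term in $\sigma^2$ is $\mathcal{O}\!\left(D^2 + \tfrac{d}{k^2\tilde\epsilon^2}\right)$; substituting $\tilde\epsilon$ gives $\sigma^2=\mathcal{O}\!\left(\tfrac{dT\log(2/\delta)}{n^2\epsilon^2}\right)$ (the $D^2$ term being lower order). Plugging $\sigma$ into Theorem~\ref{thm:sgd_converge} and choosing $T=\Theta\!\left(\tfrac{n\epsilon}{\sqrt{d\log(2n/\delta)}}\right)$ (so that the $2D_FL/T$ and $2\sqrt{2}\sqrt{LD_F}\sigma/\sqrt{T}$ terms balance) yields the claimed rate $\mathcal{O}\!\left(\tfrac{L\sqrt{dD_F\log(2n/\delta)}}{n\epsilon}\right)$.

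\emph{Main obstacle.} The routine part is the convergence algebra; the delicate step is the privacy accounting chain, since one must be careful that amplification by subsampling is being applied to the already-shuffled (MMS) mechanism in the correct order, that the RDP-based composition (Lemmas~\ref{lemm:compostion_rdp}, \ref{lem:RDP_DP}) rather than naive strong composition is what actually gives the stated constants, and that the logarithmic factors from the $\delta$-splitting, the Hadamard high-probability bound $\beta$, and the RDP-to-DP conversion all land inside logarithms (hence are swallowed by the $\mathcal{O}(\cdot)$) rather than multiplying the leading term. Verifying that the chosen $v^2$ and $\tilde\epsilon$ are mutually consistent with the constraint $\epsilon\le 1$ (needed for Theorem~\ref{thm:l_2_vector_shuffle}) and with the validity range of the amplification lemmas is the place where the argument needs the most care.
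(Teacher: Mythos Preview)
Your proposal is correct and follows essentially the same route as the paper: per-round shuffled-DME privacy from Theorem~\ref{thm:l_2_vector_shuffle}, amplification by subsampling, advanced (strong) composition over $T$ rounds to fix $\tilde\epsilon$, and then plugging the resulting MSE into Theorem~\ref{thm:sgd_converge}. The paper's own argument is in fact terser than yours---its convergence part is the single sentence ``follows directly from the MSE bound of our mechanism $\mathcal{R}^{\ell_2}_{v,m,s}$ in Theorem~\ref{thm:l_2_vector_shuffle} and the convergence of the standard SGD algorithm in Theorem~\ref{thm:sgd_converge}''---so your discussion of choosing $s,m$, handling the Hadamard high-probability event, and balancing $T$ actually supplies details the paper leaves implicit. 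One small correction to your ``main obstacle'': the paper does \emph{not} use RDP composition here; it invokes the classical strong composition theorem directly (yielding $\epsilon\le\sqrt{2T\log(2/\delta)}\epsilon_t+T\epsilon_t(e^{\epsilon_t}-1)$ with $\epsilon_t=\log(1+q(e^{\tilde\epsilon}-1))$), so your concern that RDP accounting is required for the stated constants is unnecessary.
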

\begin{proof}
\textbf{Privacy analysis:} Let $q=k/n$ denote the sampling ratio at each iteration. At each iteration, we apply our private mechanism $\mathcal{R}^{\ell_2}_{v,m,s}$ with parameter $v^2=\frac{k\tilde{\epsilon}^2}{s\log(kT/n\delta)}$. Hence, from Theorem~\ref{thm:l_2_vector_shuffle}, the output of the shuffling at the $t$-th iteration is $\left(\tilde{\epsilon},\frac{n\delta}{kT}\right)$-DP. since, we sample $k$ out of $n$ clients at the $t$-th round, then we get that the privacy budget of the $t$-th iteration is $\left(\epsilon_t,\frac{\delta}{T}\right)$-DP from privacy amplification by sub-sampling~\cite{Jonathan2017sampling}, where $\epsilon_t=\log\left(1+q\left(e^{\tilde{\epsilon}}-1\right)\right)$. Note that $\epsilon_t=\mathcal{O}\left(q\tilde{\epsilon}\right)$ when $\tilde{\epsilon}\leq 1$. Now by using the strong composition theorem, get that our mechanism is $\left(\epsilon,\delta\right)$-DP, where $\epsilon$ is bounded by:
\begin{equation}
\epsilon\leq\sqrt{2T\log\left(2/\delta\right)}\epsilon_t+T\epsilon_t\left(e^{\epsilon_t}-1\right).
\end{equation}
Observe that $\epsilon=\mathcal{O}\left(\sqrt{T\log\left(2/\delta\right)}\epsilon_t\right)$when $\epsilon_t\leq \frac{1}{\sqrt{T}}$. By choosing $\tilde{\epsilon}=\frac{n\epsilon}{k\sqrt{T\log\left(2/\delta\right)}}$, we guarantees. This completes the proof of the privacy analysis.

\textbf{Convergence analysis:} The convergence analysis follows directly from the MSE bound of our mechanism $\mathcal{R}^{\ell_2}_{v,m,s}$ in Theorem~\ref{thm:l_2_vector_shuffle} and the convergence of the standard SGD algorithm in Theorem~\ref{thm:sgd_converge}.  
\end{proof}
Note that in our DP-SGD algorithm, we assume that each client compute the full gradient $\nabla F_i\left(\theta^{t}\right)$ and then applies the private-compression mechanism $\mathcal{R}^{\ell_2}_{v,m,s}$. 

\end{document}